\documentclass{article}

\usepackage[accepted]{icml2019}
\usepackage{amsmath}
\usepackage{amssymb}
\usepackage{nccmath}
\usepackage{mathtools}
\usepackage{graphicx} 
\usepackage{hyperref}

\usepackage{amsthm}
\usepackage{enumitem}
\usepackage{mwe}
\usepackage{subfig}

\newtheorem{theorem}{Theorem}
\newcommand{\SLB}{\textsc{\small{SLB}}\xspace}
\newcommand{\alg}{\textsc{\small{PSLB}}\xspace}
\newcommand{\oful}{\textsc{\small{OFUL}}\xspace}
\newcommand{\OFUL}{\textsc{\small{OFUL}}\xspace}
\newcommand{\OFU}{\textsc{\small{OFU}}\xspace}
\usepackage{amsmath,amsfonts,bm}
\usepackage{xspace}
\newcommand{\wt}[1]{\widetilde{#1}}

\newcommand{\OO}{\mathcal{O}}

\newtheorem{lemma}[theorem]{Lemma}

\newtheorem{assumption}{Assumption}
\DeclareMathOperator{\tr}{tr}
\DeclareMathOperator{\spn}{span}









\def\eqref#1{equation~\ref{#1}}






\def\1{\bm{1}}










\DeclareMathAlphabet{\mathsfit}{\encodingdefault}{\sfdefault}{m}{sl}
\SetMathAlphabet{\mathsfit}{bold}{\encodingdefault}{\sfdefault}{bx}{n}













\DeclareMathOperator*{\argmax}{arg\,max}
\DeclareMathOperator*{\argmin}{arg\,min}

\begin{document}

\twocolumn[
\icmltitle{Stochastic Linear Bandits with Hidden Low Rank Structure}

\begin{icmlauthorlist}
\icmlauthor{Sahin Lale}{Caltech}
\icmlauthor{Kamyar Azizzadenesheli}{UCI}
\icmlauthor{Anima Anandkumar}{Caltech}
\icmlauthor{Babak Hassibi}{Caltech}
\end{icmlauthorlist}

\icmlaffiliation{Caltech}{California Institute of Technology, Pasadena, CA, USA}
\icmlaffiliation{UCI}{University of California, Irvine, CA,
USA}

\icmlcorrespondingauthor{Sahin Lale}{alale@caltech.edu}

\icmlkeywords{Linear Bandits, Dimensionality Reduction, Deep Learning}

\vskip 0.3in
]
\printAffiliationsAndNotice{}
\begin{abstract}
High-dimensional representations often have a lower dimensional underlying structure. This is particularly the case in many decision making settings. For example, when the representation of actions is generated from a deep neural network, it is reasonable to expect a low-rank structure whereas conventional structures like sparsity are not valid anymore. Subspace recovery methods, such as Principle Component Analysis (PCA) can find the underlying low-rank structures in the feature space and reduce the complexity of the learning tasks. In this work, we propose Projected Stochastic Linear Bandit (\alg), an algorithm for high dimensional stochastic linear bandits (\SLB) when the representation of actions has an underlying low-dimensional subspace structure. \alg deploys PCA based projection to iteratively find the low rank structure in \SLB{}s. We show that deploying projection methods assures dimensionality reduction and results in a tighter regret upper bound that is in terms of the dimensionality of the subspace and its properties, rather than the dimensionality of the ambient space. We modify the image classification task into the \SLB setting and empirically show that, when a pre-trained DNN provides the high dimensional feature representations, deploying \alg results in significant reduction of regret and faster convergence to an accurate model compared to state-of-art algorithm.


\end{abstract}

\section{INTRODUCTION}
Stochastic linear bandit (\SLB) is a class of sequential decision-making under uncertainty where an agent sequentially chooses actions from very large action sets. At each round, the agent applies its action, and as a response, the environment emits a stochastic reward whose expected value is an unknown linear function of the action. The agent's goal is to collect as much reward as possible over the course of $T$ interactions. 

In \SLB, the actions are represented as $d$-dimensional vectors, and the agent maintains limited information about the unknown linear function of reward. Through the course of interaction, the agent implicitly or explicitly constructs the model of the environment. It dedicates the decisions to not only maximize the current reward but also explore other actions to build a better estimation of the unknown linear function and guarantee higher future rewards. This is known as the \textit{exploration} vs. \textit{exploitation} trade-off.

The lack of oracle knowledge of the true environment model causes the agent to make mistakes by picking sub-optimal actions during the exploration. While the agent examines actions in the decision set, its committed mistakes accumulate. The aim of the agent is to design a strategy to minimize the cumulative cost of these mistakes, known as regret. One promising approach to minimize the regret is through utilizing the \textit{optimism in the face of uncertainty} (\OFU) principle first proposed by \citet{lai1985asymptotically}. \OFU based algorithms estimate the environment model up to its confidence interval and construct a plausible set of models within that interval. Among those models in the plausible set, they choose the most optimistic one and follow the optimal behavior suggested by the selected model for the next round of decision making. 

For general \SLB problems, \citet{abbasi2011improved} deploy the \OFU principle, propose \OFUL algorithm, and for $d$-dimensional \SLB, derive a regret upper bound  of $\wt\OO\left(d\sqrt{T}\right)$ which matches the lower bound up to a log factor. These regret bounds in high dimensional problems especially when $d$ and $T$ are about the same order are not practically tolerable. 
Fortunately, real-world problems usually are not arbitrary and may contain hidden low-dimensional structures. For example in classical recommendation systems, each item is represented by a large and highly detailed hand-engineered feature vector; therefore $d$ is intractably large. In these problems, not all the features are helpful for the recommendation task. For instance, the height of goods such as a pen is not a relevant feature for its recommendation while this feature is valuable for furnitures. Therefore the true underlying linear function in \SLB{}s is highly sparse. \citet{abbasi2012online} show how to exploit this additional structure and design a practical algorithm with regret of $\wt\OO\left(\sqrt{sdT}\right)$ where $s$ is the sparsity level of the true underlying linear function. Under slightly stronger assumptions, \citet{carpentier2012bandit} show the theory of compressed sensing can provide a tighter bound of $\wt\OO\left(s\sqrt{T}\right)$.

The contemporary success of Deep Neural Networks~(DNN) in representation learning enables classical machine learning methods to provide significant advancements in many machine learning problems, e.g., classification and  regression tasks~\citep{lecun1998gradient}. DNNs convolve the raw features of the input and construct new feature representations which replace the hand-engineered feature vectors in many real-world sequential decision making applications, e.g., recommendation systems. However, when a DNN provides the feature representations, one cannot see a sparse structure.

Dimension reduction and subspace recovery form the core of unsupervised learning methods and principal component analysis (PCA) is the main technique for linear  dimension reduction~\citep{pearson1901liii,eckart1936approximation}. At each round of \SLB, the agent chooses an action and receives the reward corresponding to that action. Therefore, the chosen action is assigned a supervised reward signal while other actions in the decision set remain unsupervised. Even though the primary motivation in the \SLB framework is decision-making within a large and stochastic decision set, the majority of prior works do not exploit possible hidden structures in these sets.
For example,~\citet{abbasi2011improved} only utilizes supervised actions, the actions selected by the algorithm, to construct the environment model. It ignores all other unsupervised actions in the decision set. On the contrary, large number of actions in the decision sets can be useful in reducing the dimension of the problem and simplifying the learning problem.

\noindent{\bf Contributions:~} In this paper, we deploy unsupervised subspace recovery using PCA to exploit the massive number of unsupervised actions which are observed in the decision sets of \SLB and reduce the dimensionality and the complexity of \SLB{}s. We propose \alg for \SLB{s} and show that if there exists an $m$-dimensional subspace structure such that the actions live in a perturbed region around this subspace, deploying \alg improves the regret upper bound to  $\textit{min}\Big\lbrace\wt\OO\left(\Upsilon\sqrt{T}\right),\wt\OO\left(d\sqrt{T}\right)\Big\rbrace$. Here $\Upsilon$ represents the difficulty of subspace recovery as a function of the structure of the problem. If learning the subspace is hard, \textit{e.g.}, the eigengap is small to analyze in a reasonable amount of samples, actions are widely distributed in the orthogonal dimensions of the subspace due to perturbation or $m\approx d$, then using projection approaches are not remedial. On the other hand, if underlying subspace is identifiable, \textit{i.e.}, large number of actions are available from the decision sets in each round, the eigengap is significant or $m \ll d$, then using subspace recovery provides faster learning of the underlying linear function; thus, smaller regret.  

We adapt the image classification tasks on MNIST~\citep{lecun1998gradient}, CIFAR-10~\citep{krizhevsky2009learning} and ImageNet~\citep{krizhevsky2012imagenet}  datasets to the \SLB framework and apply both \alg and \OFUL on these datasets. We observe that there exists a low dimensional subspace in the feature space when a pre-trained DNN produces the $d$-dimensional feature vectors. We empirically show that using subspace recovery \alg learns the underlying model significantly faster than \OFUL and provides orders of magnitude smaller regret in \SLB{}s obtained from MNIST, CIFAR-10, and ImageNet datasets. 


\section{Preliminaries}\label{Model}
For any positive integer $n$, $[n]$ denotes the set $\{1,\ldots,n\}$. The Euclidean norm of a vector $x$ is denoted by $\|x\|_2$. The spectral norm of matrix $A$ is denoted by $\| A \|_2$, \textit{ie.}, $\| A \|_2 \coloneqq \sup\{\| Ax \| : \| x \|_2 = 1\} $. $A^{\dagger}$ denotes the Moore-Penrose inverse of matrix $A$. For any symmetric and positive semi-definite matrix $M$, let $\| x \|_M$ denote the norm of a vector $x$ defined as $\| x \|_M \coloneqq \sqrt{x^TMx}$. The $j$-th eigenvalue of a symmetric matrix $A$ is denoted by $\lambda_j(A)$, where $\lambda_1(A) \geq \lambda_2(A) \geq \ldots$. The largest and smallest eigenvalue of $A$ are denoted as $\lambda_{max}(A)$ and $\lambda_{min}(A)$, respectively. $I_d$ denotes $d \times d$ identity matrix. If $Y_i$ is a column vector then $\mathbf{Y}_t$ is a matrix whose columns are $Y_1, \ldots, Y_t$ whereas if $y_i$ is a scalar then $\mathbf{y}_t$ is a column vector whose elements are $y_1, \ldots, y_t$. $\uplus_{i=1}^t D_i$ defines the multiset summation operation over the sets $D_1, \ldots, D_t$.

\paragraph{Model:~}Let $T$ be the total number of rounds. At each round $t \in [T]$, the agent is given a decision set $D_t$ with $K$ actions, $\hat{x}_{t,1}, \ldots, \hat{x}_{t,K} \in \mathbb{R}^d$. Let $V$ be an $d \times m$ orthonormal matrix with $m \leq d$, where $\spn(V)$ defines a $m$-dimensional subspace in $\mathbb{R}^d$. Consider a zero mean
true action vector, $x_{t,i} \in \mathbb{R}^d$, such that $x_{t,i} \in \spn(V)$ for all $i \in [K]$ and $t \in [T]$. Let $\psi_{t,i} \in \mathbb{R}^d$ be zero mean random vectors which are uncorrelated with true action vectors, \textit{i.e.},  $\mathbb{E}[x_{t,i}\psi_{t,i}^T]= 0$ for all $i \in [K]$ and $t \in [T]$. Each action $\hat{x}_{t,i}$ is generated as follows,
\begin{equation} \label{model}
\hat{x}_{t,i} = x_{t,i} + \psi_{t,i}.
\end{equation}
This model states that each $\hat{x}_{t,i}$ in $D_t$ is a perturbed version of the true underlying $x_{t,i}$. 
Denote the covariance matrix of $x_{t,i}$ by $\Sigma_x$. Notice that $\Sigma_x$ is rank-$m$. Perturbation vectors, $\psi_{t,i}$, are assumed to be isotropic, thus covariance matrix $\Sigma_{\psi} = \sigma^2 I_d$. Let $\lambda_{+} \coloneqq \lambda_1(\Sigma_x)$ and $\lambda_{-} \coloneqq \lambda_m(\Sigma_x)$. We will make a boundedness assumption on $x_{t,i}$ and $\psi_{t,i}$.
\begin{assumption} [Bounded Action and Perturbation Vectors] \label{bounded}
There exists finite constants, $d_x$ and $d_{\psi}$, such that for all $t \in [T]$ and $i \in [K]$ , 
\begin{equation*}
\| x_{t,i} \|_2^2 \leq d_x \lambda_{+}, \quad \|\psi_{t,i}\|_2^2 \leq d_{\psi} \sigma^2. \vspace{-.5\baselineskip}
\end{equation*}
\end{assumption}
Both $d_x$ and $d_{\psi}$ can be dependent on $m$ or $d$ and they can be interpreted as the effective dimensions of the corresponding vectors. 

At each round $t$, the agent chooses an action, $\hat{X}_t \in D_t$ and observes a reward $r_t$ such that 
\begin{equation}
r_t = (P\hat{X}_t)^T \theta_* + \eta_t \qquad \forall t \in [T]
\end{equation}
where $P = VV^T$ is the projection matrix for the $m$-dimensional subspace $\spn(V)$, $\theta_* \in \spn(V)$ is the unknown parameter vector and $\eta_t$ is the random noise at round $t$. Notice that since $\theta_* \in \spn(V)$, $(P \hat{X}_t)^T \theta_* = \hat{X}_t^T P \theta_* = \hat{X}_t^T \theta_* $ therefore, $r_t = \hat{X}_t^T \theta_* + \eta_t$.\footnote{The reward generative model of $r_t = \hat{X}_t^T \theta_* + \eta_t$ is equivalent to ${r}_t = X_t^T\theta_* + \tilde{\eta}_t$ where $\tilde{\eta}_t$ contains the randomness in $\eta_t$ as well as the perturbations due to $\psi_{t,i}$.} Consider $\{ F_t\}^{\infty}_{t=0}$ as any filtration of $\sigma$-algebras such that for any $t \geq 1$, $\hat{X}_t$ is $F_{t-1}$ measurable and $\eta_t$ is $F_t$ measurable. 
\begin{assumption}[Subgaussian Noise] \label{As_Noise}
For all $t \in [T]$, $\eta_t$ is conditionally R-sub-Gaussian where $R \geq 0$ is a fixed constant, \textit{ie.} $\forall \lambda \in \mathbb{R}, \enskip \mathbb{E}[e^{\lambda \eta_t} | F_{t-1}] \leq e^{\frac{\lambda^2 R^2}{2}}$.
\end{assumption}
This implies that $\mathbb{E}\big[\hat{X}_t^T \theta_* + \eta_t | \mathbf{\hat{X}}_t , \mathbf{\eta}_{t-1} \big] = \hat{X}_t^T \theta_*$ or equivalently $\mathbb{E}[\eta_t | F_{t-1} ] = 0$. The goal of the agent is to maximize the total expected reward accumulated in $T$ rounds, $\sum_{t=1}^T \hat{X}_t^T\theta_*$. The oracle's strategy with the knowledge of $\theta_*$ at each round $t$ is $\hat{X}^*_t = \argmax_{x\in D_t} x^T \theta_*$. We evaluate the agent's performance against the oracle performance. Define \textit{regret} as the difference between expected reward of the oracle and the agent, 
\begin{equation}
R_T \coloneqq  \sum_{t=1}^T \hat{X}_t^{*T}\theta_* - \sum_{t=1}^T \hat{X}_t^T\theta_* = \sum_{t=1}^T (X^*_t - \hat{X}_t)^T \theta_* .  
\end{equation}
The agent aims to minimize this quantity over time. In the setting described above, the agent is assumed to know that there exists a $m$-dimensional subspace of $\mathbb{R}^d$ in which true action vectors and the unknown parameter vector lie. Finally, we define some quantities about the structure of the problem for all $\delta \in (0,1)$:
\begin{align}
&g_x = \frac{\lambda_+}{\lambda_{-}}, \quad g_{\psi} = \frac{\sigma^2}{\lambda_{-}}, \quad \Gamma = 2g_{\psi} + 4\sqrt{g_x g_{\psi}} \label{quantities} \\
&\alpha \medop{=} \max(d_x, d_{\psi}), \enskip n_{\delta} \medop{=} 4\alpha\bigg(\Gamma \sqrt{\log \frac{2d}{\delta}} \medop{+} \sqrt{2g_x \log\frac{m}{\delta}}\bigg)^2 \nonumber
\end{align}

\section{Overview of \alg} \label{PSLB_Alg}
We propose \alg, a \SLB algorithm which employs subspace recovery to extract information from the unsupervised data accumulated in the \SLB. During the course of interaction, the agent constructs the confidence set of the underlying model with and without subspace recovery
, then takes the intersection of these two sets. Among the plausible models in this set, the agent deploys \OFU principle and follows the optimal action of the most optimistic model. The pseudocode of \alg is given in Algorithm ~\ref{alg:pslb}. \alg consists of 4 key elements: warm-up, subspace estimation, creating confidence sets and acting optimistically. In the following, we will discuss each of them briefly.
\begin{algorithm}[tb]
\caption{\alg}
\begin{algorithmic}[1]
   \STATE {\bfseries Input:} m, $\lambda_+$, $\lambda_-$, $\sigma^2$, $\alpha$, $\delta$
    
 \FOR{t = 1 to $T$}
 	\STATE Compute PCA over $\uplus_{i=1}^t D_i$ 
    \STATE Create $\hat{P}_t$ with first m eigenvectors
    \STATE Construct $\mathcal{C}_{p,t}$, high probability confidence set on $\hat{P}_t$
   	\STATE Construct $\mathcal{C}_{m,t}$, high probability confidence set for $\theta_*$ using  subspace recovery
    \STATE Construct $\mathcal{C}_{d,t}$, high probability confidence set for $\theta_*$ without using subspace recovery
    \STATE Construct $\mathcal{C}_{t} = \mathcal{C}_{m,t} \cap \mathcal{C}_{d,t}$ 
    \STATE $ (\tilde{P}_t, \hat{X}_t,  \tilde{\theta}_t) = \argmax_{(P^\prime, x,\theta)\in \mathcal{C}_{p,t} \times D_t \times \mathcal{C}_{t}} (P^\prime x)^T \theta $
    \STATE Play $\hat{X}_t$ and observe $r_t$
\ENDFOR
\end{algorithmic}
\label{alg:pslb}
\end{algorithm}

\subsection{Warm-Up} \label{WarmUp}

The decision set at each round $i$, $D_i$, has a finite number of actions. The algorithm needs to acquire enough samples of action vectors to reliably estimate the hidden subspace. The process of acquiring sufficient samples is considered as the warm-up period. The duration of the warm-up period, $t_{w,\delta}$, can be chosen in many ways. We set $t_{w,\delta} = \frac{n_{\delta}}{K}$ based on the theoretical analysis outlined in Section \ref{ProjErrAnaly}. The crux of this choice is to provide a theoretical guarantee of convergence to the underlying subspace. In other words, \alg collects samples until it has some confidence on the recovered subspace. This idea is considered in more detail in Section \ref{SubsEst}. 
Note that warm-up periods are implicitly assumed in most \SLB algorithms since the given bounds are not meaningful for short periods of time.

\subsection{Subspace Estimation} \label{SubsEst}
At each round, the algorithm predicts the $m$-dimensional subspace that the true action vectors belong, using the action vectors collected up to that round. In particular, at round $t$, the algorithm uses PCA over $tK$ action vectors observed so far, $\uplus_{i=1}^t D_i$. It calculates $\hat{V}_t$ which is the matrix of top $m$ eigenvectors of $\frac{1}{tK} \sum_{\hat{x} \in \uplus_{i=1}^t D_i} \hat{x}\hat{x}^T$, thus $\spn(\hat{V}_t)$ is the predicted $m$-dimensional subspace. Then, $\hat{V}_t$ is used to create the estimated projection matrix associated to this subspace, $\hat{P}_t \coloneqq \hat{V}_t \hat{V}_t^T$. 

As the agent observes more action vectors, the estimated projection matrix becomes more accurate. The accuracy of $\hat{P}_t$ is measured by the projection error $\|\hat{P}_t - P \|_2$. As more action vectors are collected, $\|\hat{P}_t - P \|_2$ shrinks. Since $P$ is not known, $\|\hat{P}_t - P \|_2$ cannot be calculated directly. Thus, \alg calculates a high-probability upper bound on the projection error. Using the derived bound, \alg deploys confidence in the subspace estimation and construct the set of plausible projection matrices $\mathcal{C}_{p,t}$ where $\hat{P}_t$ and $P$ both lie in with high probability. The construction of $\mathcal{C}_{p,t}$ is reliant on the structural properties of the problem and the number of samples $tK$. We analyze these properties in Section \ref{ProjErrAnaly}.

\subsection{Confidence Set Construction} \label{ConfSetCons}
At each round, \alg creates two confidence sets for the model parameter $\theta_*$. First, it tries to exploit a possible $m$-dimensional hidden subspace structure. Thus, it searches for a high probability confidence set, $\mathcal{C}_{m,t}$, that lies around the estimated subspace at round $t$. Using the history of action-reward pairs, the algorithm solves a regularized least squares problem in the estimated subspace and obtains $\theta_t$, the estimated parameter vector in $\spn(\hat{V}_t)$. Then it creates the confidence set $\mathcal{C}_{m,t}$ around $\theta_t$, such that $\theta_* \in \mathcal{C}_{m,t}$ with high probability. 

Second, \alg searches for a high probability confidence set in the ambient space without having subspace recovery. It deploys the confidence set generation subroutine of \oful by \citet{abbasi2011improved}. Using the history of action-reward pairs, the algorithm solves another regularized least squares problem but this time in the ambient space and obtains $\hat{\theta}_t$. \alg then creates the confidence set $\mathcal{C}_{d,t}$ centered around $\hat{\theta}_t$ such that $\theta_* \in \mathcal{C}_{d,t}$ with high probability. Finally, \alg takes the intersection of constructed confidence sets to create the main confidence set, $\mathcal{C}_t = \mathcal{C}_{m,t} \cap \mathcal{C}_{d,t}$. $\mathcal{C}_t$ still contains $\theta_*$ with high probability. With this operation, \alg provides a new perspective that if there exists an easily recoverable $m$-dimensional subspace, it exploits that structure to get lower regret than \oful can solely achieve. If it fails to detect such structure or the confidence set is looser than what \oful provides, then it still provides the same regret as \oful.

\subsection{Optimistic Action} \label{OptAct}
For the final step in round $t$, the algorithm chooses an optimistic triplet $(\tilde{P}_t, \hat{X}_t,  \tilde{\theta}_t)$ from the confidence sets created and the current decision set which jointly maximizes the reward: 
\begin{equation} \label{optimistic}
(\tilde{P}_t, \hat{X}_t,  \tilde{\theta}_t) = \argmax_{(P^\prime, x,\theta)\in \mathcal{C}_{p,t} \times D_t \times \mathcal{C}_{t}} (P^\prime x)^T \theta 
\end{equation}
\vspace{-.5\baselineskip}

\section{Theoretical Analysis of \alg} \label{Analysis}

In this section we first state the upper regret bound of \alg which is the main result of the paper. Then we analyze the components that build up to the result. In order to get a meaningful bound, we assume that the expected rewards are bounded. Recalling the quantities defined in (\ref{quantities}), define $\Upsilon$ such that 
\begin{equation} \label{upsilon_def}
\Upsilon = \OO\left( \frac{\alpha \Gamma^2 \sqrt{m} }{K(\lambda_- + \sigma^2)} \right).
\end{equation}
It represents the difficulty of subspace recovery in terms structural properties of \SLB setting, and it is analyzed in Section \ref{RegAnaly}. Using $\Upsilon$, the theorem below states the regret upper bound of \alg. 

\begin{theorem}[Regret Upper Bound of \alg] \label{RegretAnalysis}
Fix any $\delta \in (0,1/6)$. Assume that Assumptions 1 and 2 hold. Also assume that for all $\hat{x}_{t,i} \in D_t$, $\hat{x}_{t,i}^T \theta_* \in [-1,1]$. Then, $\forall t \geq 1$ with probability at least $1-6\delta$, the regret of \alg satisfies 
\begin{equation}
R_t \leq \min \Big \lbrace \wt\OO\left(\Upsilon \sqrt{t}\right) , \wt\OO\left(d \sqrt{t}\right)\Big\rbrace.
\end{equation}
\end{theorem}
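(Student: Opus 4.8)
The plan is to prove the two arguments of the minimum separately and then exploit the fact that \alg acts optimistically over the \emph{intersection} $\mathcal{C}_t = \mathcal{C}_{m,t} \cap \mathcal{C}_{d,t}$, so both bounds hold simultaneously. The $\wt\OO(d\sqrt t)$ branch is inherited essentially for free: since $\mathcal{C}_t \subseteq \mathcal{C}_{d,t}$ and $\mathcal{C}_{d,t}$ is exactly the ambient-space \oful confidence ellipsoid, the triplet played by \alg is in particular optimistic with respect to $\mathcal{C}_{d,t}$. Hence the per-round regret is bounded by the \oful confidence width $2\beta_{d,t}\|\hat{X}_t\|_{V_{d,t}^{-1}}$, and summing via the self-normalized martingale bound together with the elliptical-potential lemma of \citet{abbasi2011improved} reproduces their $\wt\OO(d\sqrt t)$ guarantee. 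Intersecting can only shrink the set, never widen the width, so nothing is lost on this branch.

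The substantive work is the $\wt\OO(\Upsilon\sqrt t)$ branch. First I would establish \emph{validity}: with high probability $P$ and $\hat{P}_t$ both lie in $\mathcal{C}_{p,t}$, and $\theta_* \in \mathcal{C}_{m,t}$. The first fact rests on the projection-error bound $\|\hat{P}_t - P\|_2$ decaying in $tK$ (Section \ref{ProjErrAnaly}), obtained by a Davis--Kahan / matrix-concentration argument that plays the eigengap $\lambda_-$ against the isotropic perturbation level $\sigma^2$; this is precisely where the structural constants $g_x, g_\psi, \Gamma$ and the warm-up count $n_\delta$ enter. The second uses a self-normalized tail bound for the regularized least squares solved inside $\spn(\hat{V}_t)$, where the effective noise absorbs both $\eta_t$ and the perturbation $\psi_{t,i}$ via the reformulation $r_t = X_t^T\theta_* + \tilde\eta_t$ noted after the model definition, so the $R$-sub-Gaussian parameter is inflated by a bounded perturbation term.

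Next I would run the optimism decomposition. Because $(P, X^*_t, \theta_*)$ is feasible in $\mathcal{C}_{p,t}\times D_t\times \mathcal{C}_t$ on the good event, the optimistic maximizer overestimates the optimal reward, giving instantaneous regret at most $(\tilde{P}_t\hat{X}_t)^T\tilde\theta_t - (P\hat{X}_t)^T\theta_*$. I would split this into a confidence-width term measured in the \emph{estimated} subspace plus a cross term proportional to $\|\tilde{P}_t - P\|_2$, hence to the projection error. Summing the width terms by the elliptical-potential lemma applied to the $m$-dimensional Gram matrix produces the $\sqrt m$ factor in place of $\sqrt d$, while the confidence radius $\beta_{m,t}$ and the projection bound supply the remaining structural constants; together they assemble into $\Upsilon = \OO(\alpha\Gamma^2\sqrt m/(K(\lambda_-+\sigma^2)))$. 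The projection-error cross terms are summed using the decay of $\|\hat{P}_t - P\|_2$, so that after the warm-up of $t_{w,\delta}=n_\delta/K$ rounds their cumulative contribution is lower order; the warm-up itself costs only $\OO(n_\delta/K)$ additive regret, absorbed into $\wt\OO$. A union bound over the high-probability events underlying these steps (validity of the two confidence sets, the projection-error bound, and the top- and bottom-eigenvalue concentrations it relies on) accounts for the $1-6\delta$ confidence.

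The hard part is that, unlike in \oful, the confidence set $\mathcal{C}_{m,t}$ is built around a \emph{moving}, data-dependent subspace $\spn(\hat{V}_t)$, so both its validity and the optimism split carry terms in $\|\hat{P}_t - P\|_2$. Making these terms not merely individually small but \emph{summable} over $t$, and showing that the PCA perturbation constants (eigengap $\lambda_-$ against the perturbation covariance $\sigma^2 I_d$) combine with the subspace confidence radius to give exactly $\Upsilon$, is the technical crux. A secondary subtlety is calibrating $t_{w,\delta}$: the warm-up must be long enough that the recovered subspace is accurate enough for the $\Upsilon\sqrt t$ branch to be the active one, yet short enough that its additive cost remains negligible.
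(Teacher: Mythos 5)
Your overall architecture mirrors the paper's: optimism over the triplet using feasibility of $(P, \hat{X}^*_t, \theta_*)$, a per-round split into a subspace confidence-width term plus projection-error cross terms bounded via $\|\hat{P}_i - P\|_2$, a union bound over the five or six underlying events, and the $\wt\OO(d\sqrt{t})$ branch inherited from \oful through the intersection $\mathcal{C}_{m,t} \cap \mathcal{C}_{d,t}$ (the paper is equally brief on that branch). The genuine gap is in the one step you treat as routine: summing the width terms ``by the elliptical-potential lemma applied to the $m$-dimensional Gram matrix.'' That lemma does not apply here. The determinant-telescoping argument behind it requires a nested sequence of Gram matrices, $V_i = V_{i-1} + x_i x_i^T$, whereas the matrix weighting the width at round $i$ is $A_i = \hat{P}_i(\hat{\Sigma}_{i-1} + \lambda I_d)\hat{P}_i$: the \emph{entire} history is re-projected through a projection $\hat{P}_i$ that changes every round, so $A_i$ is not an incremental update of $A_{i-1}$, $\det A_i$ does not telescope, and there is no single $m$-dimensional coordinate frame in which to run the potential argument. (The paper does use a determinant telescoping, but only \emph{within} a fixed round $t$ in Lemma \ref{SelfNormProj}, where $\hat{V}_t$ is held fixed; that is a different object from the across-rounds sum $\sum_i \min(\|\hat{X}_i\|^2_{A_i^\dagger},1)$ needed for the regret.) The paper flags exactly this obstruction and replaces the potential lemma by a different strategy.

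Concretely, the paper bounds $\|\hat{X}_i\|^2_{A_i^\dagger} \leq L^2 \lambda_{\max}(A_i^\dagger) = L^2/\big(\lambda + \lambda_m(\hat{P}_i \hat{\Sigma}_{i-1}\hat{P}_i)\big)$ and then uses the Matrix Chernoff inequality (Lemma \ref{proj_mth}) to show that, with high probability, $\lambda_m(\hat{P}_i \hat{\Sigma}_{i-1}\hat{P}_i) \geq \tfrac{1}{2}(\lambda_- + \sigma^2)(i-1)$ for all $i \geq t_{r,\delta}$; that is, the stochastic action model itself forces the played actions to be exploratory inside the subspace, so the projected Gram matrix grows linearly without any potential argument. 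This yields $\sum_i \min(\|\hat{X}_i\|^2_{A_i^\dagger},1) = \OO\big(t_{r,\delta}/\lambda + \log t/(\lambda_- + \sigma^2)\big)$, and it is precisely this step that injects the $1/(\lambda_- + \sigma^2)$ factor and (through $t_{r,\delta}$) the second factor of $\Gamma\sqrt{\alpha/K}$ into $\Upsilon = \OO\big(\alpha\Gamma^2\sqrt{m}/(K(\lambda_- + \sigma^2))\big)$. Your route, even granting the potential lemma, would produce a bound of the form $\beta_{t,\delta}\sqrt{m t \log t} = \wt\OO\big(\Gamma\sqrt{\alpha/K}\, m \sqrt{t}\big)$, which has the wrong structural constants and cannot assemble into the claimed $\Upsilon$; so the missing ingredient is not cosmetic but is what makes the theorem's constant correct. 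Two smaller inaccuracies: the cumulative projection-error term is $\Theta(\phi_\delta \sqrt{t})$, i.e.\ the same $\sqrt{t}$ order as the main term (it is dominated only through its constants, not in $t$); and the validity of $\mathcal{C}_{m,t}$ in the paper is obtained by carrying the projection error as an explicit bias term $S\|(A_t^\dagger)^{1/2}\hat{P}_t\hat{\Sigma}_{t-1}\|_2\,\|P - \hat{P}_t\|_2$ in $\beta_{t,\delta}$, not by inflating the sub-Gaussian constant of the noise.
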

The proof of the theorem involves two main pieces: the projection error analysis and the construction of projected confidence sets. They are analyzed in Sections \ref{ProjErrAnaly} and \ref{ConfSetCre} respectively. Finally, in Section \ref{RegAnaly} their role in the proof of Theorem \ref{RegretAnalysis} is explained and the meaning of the result is discussed.

\subsection{Projection Error Analysis} \label{ProjErrAnaly}

Consider the matrix $\hat{V}_t^T V$ where $i$th singular value is denoted by $\sigma_{i}(\hat{V}_t^T V)$, such that $\sigma_1(\hat{V}_t^T V) \geq \ldots \geq \sigma_m(\hat{V}_t^T V)$. Extending the definition of inner products of two vectors to subspaces and using Courant-Fischer-Weyl minimax principle, one can define $i$\textit{th principal angle} $\Theta_i$ between $\spn(V)$ and $\spn(\hat{V}_t)$ via 
\begin{equation*}
    \cos\Theta_i(\spn(V), \spn(\hat{V}_t)) = \sigma_i(\hat{V}_t^TV).
\end{equation*}
Using the analysis in \citet{akhiezer2013theory} it can be seen that:
\begin{align}
\|\hat{P}_t - P \|_2 &= \sqrt{\lambda_{max}\bigg(I_m - (\hat{V}_t^TV)^T(\hat{V}_t^TV) \bigg)} \nonumber \\
&= \sqrt{1-\sigma^2_m(\hat{V}_t^TV)} = \sin\Theta_m \label{error=sine}
\end{align}
where $\Theta_m$ is the largest principal angle between the column spans of $V$ and $\hat{V}_t$. Thus, bounding the projection error between two projection matrices is equivalent to bounding the sine of the largest principal angle between the subspaces that they project. In light of this relation, one can use the Davis-Kahan $\sin \Theta$ theorem \citep{davis1970rotation} to bound the projection error. The exact theorem statement can be found in Section \ref{ProjSupplement} in the  Supplementary Material. Informally, the theorem considers a symmetric matrix and its' perturbed version and bounds the sine of the largest principal angle caused by this perturbation. Using Davis-Kahan $\sin \Theta$ theorem, following lemma bounds the finite sample projection error. 
\begin{lemma}[Finite Sample Projection Error] \label{errornorm}
Fix any $\delta \in (0,1/3)$. Let $t_{w,\delta} = \frac{n_{\delta}}{K}$. Suppose Assumption 1 holds. Then with probability at least $1-3\delta$, $\forall t\geq t_{w,\delta}$,
\begin{equation} \label{projmain}
\|\hat{P}_t - P \|_2 \leq \frac{\phi_{\delta}}{\sqrt{t}} \quad , \text{ where } \phi_{\delta} = 2\Gamma \sqrt{\frac{\alpha}{K} \log \frac{2d}{\delta} }. 
\end{equation} 
\end{lemma}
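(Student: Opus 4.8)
The plan is to realize the empirical covariance as a perturbation of a population covariance whose top-$m$ eigenspace is exactly $\spn(V)$, and then feed that perturbation into the Davis--Kahan $\sin\Theta$ theorem through the identity (\ref{error=sine}). First I would use that the true action vectors and the perturbations are zero-mean and uncorrelated, so the population second-moment matrix is
\begin{equation*}
\Sigma = \E[\hat{x}\hat{x}^T] = \Sigma_x + \sigma^2 I_d .
\end{equation*}
Since $\Sigma_x$ is rank-$m$ with range $\spn(V)$ and eigenvalues in $[\lambda_-,\lambda_+]$, the matrix $\Sigma$ has its top $m$ eigenvalues in $[\lambda_-+\sigma^2,\lambda_++\sigma^2]$ with eigenvectors spanning $\spn(V)$, while its remaining $d-m$ eigenvalues all equal $\sigma^2$. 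Hence $P=VV^T$ is exactly the projector onto the top-$m$ eigenspace of $\Sigma$, and the gap separating it from the rest is precisely $\lambda_-$. Because $\hat{P}_t$ projects onto the top-$m$ eigenspace of $\hat{\Sigma}_t = \frac{1}{tK}\sum_{\hat{x}\in\uplus_{i=1}^t D_i}\hat{x}\hat{x}^T$, equation (\ref{error=sine}) together with Davis--Kahan reduces the task to bounding the coupling block $\|(I_d-P)\,\hat{\Sigma}_t\,P\|_2$ divided by the surviving gap $\approx\lambda_-$.

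The key structural step is to expand $\hat{\Sigma}_t - \Sigma = A + B + C$, where
\begin{equation*}
A = \tfrac{1}{tK}\textstyle\sum x_k x_k^T - \Sigma_x,\quad
B = \tfrac{1}{tK}\textstyle\sum (x_k\psi_k^T + \psi_k x_k^T),\quad
C = \tfrac{1}{tK}\textstyle\sum \psi_k\psi_k^T - \sigma^2 I_d,
\end{equation*}
and to observe that the signal fluctuation $A$ drops out of the numerator: since every $x_k\in\spn(V)$ we have $A=PAP$, so $(I_d-P)\,A\,P=0$, while $(I_d-P)\,\Sigma\,P=0$ as well. Thus the coupling block equals $(I_d-P)(B+C)P$, collecting only the signal--noise cross term and the centered-noise term. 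This is exactly why the final bound depends on $\Gamma=2g_\psi+4\sqrt{g_xg_\psi}$ — the scales $\sigma^2$ and $\sigma\sqrt{\lambda_+}$ of the two surviving pieces, normalized by $\lambda_-$ — rather than on the far larger within-subspace scale $\lambda_+$.

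It remains to control the pieces with high probability. Using the boundedness of Assumption~\ref{bounded} (so $\|x_k\|_2^2\le d_x\lambda_+$, $\|\psi_k\|_2^2\le d_\psi\sigma^2$, with $\alpha=\max(d_x,d_\psi)$) and $tK$ summands, I would apply a matrix concentration inequality (matrix Bernstein/Hoeffding) separately to $(I_d-P)BP$ and $(I_d-P)CP$. Each lives in the ambient space, so each contributes a bound of order $\lambda_-\Gamma\sqrt{(\alpha/(tK))\log(2d/\delta)}$, producing the $\log\frac{2d}{\delta}$ factor and the $1/\sqrt{t}$ scaling once $n=tK$ is substituted. The third piece $A$ enters only to guarantee that the gap survives (a Weyl-type lower bound on $\lambda_m(\hat{\Sigma}_t)-\lambda_{m+1}(\hat{\Sigma}_t)$), so that Davis--Kahan is applicable; since $A$ acts inside the $m$-dimensional $\spn(V)$, its control needs only $\log\frac{m}{\delta}$ at scale $g_x=\lambda_+/\lambda_-$, which is exactly the extra $\sqrt{2g_x\log(m/\delta)}$ term inside $n_\delta$. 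Choosing $t_{w,\delta}=n_\delta/K$ forces the perturbation strictly below the gap, making the bound valid and nontrivial for all $t\ge t_{w,\delta}$, and a union bound over the three events yields probability $1-3\delta$.

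The main obstacle is twofold. First, matching constants: I must push the matrix-concentration constants through the two coupling terms so their sum reproduces precisely $2\Gamma$ (the coefficients $2$ and $4$ inside $\Gamma$ tracing back to the centered-noise term and the symmetrized cross term), and verify that the gap degradation from $A$ costs only the $\log(m/\delta)$, $g_x$ factor absorbed into the warm-up. Second, and more delicate, is justifying the concentration itself: the $tK$ action vectors are gathered across rounds and are not a priori i.i.d., so the summands should be treated via a matrix martingale (Freedman/Azuma-type) concentration adapted to the filtration $\{F_t\}$, or one must argue the decision-set vectors are drawn independently of the reward process. Handling this dependence, rather than the Davis--Kahan reduction or the algebra of the decomposition, is where the real work lies.
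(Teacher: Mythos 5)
You follow the same skeleton as the paper: reduce to Davis--Kahan via (\ref{error=sine}), split the perturbation of the sample covariance into a within-subspace signal fluctuation, a cross term, and a centered-noise term, control the signal piece with matrix Chernoff at scale $\log\frac{m}{\delta}$ and the other two with matrix Bernstein at scale $\log\frac{2d}{\delta}$, take a union bound, and choose $t_{w,\delta}$ so the bound is nontrivial. But the reduction you invoke is not a valid form of Davis--Kahan, and that is a genuine gap. You claim the theorem bounds $\sin\Theta_m$ by $\|(I_d-P)\hat{\Sigma}_t P\|_2$ divided by the gap, with applicability certified by a Weyl-type lower bound on the \emph{perturbed} eigengap $\lambda_m(\hat{\Sigma}_t)-\lambda_{m+1}(\hat{\Sigma}_t)$. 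The standard statements (Theorem~\ref{sinTheta}) take as numerator $\|(\hat{\Sigma}_t-\Sigma)V\|_2$, or in the sharpest form $\|\hat{U}_o^T(\hat{\Sigma}_t-\Sigma)V\|_2$ where $\hat{U}_o$ spans the \emph{perturbed} orthogonal complement; your cancellation $(I_d-P)AP=0$ only works if $\hat{U}_o$ is replaced by the unperturbed complement, and that replacement is false in general. Concretely: $d=2$, $m=1$, $\Sigma=\diag(2,1)$, perturbation $E=\diag(-2,0)$. The perturbed matrix $\diag(0,1)$ has eigengap $1$, so your condition holds, yet its top eigenvector is orthogonal to $V=e_1$: here $\sin\Theta_1=1$ while $\|(I_d-P)EP\|_2/\mathrm{gap}=0$. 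The failure is driven by a large within-subspace perturbation, which is exactly what your term $A$ is, so it cannot be discarded by fiat.

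The repair puts $A$ back where you tried to remove it. Writing $\hat{U}_o^TEV=\hat{U}_o^T(I_d-P)EV+\hat{U}_o^TPEV$ and using $\|\hat{U}_o^TV\|_2=\sin\Theta_m$ gives
\begin{equation*}
\sin\Theta_m \;\le\; \frac{\|(I_d-P)EV\|_2}{\bigl(\lambda_m(\Sigma)-\lambda_{m+1}(\hat{\Sigma}_t)\bigr)-\|V^TEV\|_2},
\end{equation*}
so the within-subspace block, whose norm includes $\|A\|_2$, must be subtracted in the denominator; controlling it is the true role of the $\sqrt{2g_x\log(m/\delta)}$ term in $n_{\delta}$, and ``the perturbed gap survives'' is the wrong certificate for this. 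The paper sidesteps the issue with a different reference matrix: it applies Theorem~\ref{sinTheta} to $S=\frac{1}{n}\sum_i x_ix_i^T+\sigma^2VV^T$ (\emph{empirical} signal covariance plus projected noise covariance), so the signal fluctuation cancels identically in $\hat{S}-S$, the numerator $\|(\hat{S}-S)V\|_2$ contains only the cross and noise terms, and matrix Chernoff lower-bounds $\lambda_m(S)$ in the denominator --- no off-diagonal-block refinement is ever needed. Two secondary points. First, Lemma~\ref{errornorm} asserts the bound uniformly over all $t\ge t_{w,\delta}$ on a single event of probability $1-3\delta$; the paper obtains this via a stopping-time construction, whereas your union bound over three events at a fixed $t$ does not deliver uniformity. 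Second, the dependence issue you identify as ``where the real work lies'' is not where the paper's work lies at all: the vectors in $\uplus_{i=1}^t D_i$ are generated by the environment according to (\ref{model}) regardless of the agent's choices, and the paper applies the i.i.d.\ matrix concentration inequalities to them directly.
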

The lemma and it's proof are along the same lines of Corollary 2.9 of \citet{vaswani2017finite}. However, we improve the bound on the projection error by using the Matrix Chernoff Inequality \citep{tropp2015introduction} and provide the precise problem dependent quantities in the bound which are required for defining the minimum number of samples for the warm-up period and the construction of confidence sets for $\theta_*$. Note that as discussed in Section \ref{SubsEst}, (\ref{projmain}) defines the confidence set $\mathcal{C}_{p,t}$ for all $t \geq t_{w,\delta}$. The general version of the lemma and the details of the proof are given in Section \ref{ProjSupplement} of the Supplementary Material, but here we provide a proof sketch. 

Up to round $t$, the agent observes $tK$ action vectors in total within the decision sets. Using PCA, \alg estimates an $m$-dimensional subspace spanned by top $m$ eigenvectors of the sample covariance matrix of $tK$ action vectors and obtain the projection matrix $\hat{P}_t$ for that subspace. In order to derive Lemma \ref{errornorm}, we first carefully pick two symmetric matrices such that the span of their first $m$ eigenvectors are equivalent to subspaces that $P$ and $\hat{P}_t$ project to. Using Davis-Kahan $\sin\Theta$ theorem with matrix concentration inequalities provided by \citet{tropp2015introduction}, we derive the finite sample projection error bound.

Lemma \ref{errornorm} is key to defining the warm-up period duration. Due to equivalence in (\ref{error=sine}), $\|\hat{P}_t - P \|_2 \leq 1$, $\forall t\geq 1$. Therefore, any projection error bound greater than 1 is vacuous. We pick $t_{w,\delta}$ such that with high probability, we obtain theoretically non-trivial bound on projection error. With the given choice of $t_{w,\delta}$, the bound on the projection error in (\ref{projmain}) becomes less than 1 when $t\geq t_{w,\delta}$. After $t_{w,\delta}$, \alg starts to produce non-trivial confidence sets $\mathcal{C}_{p,t}$ around $\hat{P}_t$. However, note that $t_{w,\delta}$ can be significantly big for problems that have structure that is hard to recover, e.g. having $\alpha$ linear in $d$.  

Lemma \ref{errornorm} also brings several important intuitions about the subspace estimation problem in terms of the problem structure. Recalling the definition of $\Gamma$ in (\ref{quantities}), as $g_{\psi}$ decreases, the projection error shrinks since the underlying subspace becomes more distinguishable. Conversely, as $g_x$ diverges from 1, it becomes harder to recover the underlying $m$-dimensional subspace. Additionally, since $\alpha$ is the maximum of the effective dimensions of the true action vector and the perturbation vector, having large $\alpha$ makes the subspace recovery harder and the projection error bound looser, whereas observing more action vectors, $K$ in each round produces tighter bound on $\|\hat{P}_t - P \|_2$. The effects of these structural properties on the subspace estimation translate to confidence set construction and ultimately to regret upper bound. 

\subsection{Projected Confidence Sets} \label{ConfSetCre}
In this section, we analyze the construction of $\mathcal{C}_{m,t}$ and $\mathcal{C}_{d,t}$. For any round $t\geq 1$, define $\hat{\Sigma}_t \coloneqq \sum_{i=1}^{t} \hat{X}_{i} \hat{X}_{i}^T = \mathbf{\hat{X}}_{t} \mathbf{\hat{X}}_{t}^T$. At round $t$, let $A_t \coloneqq \hat{P}_t (\hat{\Sigma}_{t-1} + \lambda I_d) \hat{P}_t $ for $\lambda > 0$. The rewards obtained up to round $t$ is denoted as $\mathbf{r}_{t-1}$. At round $t$, after estimating the projection matrix $\hat{P}_t$ associated with the underlying subspace, \alg tries to find $\theta_t$, an estimate of $\theta_*$, while believing that $\theta_*$ lives within the estimated subspace. Therefore, $\theta_t$ is the solution to the following Tikhonov-regularized least squares problem with regularization parameters $\lambda > 0$ and $\hat{P}_t$,
\begin{equation*}
\theta_t = \argmin_{\theta} \| (\hat{P}_t \mathbf{\hat{X}}_{t-1})^T \theta - \mathbf{r}_{t-1} \|_2^2 + \lambda \|\hat{P}_t \theta \|_2^2.
\end{equation*}
Notice that regularization is applied along the estimated subspace. Solving for $\theta$ gives $\theta_t = A_t^{\dagger} \big(\hat{P}_t \mathbf{\hat{X}}_{t-1} \mathbf{r}_{t-1} \big)$. Define $L$ such that for all $t\geq 1$ and $i \in [K]$, $\| \hat{x}_{t,i} \|_2 \leq L$ and let $\gamma = \frac{L^2}{\lambda \log \big(1+ \frac{L^2}{\lambda}\big)}$. The following theorem gives the construction of projected confidence set, $\mathcal{C}_{m,t}$, which is an ellipsoid centered around $\theta_t$ which contains $\theta_*$ with high probability.  
\begin{theorem}[Projected Confidence Set Construction] \label{main}
Fix any $\delta \in (0,1/4)$. Suppose Assumptions 1 \& 2 hold, and $\forall t\geq 1$ and $i \in [K]$, $\| \hat{x}_{t,i} \|_2 \leq L$. If $\| \theta_* \|_2 \leq S $ then, with probability at least $1-4\delta$, $\forall t \geq t_{w,\delta}$, $\theta_*$ lies in the set 
\begin{align}
\mathcal{C}_{m,t}& = \bigg \{ \theta \in \mathbb{R}^d : \| \theta_t - \theta \|_{A_t} \leq \beta_{t,\delta} \bigg \}, \text{ where} \nonumber \\
\beta_{t,\delta} &= R\sqrt{ 2 \log \bigg( \frac{1}{\delta} \bigg ) + m \log \bigg ( 1 + \frac{t L^2}{m \lambda} \bigg )  }  \nonumber  \\
&+  L S \phi_{\delta} \sqrt{ \gamma m \log \bigg ( 1 + \frac{t L^2}{m \lambda} \bigg )} + S \sqrt{\lambda}.
\label{ellips2} 
\end{align}
\end{theorem}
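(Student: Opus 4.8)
## Proof Proposal for Theorem~\ref{main}

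The plan is to follow the standard self-normalized martingale approach of \citet{abbasi2011improved}, but to carefully track how the \emph{estimated} projection $\hat{P}_t$ differs from the true projection $P$, since the regularized least squares is solved in $\spn(\hat{V}_t)$ rather than in the true subspace. First I would write down the closed-form solution $\theta_t = A_t^{\dagger}(\hat{P}_t \mathbf{\hat{X}}_{t-1}\mathbf{r}_{t-1})$ and substitute the reward model $r_i = \hat{X}_i^T\theta_* + \eta_i$. The key algebraic move is to decompose the resulting error $\theta_t - \theta_*$ (measured in the $A_t$-norm) into three contributions: a \emph{noise term} driven by $\sum_i \hat{P}_t\hat{X}_i\eta_i$, a \emph{regularization bias} of the form $\lambda \hat{P}_t\theta_*$, and crucially a \emph{projection mismatch term} that arises because $\hat{P}_t\theta_* \neq \theta_*$ in general. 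The mismatch term is what separates this result from the vanilla \oful bound and is responsible for the middle summand in $\beta_{t,\delta}$.

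For the noise term I would apply the self-normalized tail inequality of \citet{abbasi2011improved} to the vector-valued martingale $\sum_{i=1}^{t-1}\hat{P}_t\hat{X}_i\eta_i$, using Assumption~\ref{As_Noise}. Here one must be careful: $\hat{P}_t$ depends on all action vectors observed up to round $t$, so it is \emph{not} predictable with respect to the filtration $\{F_t\}$. I expect the paper handles this by conditioning on the high-probability event from Lemma~\ref{errornorm} (which holds simultaneously for all $t\geq t_{w,\delta}$ with probability $1-3\delta$) and by noting that the PCA subspace is a function of the unsupervised action vectors rather than the rewards, so the relevant martingale structure in the $\eta_i$ is preserved. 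The log-determinant bound on the self-normalized term should yield the $m\log(1+tL^2/(m\lambda))$ factor rather than a $d\log(\cdot)$ factor, because $A_t$ has rank at most $m$; this is precisely where the dimensionality reduction enters, and controlling the determinant of $A_t$ restricted to $\spn(\hat{V}_t)$ via $\|\hat{x}_{t,i}\|_2\leq L$ is the routine part.

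The main obstacle I anticipate is bounding the projection mismatch term. Since $\theta_*\in\spn(V)$ but the fit is in $\spn(\hat{V}_t)$, the quantity $\|(\hat{P}_t - P)\theta_*\|$ must be controlled, and this is exactly where Lemma~\ref{errornorm} feeds in: $\|\hat{P}_t - P\|_2 \leq \phi_{\delta}/\sqrt{t}$. I would bound the mismatch contribution to the $A_t$-norm error by $\|\theta_*\|_2\,\|\hat{P}_t - P\|_2$ times an appropriate operator-norm factor relating $A_t$ to the identity; the factor $\gamma = \frac{L^2}{\lambda\log(1+L^2/\lambda)}$ appears to arise from converting a bound stated in the natural least-squares geometry into the $A_t$-weighted norm, essentially controlling how much the regularized design matrix can amplify a perturbation. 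The product $LS\phi_{\delta}\sqrt{\gamma\, m\log(1+tL^2/(m\lambda))}$ then emerges by combining the projection error bound with the same log-determinant / rank-$m$ counting used for the noise term. Finally, the term $S\sqrt{\lambda}$ is the standard regularization bias from $\lambda\|\hat{P}_t\theta\|_2^2$, and a triangle inequality over the three pieces, together with a union bound allocating $\delta$ across the self-normalized inequality, the projection event from Lemma~\ref{errornorm}, and the boundedness assumptions, delivers the claimed $1-4\delta$ confidence and the stated form of $\beta_{t,\delta}$.
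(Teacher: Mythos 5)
Your proposal is correct and follows essentially the same route as the paper's proof: the same closed-form-plus-three-term decomposition of $\|\theta_t - \theta_*\|_{A_t}$ into a self-normalized noise term, a projection-mismatch term, and the $S\sqrt{\lambda}$ regularization bias; the same rank-$m$ log-determinant treatment giving the $m\log(1+tL^2/(m\lambda))$ factor; the same use of Lemma~\ref{errornorm} multiplied against the operator norm $\|(A_t^{\dagger})^{1/2}\hat{P}_t\hat{\Sigma}_{t-1}\|_2$ (whose $O(\sqrt{t})$ growth, via the elliptical-potential argument that produces $\gamma$, cancels the $1/\sqrt{t}$ decay of the projection error) to yield the middle term; and the identical $\delta + 3\delta$ union-bound accounting. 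The one point of emphasis where the paper differs from your sketch is the noise term: rather than conditioning on the event of Lemma~\ref{errornorm}, the paper re-derives the self-normalized inequality from scratch with the projection baked in (its Theorem~\ref{martin}, via the method of mixtures over $\spn(\hat{V}_t)$ plus a stopping-time argument), relying exactly on the fact you identify—that $\hat{P}_t$ is a function of the unsupervised action vectors only and is therefore measurable with respect to the $\sigma$-algebra one conditions on when peeling off the sub-Gaussian noise.
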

The detailed proof and a general version of the theorem are given in Section \ref{ConfConSupp} of the Supplementary Material. We will highlight the key aspects in here. The overall proof follows a similar machinery used by \citet{abbasi2011improved}. Specifically, the first term of $\beta_{t,\delta}$ in (\ref{ellips2}) is derived similarly by using the self-normalized tail inequality. However, since at each round \alg projects the past actions to an estimated $m$-dimensional subspace to estimate $\theta_*$, $d$ is replaced by $m$ in the bound. While enjoying the benefit of projection, this construction of the confidence set suffers from the finite sample projection error, \textit{i.e.}, uncertainty in the subspace estimation. This effect is observed via second term in (\ref{ellips2}). The second term involves the confidence bound for the estimated projection matrix, $\phi_{\delta}$.  This is critical in determining the tightness of the confidence set on $\theta_*$. As discussed in Section \ref{ProjErrAnaly}, $\phi_{\delta}$ reflects the difficulty of subspace recovery of the given problem and it depends on the underlying structure of the problem and \SLB. This shows that as estimating the underlying subspace gets more difficult, having a projection based approach in the construction of the confidence sets on $\theta_*$ provides looser bounds. 

In order to tolerate the possible difficulty of subspace recovery, \alg also constructs $\mathcal{C}_{d,t}$, which is the confidence set for $\theta_*$ without having subspace recovery. The construction of $\mathcal{C}_{d,t}$ follows \oful by \citet{abbasi2011improved}. Let $Z_t = \hat{\Sigma}_{t-1} + \lambda I_d$. The algorithm tries to find $\hat{\theta}_t$ which is the $\ell^2$-regularized least squares estimate of $\theta_*$ in the ambient space. Thus, $\hat{\theta}_t = Z_t^{-1} \mathbf{\hat{X}}_{t-1} \mathbf{r}_{t-1}$. Construction of $\mathcal{C}_{d,t}$ is done under the same assumptions of Theorem \ref{main}, such that with probability at least $1-\delta$, $\theta_*$ lies in the set
\begin{align*}
\mathcal{C}_{d,t} &= \bigg \{ \theta \in \mathbb{R}^d : \| \hat{\theta}_t - \theta \|_{Z_t} \leq \Omega_{t,\delta} \bigg \} \quad \text{where} \\
\Omega_{t,\delta} &= R\sqrt{ 2 \log \bigg( \frac{1}{\delta} \bigg ) + d \log \bigg ( 1 + \frac{t L^2}{m \lambda} \bigg )  } + S \sqrt{\lambda}.
\end{align*}
The search for an optimistic parameter vector happens in the intersection of $\mathcal{C}_{m,t}$ and $\mathcal{C}_{d,t}$. Notice that $\theta_* \in \mathcal{C}_{m,t} \cap \mathcal{C}_{d,t}$ with probability at least $1-5\delta$. Optimistically choosing the triplet, $(\tilde{P}_t, \hat{X}_t,  \tilde{\theta}_t)$, within the described confidence sets gives \alg a way to tolerate the possibility of failure in recovering an underlying structure. If confidence set $\mathcal{C}_{m,t}$ is loose or \alg is not able to recover an underlying structure, then $\mathcal{C}_{d,t}$ provides the useful confidence set to obtain desirable learning behavior.

\subsection{Regret Analysis} \label{RegAnaly}
Now that the confidence set constructions and the decision making procedures of \alg are explained, it only remains to analyze the regret of \alg. Using the intersection of $\mathcal{C}_{m,t}$ and $\mathcal{C}_{d,t}$ as the confidence set at round $t$, gives \alg the ability to obtain the lowest possible instantaneous regret among both confidence sets. Therefore, the regret of \alg is upper bounded by the minimum of the regret upper bounds on the individual strategies. Using only $\mathcal{C}_{d,t}$ is equivalent to following \oful and the regret analysis can be found in \citet{abbasi2011improved}. The regret analysis of using only the projected confidence set $\mathcal{C}_{m,t}$ is the main contribution of this work. It follows the standard regret decomposition into instantaneous regret components. However, due to having different estimated projection matrices in each round, the derivation of the bound uses a different strategy involving the Matrix Chernoff Inequality~\citep{tropp2015introduction}. The detailed analysis of the regret upper bound and the proof can be found in Section \ref{RegretSupple} of the Supplementary Material. Here we elaborate more on the nature of the regret obtained by using projected confidence sets only, \textit{i.e.} first term in Theorem \ref{RegretAnalysis}, and discuss the effect of $\Upsilon$ in particular. 

$\Upsilon$ is the reflection of the finite sample projection error at the beginning of the algorithm. It captures the difficulty of subspace recovery based on the structural properties of the problem and determines the regret of deploying projection based methods in \SLB{}s. Recall that $\alpha$ is the maximum of the effective dimensions of the true action vector and the perturbation vector. Depending on the structure of the problem, $\alpha$ can be $\OO(d)$, e.g., the perturbation can be effective in many dimensions, which prevents the projection error from shrinking; thus, causes $\Upsilon = \OO(d\sqrt{m})$ resulting in $\wt\OO(d\sqrt{m t})$ regret. The eigengap within the true action vectors $g_x$ and the eigengap between the true action vectors and the perturbation vectors $g_{\psi}$ are critical factors that determine the identifiability of the hidden subspace. As $\sigma^2$ increases, the subspace recovery becomes harder since the effect of perturbation increases. Conversely, as $\lambda_-$ increases, the underlying subspace becomes easier to identify. These effects are significant on the regret of \alg and they are captured by $\Gamma^2$ in $\Upsilon$.
Moreover, having finite samples to estimate the subspace affects the regret bound through $\Upsilon$. Due to the nature of \SLB, this is unavoidable and it scales the final regret by $1/K$. Overall, with all these elements, $\Upsilon$ represents the hardness of using PCA based methods in dimensionality reduction in \SLB{}s. 

Theorem \ref{RegretAnalysis} states that if the underlying structure is easily recoverable, e.g. $\Upsilon = \OO(m)$, then using PCA based dimension reduction and construction of confidence sets provide substantially better regret upper bound for large $d$. If that is not the case, then due to the best of the both worlds approach provided by \alg, the agent obtains the best possible regret upper bound. Note that the bound for using only $\mathcal{C}_{m,t}$ is a worst case bound and as we present in Section \ref{Experiment}, in practice \alg can give significantly better results.

\section{Experiments} \label{Experiment}

\begin{figure*}[!htb]
\captionsetup{justification=centering }
\centering
\subfloat[][MNIST Regret  \\ Comparison for $d\medop{=}1000$]{\label{fig:regretmnist}\includegraphics[width=57.3mm]{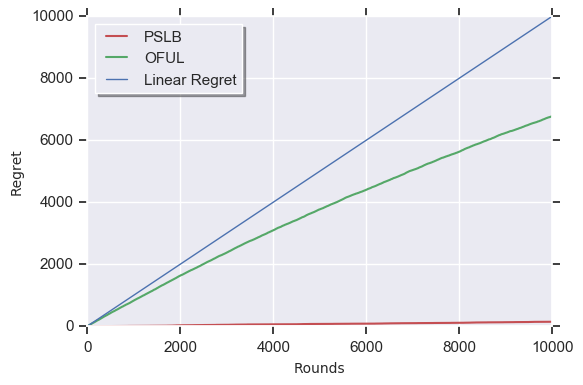}} 
\centering 
\subfloat[][CIFAR-10 Regret \\ Comparison for $d\medop{=}1000$]{\label{fig:regretcifar}\includegraphics[width=57.3mm]{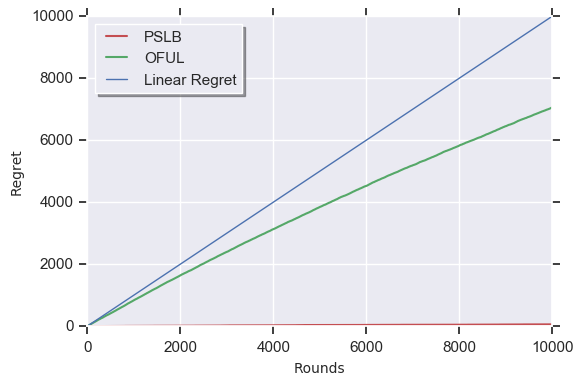}}
\centering
\subfloat[][ImageNet Regret \\ Comparison for $d\medop{=}100$]{\label{fig:regretimagenet}\includegraphics[width=57.2mm]{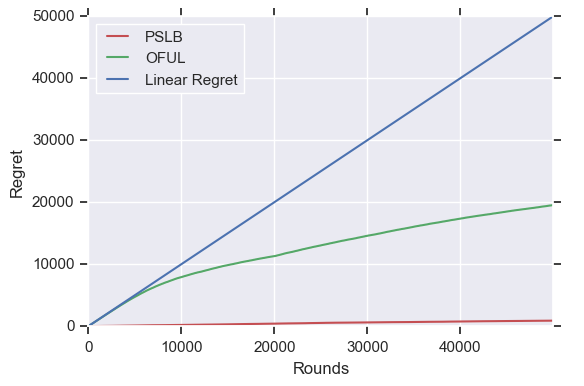}} \vspace{-.8\baselineskip}
\subfloat[][MNIST Model Accuracy \\ Comparison for $d\medop{=}1000$]{\label{fig:classmnist}\includegraphics[width=58mm]{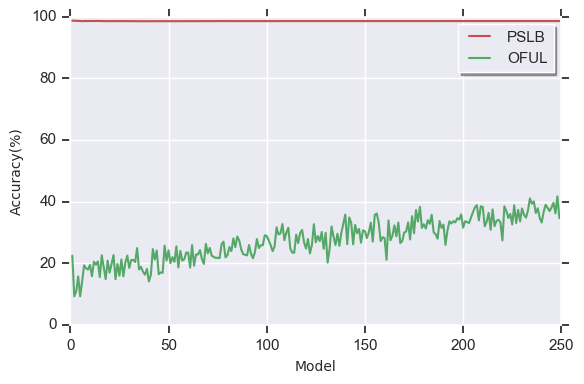}}
\centering
\subfloat[][CIFAR-10 Model Accuracy \\ Comparison for $d\medop{=}1000$]{\label{fig:classcifar}\includegraphics[width=55mm]{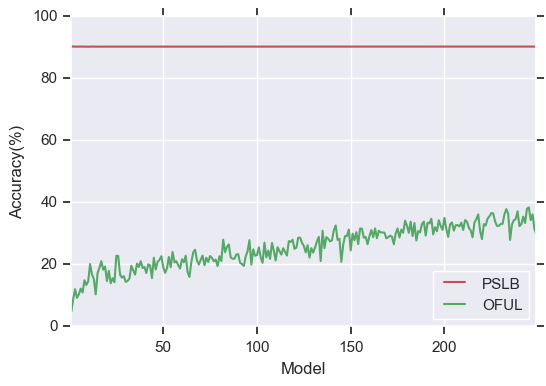}}
\subfloat[][ImageNet Model Accuracy \\ Comparison for $d\medop{=}100$]{\label{fig:classimagenet}\includegraphics[width=55mm]{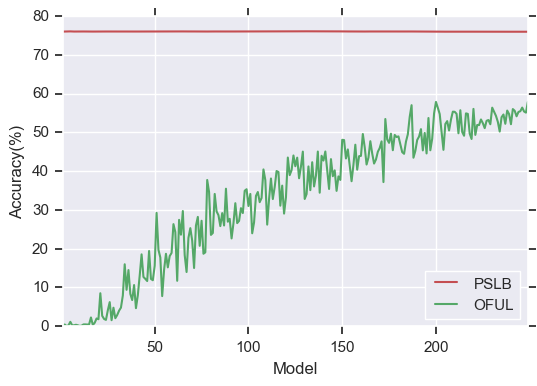}}\vspace{-0.5\baselineskip}
\captionsetup{justification=justified }
\caption{\hfill Regret and Optimistic Model Accuracy Comparisons of \alg and \OFUL on MNIST, CIFAR-10 and ImageNet}\vspace{-0.7\baselineskip}\caption*{\small Top row: Regret of \alg vs. Regret of \OFUL in \SLB setting constructed from image classification tasks. \alg tries to recover $m\medop{=}1$ dimensional subspace which reduces the complexity of \SLB and results in very few committed mistakes. Due to lack of additional knowledge besides rewards obtained from chosen actions, \OFUL starts with linear regret and commits significant amount of mistakes. \\ Bottom row: Image classification accuracy of periodically sampled optimistic models of \alg and \OFUL over all images in datasets. The ability to reduce the complexity of learning task helps \alg to learn the best possible underlying linear model just in few rounds whereas \OFUL requires more action-reward pairs to get an accurate estimate.}
\label{fig:main}
\vspace{-0.7\baselineskip}
\end{figure*} 

In the experiments, we study MNIST, CIFAR-10 and ImageNet datasets and use them to create the decision sets for the \SLB setting. A simple 5-layer CNN, a pre-trained ResNet-18 and a pre-trained ResNet-50 are deployed respectively for MNIST, CIFAR-10 and ImageNet. Before training, we modify the architecture of the representation layer (the layer before the final layer) to make it suitable for the \SLB study and obtain decision sets for each image.

Consider a standard network whose dimension of the representation layer is $d$. Therefore, the final layer for $K$ class classification is fully connected and it is a $d\medop{\times}K$ matrix that outputs $K$ numbers to be used for classification. In this study, instead of having a final layer of $d\medop{\times}K$ matrix, we construct the final layer as a $d$-dimensional vector and make the feature representation layer a $Kd$ dimensional vector. We treat this vector as the concatenation of $K$ $d$-dimensional contexts \textit{i.e.}, $[\hat{x}_1,\ldots,\hat{x}_K]$. The final $d$-dimensional layer is $\theta_*$ of the \SLB, where the logit for each class is computed as an inner product of the class context $\hat{x}_i$ and $\theta_*$. We train these architectures for different $d$ values using cross entropy loss. Here we provide results for MNIST and CIFAR-10 with $d\medop{=}1000$ and ImageNet with $d\medop{=}100$.

Removing the final layer, the resulting trained networks are used to generate the feature representations of each image for each class which produces the decision sets at each time step of \SLB. Since MNIST and CIFAR-10 have 10 classes, in each decision set we obtain 10 action vectors where each of them are segments in the representation layer. On the other hand, from the ImageNet dataset we get 1000 actions per decision set due to 1000 classes in the datasets. In the \SLB setting, the agent receives a reward of 1 if it chooses the right action, which is the segment in the representation layer corresponding to correct label according to trained network, and 0 otherwise. We apply both \alg and \OFUL on these \SLB{}s. We measure the regret by counting the number of mistakes each algorithm makes. To come up with the optimistic choice of action at each time step, both of these algorithms requires solving an inner optimization problem. To mitigate the burden of these computation costs, we sample many models from the confidence sets and choose the most optimistic model among the sampled ones.

Through computing PCA of the empirical covariance matrix of the action vectors, surprisingly we found that projecting action vectors onto the $1$-dimensional subspace defined by the dominant eigenvector is sufficient for these datasets in the \SLB setting; thus, $m = 1$. During the experiments \alg tried to recover a $1$-dimensional subspace using the action vectors collected. We present the regrets obtained by \alg and \OFUL for MNIST, CIFAR-10 and ImageNet in Figure~\ref{fig:regretmnist},~\ref{fig:regretcifar},~\ref{fig:regretimagenet} respectively. With the help of subspace recovery and projection, \alg provides a massive reduction in the  dimensionality of the \SLB problem and immediately estimates a fairly accurate model for $\theta_*$. On the other hand, \OFUL naively tries to sample from all dimensions in order to learn $\theta_*$. This difference yields orders of magnitude improvement in regret. During the \SLB experiment, we also sample the optimistic models that are chosen by \alg and \OFUL. We use these models to test the model accuracy of the algorithms, \textit{i.e.} perform classification over all images in dataset. The optimistic model accuracy comparisons are depicted in 
Figure~\ref{fig:classmnist},~\ref{fig:classcifar},~\ref{fig:classimagenet}. These portray the learning behavior of \alg and \OFUL. Using projection, \alg learns the underlying linear model in the first few rounds, whereas \OFUL suffers from high-dimension of \SLB framework and lack of knowledge besides chosen action-reward pairs. We extend these experiments for settings with $d=100,500,1000$ and $m=1,2,4,8,16$ which can be found in Section \ref{experiment_more}.

\section{Related Work}
The primary class of partial information problems is the multi-arm bandit (MAB).~\citet{robbins1985some} introduces the standard stochastic MAB and~\citet{lai1985asymptotically} studies the asymptotic property of learning algorithms on this class.  Stochastic  MABs are a special case of \SLB when the arms representations are orthogonal to each other. For finite sample regime,~\citet{auer2002finite} deploys the principle of \OFU and provide finite sample guarantee for MABs. 
\citet{auer2002using} deploys the same principle to provide regret guarantee for MABs with the linear pay-off.  This principle is realized as the primary approach even for more general problems such as Linear Quadratic systems~\citep{abbasi2011regret} and Markov Decision Processes~\citep{jaksch2010near}. 

The study of linear bandit problems extends to various algorithms and environment settings \citep{dani2008stochastic,rusmevichientong2010linearly,li2010contextual}. \citet{kleinberg2010regret} studies the class of problems when the decision set changes time to time, while \citet{dani2008stochastic} studies this problem when the decision set provides a set of fixed actions. Further analysis in the area extend these approaches to classes where there are more structures in the problem setup. In traditional decision-making problems, where hand engineered feature representations are provided, sparsity in the linear function is a valid structure. Sparsity, as the key in high-dimensional conventional structured linear bandits, conveys series of successes in classical settings \citep{abbasi2012online,carpentier2012bandit}. In recommendations systems, where a set of users and items are given, \citet{gopalan2016low} consider the low-rank structure of the user-item preference matrix and provide an algorithm which exploits this further structure. 

To the best of our knowledge, there are no hidden low-dimensional subspace assumptions on actions and/or unknown weight vector in literature for \SLB. On the other hand, subspace recovery and dimension reduction problems are well studied in the literature. Several linear and nonlinear dimension reduction methods have been proposed such as PCA~\citep{pearson1901liii}, independent component analysis~\citep{hyvarinen2000independent}, random projections~\citep{candes2006near} and non-convex robust PCA~\citep{netrapalli2014non}. Among the linear dimension reduction techniques, PCA is the simplest, yet most widely used method. Analysis of PCA based methods mostly focus on the asymptotic results ~\citep{anderson1963asymptotic,jain2016streaming}. However, in the settings like \SLB with finite number of arms, it is necessary to have finite sample guarantees for the application of PCA. In the literature, among few finite sample PCA works, \citet{nadler2008finite} provides finite sample guarantees for one-dimensional PCA, whereas \citet{vaswani2017finite} extends it to larger dimensions with various noise models.

\section{Conclusion}
In this paper, we study a linear subspace structure in the action set of an \SLB problem. We deploy PCA based projection to exploit the immense number of unsupervised actions in the decision sets and learn the underlying subspace. We proposed \alg, a \SLB algorithm which utilizes the subspace estimated through PCA to improve the regret upper bound of \SLB{} problems. If such structure does not exist or is hard to recover, then the \alg reduces to the standard \SLB algorithm, \OFUL. We empirically study MNIST, CIFAR-10 and ImageNet datasets to create \SLB framework from image classification tasks. We test the performance of \alg versus \OFUL in the \SLB setting created. We show that when DNNs produce features of the actions, a significantly low dimensional structure is observed. Due to this structure, we showed that \alg substantially outperforms \OFUL and converges to an accurate model while \OFUL still struggles to sample in high dimensions to learn the underlying parameter vector.

In this work, we studied the class of linear subspace structures. In the future work, we plan to extend this line of study to the general class of low dimensional manifold structured problems. \citet{bora2017compressed} peruse a similar approach for compression problems. While optimism is the primary approach in the theoretical analyses of \SLB{}s, it mainly poses a computationally intractable internal optimization problem. An alternative method is Thompson sampling, a practical algorithm for \SLB{}s. In future work, we plan to deploy Thompson sampling and mitigate the computational complexity of \alg. 

\newpage
\bibliography{main}
\bibliographystyle{icml2019}
\newpage
\onecolumn
\appendix
\section{Projection Error Analysis, Proof of Lemma \ref{errornorm} } \label{ProjSupplement}
In this section, we provide the general version of Lemma \ref{errornorm} with the proof details. As stated in the main text, in order to bound the projection error, we will use Davis-Kahan $\sin \Theta$ theorem which states the following: 
\vspace{3mm} 
\begin{theorem}[\citep{davis1970rotation}] \label{sinTheta}
Let $S, H \in \mathbb{R}^{d \times d}$ be symmetric matrices, such that $\hat{S} = S + H$. The eigenvalues of $S$ and $\hat{S}$ are $\lambda_1 \geq \ldots \geq \lambda_m \geq \ldots \geq \lambda_d$ and $\hat{\lambda}_1 \geq \ldots \geq \hat{\lambda}_m \geq \ldots \geq \hat{\lambda}_d$ respectively. Define the eigendecompositions of $S$ and $\hat{S}$: 
\begin{equation*}
S = [U \quad U_o] \left[
\begin{array}{cc}
\Lambda & 0 \\ 
0 & \Lambda_o
\end{array}\right] [U \quad U_o]^T 
\end{equation*}

\begin{equation*}
\hat{S} = [\hat{U} \quad \hat{U}_o] \left[
\begin{array}{cc}
\hat{\Lambda} & 0 \\ 
0 & \hat{\Lambda}_o
\end{array}\right] [\hat{U} \quad \hat{U}_o]^T
\end{equation*}
where $\Lambda$ and $\hat{\Lambda}$ are diagonal matrices with first $m$ eigenvalues of $S$ and $\hat{S}$ respectively. $U = (u_1, \ldots, u_m) \in \mathbb{R}^{d \times m}$ and $\hat{U} = (\hat{u}_1, \ldots, \hat{u}_m) \in \mathbb{R}^{d \times m}$ denote the corresponding eigenvectors. 
Define 
\begin{equation*}
\delta \coloneqq \inf\{|\hat{\lambda} - \lambda| : \lambda \in [\lambda_m, \lambda_1], \hat{\lambda} \in (-\infty, \hat{\lambda}_{m+1}] \}.
\end{equation*}
If $\delta > 0$, then $\sin \Theta_{m}$, sine of the largest principal angle between the column spans of $U$ and $\hat{U}$, can be upper bounded as
\begin{equation} \label{DKsin}
\sin \Theta_{m} \leq \frac{\| \hat{S}U - U\Lambda \|_2}{\delta} = \frac{\| \hat{S}U - U\Lambda \|_2}{|\lambda_m - \hat{\lambda}_{m+1}|}.
\end{equation}

\end{theorem}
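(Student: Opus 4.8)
The plan is to reduce the bound on $\sin\Theta_m$ to a Sylvester-type spectral separation estimate. First I would identify the geometric quantity $\sin\Theta_m$ with the spectral norm of a cross term. Writing $\hat{U}_o$ for an orthonormal basis of the orthogonal complement of $\spn(\hat{U})$, the CS decomposition of the pair $\spn(U)$, $\spn(\hat{U})$ shows that the singular values of $\hat{U}^T U$ are the cosines $\cos\Theta_i$ while those of $\hat{U}_o^T U$ are the sines $\sin\Theta_i$; concretely, for a right singular vector $v_i$ of $\hat{U}^T U$ one has $\|\hat{U}^T U v_i\|_2^2 + \|\hat{U}_o^T U v_i\|_2^2 = \|U v_i\|_2^2 = 1$, so $\|\hat{U}_o^T U v_i\|_2 = \sin\Theta_i$. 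Since $\Theta_m$ is the largest principal angle, this yields the key identity
\begin{equation*}
\sin\Theta_m = \| \hat{U}_o^T U \|_2 .
\end{equation*}
It therefore suffices to bound $\|\hat{U}_o^T U\|_2$ in terms of the residual $R \coloneqq \hat{S} U - U\Lambda$ and the gap $\delta$.

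Next I would set $Z \coloneqq \hat{U}_o^T U$ and derive a Sylvester equation for it. Multiplying $R$ on the left by $\hat{U}_o^T$ and using that the columns of $\hat{U}_o$ are eigenvectors of $\hat{S}$ with eigenvalue matrix $\hat{\Lambda}_o$, i.e. $\hat{U}_o^T \hat{S} = \hat{\Lambda}_o \hat{U}_o^T$, gives
\begin{equation*}
\hat{U}_o^T R = \hat{U}_o^T \hat{S} U - \hat{U}_o^T U \Lambda = \hat{\Lambda}_o Z - Z \Lambda .
\end{equation*}
(One could further note $R = HU$ since $SU = U\Lambda$, but this is not needed.) On one side, $\| \hat{U}_o^T R \|_2 \leq \| \hat{U}_o^T \|_2 \, \| R \|_2 = \| R \|_2$, because $\hat{U}_o$ has orthonormal columns.

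The crux is the reverse inequality $\| \hat{\Lambda}_o Z - Z\Lambda \|_2 \geq \delta \|Z\|_2$. Since $\hat{\Lambda}_o$ and $\Lambda$ are diagonal, the linear operator $Z \mapsto \hat{\Lambda}_o Z - Z\Lambda$ acts entrywise by the scalars $\hat{\lambda}_{m+i} - \lambda_j$, each of which satisfies $|\hat{\lambda}_{m+i} - \lambda_j| \geq \delta$ by the definition of $\delta$ (the spectrum of $\Lambda$ lies in $[\lambda_m,\lambda_1]$ and that of $\hat{\Lambda}_o$ in $(-\infty,\hat{\lambda}_{m+1}]$). This gives the separation bound immediately in Frobenius norm, and the spectral-norm version follows from the standard fact that a spectrally separated Sylvester operator is invertible with inverse of norm at most $1/\delta$ in every unitarily invariant norm. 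Chaining the three displays yields $\delta\|Z\|_2 \leq \|\hat{U}_o^T R\|_2 \leq \|R\|_2$, hence $\sin\Theta_m = \|Z\|_2 \leq \|R\|_2 / \delta$, which is exactly (\ref{DKsin}).

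The main obstacle is establishing the separation inequality in the spectral norm rather than the (trivial) Frobenius norm; I expect to handle it either by invoking the Sylvester-operator inverse-norm bound or, alternatively, by a direct singular-vector argument that reduces the spectral-norm case to the separation of the two diagonal spectra. Everything else — the CS-decomposition identity for $\sin\Theta_m$ and the eigenvector manipulation producing the Sylvester equation — is routine once $R$ is hit on the left by the complementary projection $\hat{U}_o^T$.
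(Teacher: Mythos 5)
The paper never proves this statement: it is quoted as the Davis--Kahan $\sin\Theta$ theorem and attributed to \citet{davis1970rotation}, then used as a black box in the proof of Lemma~\ref{errornorm}. So there is no internal argument to compare yours against; your proposal must be judged as a proof of the classical theorem itself. On that score it is correct and follows the classical route: the CS-decomposition identity $\sin\Theta_m = \|\hat{U}_o^T U\|_2$, the relation $\hat{U}_o^T\hat{S} = \hat{\Lambda}_o\hat{U}_o^T$ turning $\hat{U}_o^T(\hat{S}U - U\Lambda)$ into the Sylvester expression $\hat{\Lambda}_o Z - Z\Lambda$ with $Z=\hat{U}_o^T U$, the estimate $\|\hat{U}_o^T R\|_2 \leq \|R\|_2$, and a spectral-separation lower bound chained together give exactly (\ref{DKsin}).

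The one step you must handle with care is the one you yourself flag as the crux. The claim that a Sylvester operator with $\delta$-separated spectra has inverse of norm at most $1/\delta$ in \emph{every} unitarily invariant norm is false in that generality: for Hermitian $A,B$ whose spectra are at distance $\delta$ but interleaved on the real line, the optimal constant in the spectral norm is $\pi/2$, not $1$ (Bhatia--Davis--McIntosh). The constant-$1$ bound you need is valid precisely because of the structure present here: every eigenvalue of $\hat{\Lambda}_o$ lies in $(-\infty,\hat{\lambda}_{m+1}]$ and every eigenvalue of $\Lambda$ lies in $[\lambda_m,\infty)$, so the two spectra are separated by an interval of length $\delta$. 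Given that structure, the bound follows from an integral representation: set $c=(\lambda_m+\hat{\lambda}_{m+1})/2$, $A=\hat{\Lambda}_o-cI$ and $B=\Lambda-cI$, so that $A\preceq -(\delta/2)I$, $B\succeq (\delta/2)I$, and $W \coloneqq \hat{\Lambda}_o Z - Z\Lambda = AZ-ZB$. Since the map $Z\mapsto AZ-ZB$ is injective (in the standard basis it acts diagonally with entries $\hat{\lambda}_{m+i}-\lambda_j$, all of modulus at least $\delta>0$), its unique solution is
\begin{equation*}
Z = -\int_0^\infty e^{tA}\, W\, e^{-tB}\,dt,
\qquad\text{whence}\qquad
\|Z\|_2 \leq \int_0^\infty e^{-t\delta/2}\,\|W\|_2\, e^{-t\delta/2}\,dt = \frac{\|W\|_2}{\delta}.
\end{equation*}
With this substituted for the generic ``standard fact,'' your chain $\delta\|Z\|_2 \leq \|\hat{U}_o^T R\|_2 \leq \|R\|_2$ is valid and the proof is complete; the entrywise Frobenius-norm observation alone would not suffice for the spectral-norm claim in the theorem.
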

Notice that in order to use Davis-Kahan $\sin \Theta$ theorem in our setting, we need to pick 2 symmetric matrices $S$ and $\hat{S}$ such that their first $m$ eigenvectors has the same span with the subspaces that $P$ and $\hat{P}$ project to. Followed by these choices, in order to get a non-trivial bound we require a significant eigengap between $\lambda_m$ and $\hat{\lambda}_{m+1}$, due to denominator in ($\ref{DKsin}$). We use the following matrix concentration inequalities to maintain an eigengap with high probability. 
\vspace{6mm}
\begin{theorem}[Matrix Chernoff Inequality; \citep{tropp2015introduction}] \label{chernoff}
Consider a finite sequence $\{X_k\}$ of independent, random, symmetric matrices in $\mathbb{R}^{d \times d}$. Assume that $\lambda_{min}(X_k) \geq 0$ and $\lambda_{max}(X_k) \leq L$ for each index k. Introduce the random matrix $Y = \sum_k X_k$. Let $\mu_{min}$ denote the minimum eigenvalue of the expectation $\mathbb{E}[Y]$, 
\begin{equation*}
\mu_{min} = \lambda_{min}\big(\mathbb{E}[Y]\big) = \lambda_{min}\bigg(\sum_k \mathbb{E}[X_k]\bigg).
\end{equation*}
Then,
\begin{equation*}
\Pr\bigg[ \lambda_{min}(Y) \leq \epsilon \mu_{min} \bigg] \leq d \exp\big(-(1-\epsilon)^2\frac{\mu_{min}}{2L}\big) \qquad \text{for } \epsilon \in [0,1).
\end{equation*}

\end{theorem}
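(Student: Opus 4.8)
Since this is a restatement of a known result (Theorem~5.1.1 of \citet{tropp2015introduction}), the plan is to reproduce the standard matrix Laplace-transform argument, specialized to the lower tail of the smallest eigenvalue. The first step is to convert the probabilistic event into a statement about a matrix moment generating function. For any $\theta < 0$, since $x \mapsto e^{\theta x}$ is decreasing, Markov's inequality gives
\begin{equation*}
\Pr\big[\lambda_{min}(Y) \leq \epsilon \mu_{min}\big] = \Pr\big[e^{\theta \lambda_{min}(Y)} \geq e^{\theta \epsilon \mu_{min}}\big] \leq e^{-\theta \epsilon \mu_{min}}\,\E\big[e^{\theta \lambda_{min}(Y)}\big].
\end{equation*}
The next observation is the spectral identity $e^{\theta \lambda_{min}(Y)} = \lambda_{max}(e^{\theta Y}) \leq \tr e^{\theta Y}$, valid for $\theta < 0$ because multiplying by a negative scalar reverses the eigenvalue order and $e^{\theta Y}$ is positive semidefinite; this lifts the scalar quantity to a trace, so that $\E[e^{\theta \lambda_{min}(Y)}] \leq \E[\tr e^{\theta Y}]$.

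The heart of the argument is to control $\E[\tr \exp(\theta \sum_k X_k)]$ for independent summands. The plan is to invoke the subadditivity of the matrix cumulant generating function,
\begin{equation*}
\E\Big[\tr \exp\Big(\theta \sum_k X_k\Big)\Big] \leq \tr \exp\Big(\sum_k \log \E\big[e^{\theta X_k}\big]\Big).
\end{equation*}
For each summand I would use the hypothesis $0 \preceq X_k \preceq L I_d$ together with the scalar convexity bound $e^{\theta x} \leq 1 + \tfrac{e^{\theta L}-1}{L}x$ on $[0,L]$ (the chord lies above the convex exponential), lifted to matrices by the transfer rule, to obtain $\E[e^{\theta X_k}] \preceq I_d + \tfrac{e^{\theta L}-1}{L}\E[X_k]$, and then $\log \E[e^{\theta X_k}] \preceq \tfrac{e^{\theta L}-1}{L}\E[X_k]$ using $\log(I_d + A) \preceq A$. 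Summing and invoking monotonicity of the trace exponential under the Loewner order yields
\begin{equation*}
\E\big[\tr e^{\theta Y}\big] \leq \tr \exp\Big(\tfrac{e^{\theta L}-1}{L}\,\E[Y]\Big) \leq d\,\exp\Big(\tfrac{e^{\theta L}-1}{L}\,\mu_{min}\Big),
\end{equation*}
where the last inequality uses that $\theta < 0$ makes the coefficient negative, so each of the $d$ eigenvalues $\mu_i \geq \mu_{min}$ of $\E[Y]$ contributes at most $\exp(\tfrac{e^{\theta L}-1}{L}\mu_{min})$.

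Combining the pieces gives, for every $\theta<0$, the bound $\Pr[\lambda_{min}(Y)\leq \epsilon\mu_{min}] \leq d\exp(-\theta\epsilon\mu_{min} + \tfrac{e^{\theta L}-1}{L}\mu_{min})$. The final step is to optimize the exponent over $\theta$: differentiating shows the minimizer is $\theta = L^{-1}\log\epsilon$, which is negative since $\epsilon\in[0,1)$, and substituting yields the sharp form $d\exp(\tfrac{\mu_{min}}{L}(\epsilon - 1 - \epsilon\log\epsilon))$. To reach the stated clean bound I would then apply the elementary inequality $\epsilon - 1 - \epsilon\log\epsilon \leq -\tfrac{(1-\epsilon)^2}{2}$ on $[0,1)$, which follows by noting the difference vanishes to second order at $\epsilon = 1$ and comparing second derivatives (namely $1/\epsilon \geq 1$ there).

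\textbf{Main obstacle.} The one genuinely deep ingredient is the subadditivity of the matrix CGF: unlike the scalar case, $\E[\tr e^{\theta\sum_k X_k}]$ does not factor over independent summands, and establishing the logarithmic inequality above requires Lieb's concavity theorem (equivalently a Golden--Thompson-type estimate). Everything else---the Markov step, the per-term MGF bound via the transfer rule, and the final scalar optimization---is routine once that inequality is in hand. Since the paper uses this only as a cited tool, the expectation is to state the subadditivity result as a black box and assemble the remaining elementary steps around it.
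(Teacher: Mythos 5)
Your proposal is correct: the paper states this theorem purely as an imported tool from \citet{tropp2015introduction} and offers no proof of its own, and your argument is a faithful reconstruction of the standard matrix Laplace-transform proof in that reference --- Markov's inequality with $\theta<0$, the lift $e^{\theta\lambda_{min}(Y)}=\lambda_{max}(e^{\theta Y})\leq \tr e^{\theta Y}$, subadditivity of the matrix CGF via Lieb's concavity theorem, the chord bound $\E[e^{\theta X_k}] \preceq I_d + \tfrac{e^{\theta L}-1}{L}\E[X_k]$ through the transfer rule, and optimization at $\theta = L^{-1}\log\epsilon$ followed by the elementary estimate $\epsilon - 1 - \epsilon\log\epsilon \leq -\tfrac{(1-\epsilon)^2}{2}$, whose second-derivative justification you give correctly. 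Since this matches the approach of the cited source exactly (with Lieb's theorem appropriately black-boxed), nothing further is needed.
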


\vspace{6mm}
\begin{theorem}[Corollary of Matrix Bernstein; \citep{tropp2015introduction}] \label{bernstein}
Consider a set of $n$ i.i.d. realization of a $d_1 \times d_2$ random matrix $R$, as $\lbrace R_1,\ldots,R_n\rbrace$. If $\mathbb{E}[R]$ is bounded, $\|R\|_2 \leq L$ almost surely, with second moment of 
\begin{equation*}
m_2(R) = \max \bigg \{ \| \mathbb{E}[RR^T] \|_2 ,  \| \mathbb{E}[R^TR] \|_2  \bigg \}.
\end{equation*}
Then, for all $t\geq 0$,
\begin{equation*}
\Pr \bigg[ \| \frac{1}{n} \sum_{i=1}^n R_i - \mathbb{E}[R] \|_2  \geq t \bigg] \leq (d_1 + d_2) \exp \bigg ( \frac{-nt^2/2}{m_2(R) + 2Lt/3}\bigg )
\end{equation*}

\end{theorem}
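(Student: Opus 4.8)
The plan is to obtain this tail bound as a corollary of the symmetric matrix Bernstein inequality, which is itself established by the matrix Laplace-transform (matrix Chernoff) method of \citet{tropp2015introduction}. Three reductions do all the work: centering and rescaling the summands, a Hermitian dilation to absorb the rectangular shape, and a variance computation that converts the centered second moments into $m_2(R)$.

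First I would center and rescale. Set $S_i = \frac{1}{n}\left(R_i - \mathbb{E}[R]\right)$, so the quantity of interest is $Z \coloneqq \sum_{i=1}^n S_i = \frac{1}{n}\sum_{i=1}^n R_i - \mathbb{E}[R]$, a sum of $n$ independent, mean-zero random matrices. Since $\|R_i\|_2 \leq L$ almost surely and $\|\mathbb{E}[R]\|_2 \leq \mathbb{E}\|R\|_2 \leq L$ by Jensen's inequality, every centered term obeys $\|S_i\|_2 \leq \frac{2L}{n}$ almost surely; this factor of two is precisely what produces the $2Lt/3$ term in the denominator of the claimed bound.

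Second, to remove the rectangularity I would pass to the Hermitian dilation. For a $d_1 \times d_2$ matrix $B$, define the symmetric $(d_1+d_2) \times (d_1+d_2)$ matrix
\[
\mathcal{H}(B) = \begin{pmatrix} 0 & B \\ B^T & 0 \end{pmatrix},
\]
which is linear in $B$, maps mean-zero to mean-zero, preserves the spectral norm via $\lambda_{max}(\mathcal{H}(B)) = \|\mathcal{H}(B)\|_2 = \|B\|_2$, and satisfies $\mathcal{H}(B)^2 = \diag(BB^T, B^TB)$. Consequently $\{\|Z\|_2 \geq t\}$ is exactly the event $\{\lambda_{max}(\sum_i \mathcal{H}(S_i)) \geq t\}$ for a sum of independent, symmetric, mean-zero matrices of dimension $d_1+d_2$, each bounded in norm by $2L/n$, so the rectangular problem is reduced to a Hermitian one.

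Third comes the engine together with the variance bound. I would control the tail by the matrix Laplace transform, $\Pr[\lambda_{max}(\sum_i \mathcal{H}(S_i)) \geq t] \leq \inf_{\theta>0} e^{-\theta t}\,\tr\,\mathbb{E}\exp\left(\theta \sum_i \mathcal{H}(S_i)\right)$, then use subadditivity of the matrix cumulant generating function (Lieb's concavity theorem) to split the expectation across the independent summands and bound each factor from the uniform bound $2L/n$ and the second moments. By independence and centering, $\mathbb{E}[ZZ^T] = \frac{1}{n}\left(\mathbb{E}[RR^T] - \mathbb{E}[R]\mathbb{E}[R]^T\right)$ and $\mathbb{E}[Z^TZ] = \frac{1}{n}\left(\mathbb{E}[R^TR] - \mathbb{E}[R]^T\mathbb{E}[R]\right)$; since the subtracted terms are positive semidefinite, dropping them only increases the top eigenvalue, so the matrix variance statistic is at most $m_2(R)/n$. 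Substituting $v = m_2(R)/n$ and the norm bound $2L/n$ into the resulting Bernstein exponent and clearing the common $1/n$ from numerator and denominator yields exactly $(d_1+d_2)\exp\left(\frac{-nt^2/2}{m_2(R)+2Lt/3}\right)$, with the prefactor $d_1+d_2$ arising from the trace over the dilated dimension. The main obstacle is this engine itself: establishing subadditivity of the matrix cumulant generating function via Lieb's theorem and bounding the per-summand moment generating function of a bounded centered matrix. Because that is the cited content of \citet{tropp2015introduction}, I would invoke it as a black box, leaving only the routine corollary-specific steps of centering, dilation, and the variance estimate.
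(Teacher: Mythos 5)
Your proposal is correct and follows exactly the route of the cited source: the paper states this result without proof, as a corollary imported from \citet{tropp2015introduction}, and your reconstruction is the standard argument there (Hermitian dilation plus the Laplace-transform/Lieb machinery for the symmetric Bernstein inequality). Your corollary-specific bookkeeping also checks out: centering gives the per-summand bound $\|S_i\|_2 \leq 2L/n$, which is where the $2Lt/3$ term comes from, and dropping the positive semidefinite terms $\mathbb{E}[R]\mathbb{E}[R]^T$ and $\mathbb{E}[R]^T\mathbb{E}[R]$ correctly yields the variance estimate $m_2(R)/n$, so substitution reproduces the stated exponent $\frac{-nt^2/2}{m_2(R)+2Lt/3}$ with prefactor $d_1+d_2$ exactly.
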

\newpage
Define $t_{\min,\delta} = \bigg(\sqrt{\frac{2d_x g_x}{K}\log \frac{m}{\delta}} + \Gamma \sqrt{\frac{\alpha}{K}\log \frac{2d}{\delta}} \bigg)^2$. Now that we have the required machinery, we present general version of Lemma \ref{errornorm}.

\begin{lemma} \label{errornormSupple}
Fix any $\delta \in (0,1/3)$. Suppose that Assumption 1 holds. Then with probability at least $1-3\delta$, 
\begin{equation*} 
\|\hat{P}_t - P \|_2 \leq \Phi_{t,\delta}, \qquad \forall t\geq t_{w,\delta},
\end{equation*}
where 
\begin{equation} \label{error_general}
\Phi_{t,\delta} = \frac{ \Gamma \sqrt{\frac{\alpha }{tK} \log \frac{2d}{\delta}}}{ 1 - \sqrt{\frac{2d_x g_x}{tK}\log\frac{m}{\delta}} - \Gamma \sqrt{\frac{\alpha }{tK}\log \frac{2d}{\delta}} }.
\end{equation}
\end{lemma}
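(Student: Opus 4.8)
The plan is to bound the sine of the largest principal angle, since equation~(\ref{error=sine}) already gives $\|\hat{P}_t - P\|_2 = \sin\Theta_m$. Write $\hat{M}_t \coloneqq \frac{1}{tK}\sum_{\hat{x}\in\uplus_{i=1}^t D_i}\hat{x}\hat{x}^T$ for the PCA sample second-moment matrix (distinct from the played-action matrix $\hat{\Sigma}_t$ of Section~\ref{ConfSetCre}), so that $\spn(\hat{V}_t)$ is its top-$m$ eigenspace and $\hat{P}_t$ its spectral projector. Since $x_{t,i},\psi_{t,i}$ are uncorrelated and $\psi$ is isotropic, $\mathbb{E}[\hat{M}_t] = \Sigma_x + \sigma^2 I_d$, whose top-$m$ eigenspace is exactly $\spn(V)$ with $\lambda_m = \lambda_- + \sigma^2$ and $\lambda_{m+1} = \sigma^2$; its projector is $P$. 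I would therefore invoke the Davis--Kahan $\sin\Theta$ theorem (Theorem~\ref{sinTheta}) with base $S = \mathbb{E}[\hat{M}_t]$ and perturbation $\hat{S} = \hat{M}_t$, and decompose $H \coloneqq \hat{M}_t - \mathbb{E}[\hat{M}_t] = H_1 + H_2 + H_3$ into the true-action fluctuation $H_1 = \frac{1}{tK}\sum xx^T - \Sigma_x$, the centred perturbation term $H_2 = \frac{1}{tK}\sum\psi\psi^T - \sigma^2 I_d$, and the cross term $H_3 = \frac{1}{tK}\sum(x\psi^T + \psi x^T)$.

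For the numerator I would use the residual (off-diagonal-block) form of the $\sin\Theta$ theorem, as in the finite-sample PCA analysis of \citet{vaswani2017finite}, in which the operative quantity is $\|V_{\perp}^T\hat{M}_t V\|_2$ rather than the full residual $\|\hat{S}V - V\Lambda\|_2 = \|HV\|_2$. The crucial structural observation is that $\Sigma_x$ and $\sigma^2 I_d$ leave $\spn(V)$ invariant and $x_{t,i}\in\spn(V)$, so $V_{\perp}^T H_1 V = 0$ and hence $V_{\perp}^T\hat{M}_t V = V_{\perp}^T(H_2 + H_3)V$; the fluctuation of the true-action covariance, which carries a $g_x$-magnitude, does not rotate the subspace and must be excluded from the numerator. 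I would bound $\|H_2 + H_3\|_2$ by two applications of the Matrix Bernstein inequality (Theorem~\ref{bernstein}): one for the symmetric cross term (matrices $x\psi^T + \psi x^T$, with $\|x\|_2^2\leq d_x\lambda_+$, $\|\psi\|_2^2\leq d_\psi\sigma^2$, second moment of order $d_\psi\sigma^2\lambda_+$) and one for $\psi\psi^T - \sigma^2 I_d$ (second moment of order $d_\psi\sigma^4$). Tracking $d_1 + d_2 = 2d$ yields the $\log(2d/\delta)$ factor, $\alpha = \max(d_x,d_\psi)$ unifies the effective dimensions, and the two leading terms assemble into $N \coloneqq \lambda_-\Gamma\sqrt{\frac{\alpha}{tK}\log\frac{2d}{\delta}}$ with $\Gamma = 2g_\psi + 4\sqrt{g_xg_\psi}$ exactly as in~(\ref{quantities}).

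For the denominator I need a high-probability lower bound on the relevant eigengap $\hat{\lambda}_m(\hat{M}_t) - \sigma^2$. By Courant--Fischer, $\hat{\lambda}_m(\hat{M}_t) \geq \lambda_{\min}(V^T\hat{M}_t V)$, and I would split $V^T\hat{M}_t V = V^T\hat{M}_{x,t}V + V^T(\hat{M}_t - \hat{M}_{x,t})V$ (with $\hat{M}_{x,t} = \frac{1}{tK}\sum xx^T$) and apply Weyl subadditivity of $\lambda_{\min}$. The first block is an average of PSD matrices $(V^Tx)(V^Tx)^T \in \mathbb{R}^{m\times m}$, so Matrix Chernoff (Theorem~\ref{chernoff}) with $\mu_{\min} = tK\lambda_-$, uniform bound $L \leq d_x\lambda_+$ and dimension $m$ gives $\lambda_{\min}(V^T\hat{M}_{x,t}V) \geq \lambda_- - D$, where $D \coloneqq \lambda_-\sqrt{\frac{2d_xg_x}{tK}\log\frac{m}{\delta}}$ — the source of the $\log(m/\delta)$, $d_x$, $g_x$ term. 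The second block equals $\sigma^2 I_m + V^T(H_2+H_3)V$, whose smallest eigenvalue is at least $\sigma^2 - N$ by the same Bernstein bound. Hence the gap is at least $\lambda_- - D - N$.

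Combining the three steps gives $\|\hat{P}_t - P\|_2 = \sin\Theta_m \leq N/(\lambda_- - D - N)$, and dividing numerator and denominator by $\lambda_-$ reproduces $\Phi_{t,\delta}$ of~(\ref{error_general}); a union bound over the three concentration events (one Chernoff, two Bernstein) gives probability $1-3\delta$. Finally I would verify that $t_{w,\delta} = n_\delta/K$ forces $t \geq t_{\min,\delta}$, keeping the denominator strictly positive: a short computation using $\alpha \geq d_x$ shows the two subtracted terms sum to at most $1/2$ at $t = t_{w,\delta}$, so the denominator stays $\geq 1/2$, which also recovers the clean form $\|\hat{P}_t - P\|_2 \leq 2\Gamma\sqrt{\frac{\alpha}{tK}\log\frac{2d}{\delta}}$ of Lemma~\ref{errornorm}. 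The hard part will be the bookkeeping in the two Matrix Bernstein applications: computing the uniform norm bounds and second moments $m_2(\cdot)$ precisely enough that the constants collapse into exactly $2g_\psi + 4\sqrt{g_xg_\psi}$, together with arranging the residual form of Davis--Kahan so that the in-subspace fluctuation $H_1$ influences only the eigengap (through Chernoff) and never the rotation numerator.
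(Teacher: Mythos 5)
Your reduction to $\sin\Theta_m$, the three concentration steps (Matrix Chernoff on the $m\times m$ compressions $V^Tx_ix_i^TV$, two Matrix Bernstein bounds assembling into $\Gamma$), and the check that $t_{w,\delta}=n_\delta/K$ keeps the denominator above $1/2$ all agree with the paper's proof. The genuine gap is in your Davis--Kahan step. You take the reference matrix to be the population matrix $S=\Sigma_x+\sigma^2I_d$ and pair the numerator $\|V_\perp^THV\|_2$ (both projections \emph{unperturbed}, $H=\hat M_t-S=H_1+H_2+H_3$) with the gap $\hat\lambda_m(\hat M_t)-\sigma^2$. No version of the $\sin\Theta$ theorem gives this pairing: Theorem~\ref{sinTheta} pairs $\|HV\|_2$ (sharp form: $\|\hat V_{t,\perp}^THV\|_2$) with the gap $\lambda_m(S)-\hat\lambda_{m+1}(\hat M_t)$; the swapped-role version pairs $\|V_\perp^TH\hat V_t\|_2$ with your gap; and Stewart's block (Riccati) theorem does use $\|V_\perp^THV\|_2$ but against the separation of the two \emph{perturbed} diagonal blocks, with an extra factor $2$. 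Making your route rigorous — from $\hat M_t\hat V_t=\hat V_t\hat\Lambda$, $V_\perp^TS=\sigma^2V_\perp^T$ one gets the Sylvester identity $V_\perp^T\hat V_t\hat\Lambda-\sigma^2V_\perp^T\hat V_t=V_\perp^TH\hat V_t$, and expanding $\hat V_t=VV^T\hat V_t+V_\perp V_\perp^T\hat V_t$ in the numerator —
\begin{equation*}
\sin\Theta_m \;=\; \|V_\perp^T\hat V_t\|_2 \;\le\; \frac{\|V_\perp^THV\|_2}{\hat\lambda_m(\hat M_t)-\sigma^2-\|V_\perp^THV_\perp\|_2},
\end{equation*}
and since $\|V_\perp^THV_\perp\|_2=\|V_\perp^T(H_2+H_3)V_\perp\|_2$ is controlled only by the same Bernstein quantity $N\coloneqq\lambda_-\Gamma\sqrt{\frac{\alpha}{tK}\log\frac{2d}{\delta}}$, your chain yields $N/(\lambda_--D-2N)$ with $D\coloneqq\lambda_-\sqrt{\frac{2d_xg_x}{tK}\log\frac{m}{\delta}}$, \emph{not} $N/(\lambda_--D-N)$, which is what $\Phi_{t,\delta}$ equals. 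So the claim that your computation ``reproduces $\Phi_{t,\delta}$'' fails; you prove a constant-factor weaker bound and would have to enlarge the warm-up time accordingly. The alternative fix (Theorem~\ref{sinTheta} with population $S$ and the sharp numerator $\|\hat V_{t,\perp}^THV\|_2$) also degrades, because $\hat V_{t,\perp}^TH_1V=(\hat V_{t,\perp}^TV)(V^TH_1V)$ forces two-sided control of $V^TH_1V$, which Chernoff (a lower-tail bound) does not give. The paper sidesteps all of this with a different device: it applies Theorem~\ref{sinTheta} with the \emph{random, rank-$m$} reference matrix $S=\frac{1}{tK}\sum_ix_ix_i^T+\sigma^2VV^T$, whose top-$m$ eigenspace is still exactly $\spn(V)$. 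Then the signal fluctuation $H_1$ sits inside $S$ rather than in the perturbation, the numerator becomes $\|(\hat S-S)V\|_2=\|(H_2+H_3)V\|_2\le N$, and Weyl's inequality together with $\mathrm{rank}(S)=m$ gives $\hat\lambda_{m+1}\le\sigma^2+N$ while $\lambda_m(S)\ge\lambda_m(\frac{1}{tK}\sum_ix_ix_i^T)+\sigma^2$; the $\sigma^2$'s cancel and the denominator is $\lambda_--D-N$, with a single $N$, exactly as stated.

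There is a second, independent gap: uniformity in $t$. Your union bound over the three concentration events is at a fixed sample size $tK$, so it gives the inequality for each fixed $t$ with failure probability $3\delta$; the lemma asserts it simultaneously for all $t\ge t_{w,\delta}$ at total failure probability $3\delta$, and a naive union bound over $t$ diverges. The paper's proof devotes its final step to exactly this point, via a stopping-time construction in the spirit of \citet{freedman1975tail}; your proposal needs an analogous uniform-in-$t$ argument (or must settle for a fixed-$t$ statement, which is weaker than Lemma~\ref{errornormSupple}).
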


\begin{proof}
We set $\hat{S} = \frac{1}{n}\sum_{i=1}^{n} \hat{x}_i\hat{x}_i^T$ and $S = \frac{1}{n}\sum_{i=1}^{n}  x_i x_i^T + VV^T \Sigma_{\psi} VV^T$ where $n=tK$. Let U be the top $m$ eigenvectors of S. Notice that $\spn(U) = \spn(V)$ and  $\hat{V}_t$ is the matrix of top $m$ eigenvectors of $\hat{S}$. Therefore, one can apply Theorem \ref{sinTheta} with given choices of $S$ and $\hat{S}$, to bound $\|\hat{P}_t - P \|_2 $. Since $\| \hat{S}U - U\Lambda \|_2 = \| (\hat{S} - S)V \|_2$,
\begin{equation*}
\|\hat{P}_t - P \|_2 \leq \frac{\| (\hat{S} - S)V \|_2}{\lambda_m(S) - \lambda_{m+1}(\hat{S})} \stackrel{(1)}{\leq} \frac{\|  (\hat{S} - S)V \|_2}{\lambda_m(S) - \| \hat{S} - S \|_2} \stackrel{(2)}{\leq} \frac{\| \mathbb{E}[\hat{S} - S]V \|_2 + \| \hat{S} - S - \mathbb{E}[\hat{S} - S] \|_2}{\lambda_m(S) - \| \mathbb{E}[\hat{S} - S] \|_2 - \| \hat{S} - S - \mathbb{E}[\hat{S} - S] \|_2 }
\end{equation*}
where (1) follows from Weyl's inequality and the fact that $S$ is rank $m$, $\lambda_{m+1}(S) = \ldots = \lambda_{d} = 0$, and (2) is due to triangle inequality. With the given choices of $S$ and $\hat{S}$ and Assumption 1, we have the following:
\begin{align*}
&\lambda_m(S) \geq \lambda_{m}(\frac{1}{n}\sum_{i=1}^{n}  x_i x_i^T) + \lambda_{\min}(V^T \Sigma_{\psi} V) = \lambda_{m}(\frac{1}{n}\sum_{i=1}^{n}  x_i x_i^T) + \lambda_{\min}(\sigma^2 I_m) = \lambda_{m}(\frac{1}{n}\sum_{i=1}^{n}  x_i x_i^T) + \sigma^2\\
&\hat{S} - S = \frac{1}{n}\sum_{i=1}^{n} \psi_i \psi_i^T + \frac{1}{n}\sum_{i=1}^{n} x_i \psi_i^T + \frac{1}{n}\sum_{i=1}^{n} \psi_i x_i^T  - VV^T \Sigma_{\psi} VV^T \\
&\| \mathbb{E}[\hat{S} - S] \|_2 = \|\sigma^2 I_d - \sigma^2 P \|_2 =  \sigma^2 \qquad \mathbb{E}[\hat{S} - S]V = \Sigma_{\psi} V - VV^T \Sigma_{\psi} V = V_{\perp} V_{\perp}^T \Sigma_{\psi} V = 0 \\
&\hat{S} - S - \mathbb{E}[\hat{S} - S] = \frac{1}{n}\sum_{i=1}^{n} \psi_i \psi_i^T - \Sigma_{\psi} + \frac{1}{n}\sum_{i=1}^{n} x_i \psi_i^T + \frac{1}{n}\sum_{i=1}^{n} \psi_i x_i^T.
\end{align*}
Inserting these expressions we get, 
\begin{equation}
\|\hat{P}_t - P \|_2 \leq \frac{ \| \frac{1}{n}\sum_{i=1}^{n} \psi_i \psi_i^T - \Sigma_{\psi} + \frac{1}{n}\sum_{i=1}^{n} x_i \psi_i^T + \frac{1}{n}\sum_{i=1}^{n} \psi_i x_i^T \|_2}{\lambda_{m}(\frac{1}{n}\sum_{i=1}^{n}  x_i x_i^T) - \| \frac{1}{n}\sum_{i=1}^{n} \psi_i \psi_i^T - \Sigma_{\psi} + \frac{1}{n}\sum_{i=1}^{n} x_i \psi_i^T + \frac{1}{n}\sum_{i=1}^{n} \psi_i x_i^T \|_2 }
\end{equation}

We first bound $\lambda_{m}(\frac{1}{n}\sum_{i=1}^{n}  x_i x_i^T)$. From Assumption 1, $\lambda_{max}\big( x_i x_i^T\big) \leq d_x \lambda_+$ for all $i \in [n]$ and from the model properties, $\lambda_{m}\big( \sum_{i=1}^{n} \mathbb{E}[x_i x_i^T]\big) = n\lambda_{-} $. Using Theorem \ref{chernoff}, one can get that 

\begin{equation}\label{num1}
\Pr\bigg[\lambda_{m}\bigg( \frac{1}{n}\sum_{i=1}^{n} x_i x_i^T \bigg) \leq \lambda_{-} \bigg (1 - \sqrt{\frac{2d_x g_x}{n}\log\frac{m}{\delta}}\bigg )\bigg] \leq \delta.
\end{equation}
Now we consider $ \| \frac{1}{n}\sum_{i=1}^{n} \psi_i \psi_i^T - \Sigma_{\psi} + \frac{1}{n}\sum_{i=1}^{n} x_i \psi_i^T + \frac{1}{n}\sum_{i=1}^{n} \psi_i x_i^T \|_2$. From triangle inequality we have,
\begin{equation*}
\bigg\| \frac{1}{n}\sum_{i=1}^{n} \psi_i \psi_i^T - \Sigma_{\psi} + \frac{1}{n}\sum_{i=1}^{n} x_i \psi_i^T + \frac{1}{n}\sum_{i=1}^{n} \psi_i x_i^T \bigg \|_2 \leq \bigg \| \frac{1}{n}\sum_{i=1}^{n} \psi_i \psi_i^T - \Sigma_{\psi} \bigg \|_2 + 2 \bigg \| \frac{1}{n}\sum_{i=1}^{n} x_i \psi_i^T \bigg \|_2
\end{equation*} 
We will consider each term on the right hand side separately. If Assumption 1 holds, then we have:
\begin{align*}
\mathbb{E}[\psi_{i}\psi_{i}^T] &= \Sigma_{\psi} \\
\| \psi_{i}\psi_{i}^T \|_2 &\leq d_{\psi} \sigma^2 \\
\| \mathbb{E}[ \psi_{i}\psi_{i}^T\psi_{i}\psi_{i}^T] \|_2 &\leq d_{\psi} \sigma^2 \| \mathbb{E}[\psi_{i}\psi_{i}^T] \|_2 = d_{\psi} \sigma^4
\end{align*}
Applying Theorem \ref{bernstein}, we get
\begin{equation} \label{denum1}
\Pr \bigg[ \bigg \|  \frac{1}{n}\sum_{i=1}^{n} \psi_i \psi_i^T - \Sigma_{\psi} \bigg \|_2  \geq 2\sigma^2\sqrt{\frac{d_{\psi}}{n}\log \frac{2d}{\delta}} \bigg ] \leq \delta  \quad \text{ for } 2\sqrt{\frac{d_{\psi}}{n}\log \frac{2d}{\delta}} \leq 1.5.
\end{equation}
Under the same assumption for the second term we have:
\begin{align*}
\mathbb{E}[x_{i}\psi_{i}^T] &= 0\\
\| x_{i}\psi_{i}^T \|_2 &= \sqrt{\lambda_{max}(\psi_{i}x_i^T x_{i}\psi_{i}^T)} \leq \sqrt{  d_x \lambda_{+} d_{\psi} \sigma^2 } \\
\| \mathbb{E}[ x_{i}\psi_{i}^T\psi_{i}x_i^T] \|_2 &\leq d_{\psi} \sigma^2 \| \mathbb{E}[x_{i}x_{i}^T] \|_2 = d_{\psi} \lambda_{+} \sigma^2 \\
\| \mathbb{E}[ \psi_{i}x_{i}^T x_{i}\psi_i^T] \|_2 &\leq  d_x \lambda_{+} \| \mathbb{E}[\psi_{i}\psi_{i}^T] \|_2 \leq d_x \lambda_{+} \sigma^2
\end{align*}
Once again applying Theorem \ref{bernstein},
\begin{equation} \label{denum2}
\Pr \bigg[ \bigg \|  \frac{1}{n}\sum_{i=1}^{n} x_{i}\psi_{i}^T \bigg \|_2  \geq 2\sqrt{\lambda_{+} \sigma^2} \sqrt{\frac{\alpha}{n}\log\frac{2d}{\delta}} \bigg ] \leq \delta  \quad \text{ for } 2\sqrt{\frac{\alpha}{n}\log\frac{2d}{\delta}} \leq 1.5.
\end{equation}
Finally, combining (\ref{num1}), (\ref{denum1}), (\ref{denum2}) and using union bound, for any round $t \geq t_{\min, \delta}$, we get: 
\begin{equation*}
\|\hat{P}_t - P \|_2 \leq \min \Bigg ( \frac{ \Gamma \sqrt{\frac{\alpha }{tK} \log \frac{2d}{\delta}}}{ 1 - \sqrt{\frac{2d_x g_x}{tK}\log\frac{m}{\delta}} - \Gamma \sqrt{\frac{\alpha }{tK}\log \frac{2d}{\delta}} } , 1 \Bigg ) \quad \text{w.p. } 1- 3\delta.
\end{equation*}
As explained in the main text, due to equivalence between the projection error and the sine of the largest angle between the subspaces, the projection error is always bounded by 1. Thus, in our bound we impose that constraint. Notice that lower bound on $t$ is to satisfy that concentration inequalities provide meaningful results. In other words, $Kt_{\min,\delta}$ is the number of samples required to have non-negative denominator to use Davis-Kahan $\sin \Theta$ theorem. However, observe that we need $Kt_{w,\delta}$ samples to obtain high probability error bound which is non-trivial, \textit{i.e.} less than 1 and $t_{w,\delta} = 4 t_{\min, \delta}$. Therefore, for any $t \geq t_{w,\delta}$ the stated bound (\ref{error_general}) in the lemma holds with high probability and for any $1\leq t \leq t_{w,\delta}$ we bound the projection error by 1. 

Only step remaining to show that lemma holds $\forall t \geq t_{w,\delta}$. This requires  an argument which shows that this bound is valid uniformly over all rounds. To this end, we use stopping time construction, which goes back at least to \citet{freedman1975tail}. 

Define the bad event, 
\begin{equation*}
E_{\tau}(\delta) = \bigg \{ \|\hat{P}_{\tau} - P \|_2 > \frac{ \Gamma \sqrt{\frac{\alpha }{\tau K} \log \frac{2d}{\delta}}}{ 1 - \sqrt{\frac{2d_x g_x}{\tau K}\log\frac{m}{\delta}} - \Gamma \sqrt{\frac{\alpha }{\tau K}\log \frac{2d}{\delta}} } \bigg \}.
\end{equation*}
We are interested in the probability of $\bigcup\limits_{t \geq t_{w,\delta}} E_{t}(\delta)$. Define $\tau(\omega) = \min\{t \geq t_{w,\delta} : \omega \in E_{t}(\delta)\}$, with the convention that $\min \emptyset = \infty$. Then, $\tau$ is a stopping time. Thus, $\bigcup\limits_{t \geq t_{w,\delta}} E_{t}(\delta) = \{\omega : \tau(\omega) < \infty \}$. The Lemma \ref{errornormSupple} can be obtained as follows:

\begin{align*}
  \Pr \bigg[ \bigcup\limits_{t \geq t_{w,\delta}} E_{t}(\delta) \bigg] = \Pr[\tau < \infty] &=  \Pr \bigg [ \|\hat{P}_{\tau} - P \|_2 > \frac{ \Gamma \sqrt{\frac{\alpha }{\tau K} \log \frac{2d}{\delta}}}{ 1 - \sqrt{\frac{2d_x g_x}{\tau K}\log\frac{m}{\delta}} - \Gamma \sqrt{\frac{\alpha }{\tau K}\log \frac{2d}{\delta}} }, \tau < \infty \bigg ] \\
  &= \Pr \bigg [ \|\hat{P}_{\tau} - P \|_2 > \frac{ \Gamma \sqrt{\frac{\alpha }{\tau K} \log \frac{2d}{\delta}}}{ 1 - \sqrt{\frac{2d_x g_x}{\tau K}\log\frac{m}{\delta}} - \Gamma \sqrt{\frac{\alpha }{\tau K}\log \frac{2d}{\delta}} } \bigg ] \leq 3\delta.
\end{align*}

Finally, notice that Lemma \ref{errornorm} presented in the main text is direct consequence of having denominator at (\ref{error_general}) greater than $\frac{1}{2}$ for all $t\geq t_{w,\delta}$. 

\end{proof}

\section{Confidence Set Construction Analysis, Proof of Theorem \ref{main}} \label{ConfConSupp}

In this section, we state the general version of Theorem \ref{main} and provide the proof details. First, recall that $A_t = \hat{P}_t (\hat{\Sigma}_{t-1} + \lambda I_d) \hat{P}_t $. Let $B_t$ be a symmetric matrix such that $A_t = \hat{V}_t B_t \hat{V}_t^T$. Notice that $B_t$ is a full rank $m \times m$ matrix. Also define $\bar{A_t} = A_t - \lambda \hat{P}_t = \hat{P}_t \hat{\Sigma}_{t-1}  \hat{P}_t = \hat{V}_t \hat{V}_t^T \hat{\Sigma}_{t-1} \hat{V}_t \hat{V}_t^T = \hat{V}_t \bar{B}_t \hat{V}_t^T$ where $\bar{B}_t = \hat{V}_t^T \hat{\Sigma}_{t-1} \hat{V}_t = B_t - \lambda I_m$. Using these definitions we can now state the general version of Theorem \ref{main} in which also provides the worst case bound presented in the main text as (\ref{ellips2}).

\begin{theorem} \label{mainSupple}
Fix any $\delta \in (0,1/4)$. Suppose Assumption 1 \& 2 hold. If $\| \theta_* \|_2 \leq S $ then, with probability at least $1-4\delta$, $\forall t \geq 1$, $\theta_*$ lies in the set 
\begin{equation*}
\mathcal{C}_{m,t} = \bigg \{ \theta \in \mathbb{R}^d : \| \theta_t - \theta \|_{A_t} \leq \beta_{t,\delta} \bigg \}, 
\end{equation*}
where
\begin{equation}
\beta_{t,\delta} = R\sqrt{ 2 \log \frac{\det(B_{t})^{1/2} \det(\lambda I_m)^{-1/2}  }{\delta}} \\ 
+  S \Phi_{t,\delta} \| (A_t^{\dagger})^{1/2} \hat{P}_t \hat{\Sigma}_{t-1} \|_2  + S\sqrt{\lambda}.  \label{ellips1Supple}
\end{equation}
If $\| \hat{x}_{t,i} \|_2 \leq L$ for all $t\geq 1$ and $i \in [K]$, then with probability at least $1-4\delta$, $\forall t \geq t_{w,\delta}$, $\theta_*$ lies in the same set with 
\begin{equation}
\beta_{t,\delta} = R\sqrt{ 2 \log \bigg( \frac{1}{\delta} \bigg ) + m \log \bigg ( 1 + \frac{t L^2}{m \lambda} \bigg )  } \\
+ 2\Gamma S L \sqrt{\frac{\alpha}{K} \log \frac{2d}{\delta} } \sqrt{ \gamma m \log \bigg ( 1 + \frac{t L^2}{m \lambda} \bigg )}+ S \sqrt{\lambda}. 
\label{ellips2Supple}
\end{equation}
\end{theorem}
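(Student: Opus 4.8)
The plan is to follow the self-normalized least-squares machinery of \citet{abbasi2011improved}, but executed inside the estimated $m$-dimensional subspace $\spn(\hat{V}_t)$, with one extra term charging for the gap between $\hat{P}_t$ and $P$. First I would substitute $\mathbf{r}_{t-1}=\mathbf{\hat{X}}_{t-1}^T\theta_*+\mathbf{\eta}_{t-1}$ into $\theta_t=A_t^{\dagger}\big(\hat{P}_t\mathbf{\hat{X}}_{t-1}\mathbf{r}_{t-1}\big)$, split $\theta_*=\hat{P}_t\theta_*+(I_d-\hat{P}_t)\theta_*$, and use the two identities $A_t^{\dagger}A_t=\hat{P}_t$ and $\hat{P}_t\hat{\Sigma}_{t-1}\hat{P}_t=A_t-\lambda\hat{P}_t$. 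This gives
\[\theta_t-\theta_*=-(I_d-\hat{P}_t)\theta_*-\lambda A_t^{\dagger}\hat{P}_t\theta_*+A_t^{\dagger}\hat{P}_t\hat{\Sigma}_{t-1}(I_d-\hat{P}_t)\theta_*+A_t^{\dagger}\hat{P}_t\mathbf{\hat{X}}_{t-1}\mathbf{\eta}_{t-1}.\]
Since $A_t(I_d-\hat{P}_t)=0$, the first summand is annihilated by the $A_t$-seminorm (and is orthogonal to the remaining three, which all lie in $\spn(\hat{P}_t)$); a triangle inequality then bounds $\|\theta_t-\theta_*\|_{A_t}$ by a regularization, a projection-bias, and a noise term, matching the three pieces of $\beta_{t,\delta}$.

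For the regularization term I would use $A_t^{\dagger}=\hat{V}_tB_t^{-1}\hat{V}_t^T\preceq\tfrac1\lambda\hat{P}_t$ (as $B_t\succeq\lambda I_m$), so that $\lambda\|A_t^{\dagger}\hat{P}_t\theta_*\|_{A_t}=\lambda\sqrt{\theta_*^T\hat{P}_tA_t^{\dagger}\hat{P}_t\theta_*}\le\sqrt{\lambda}\,\|\hat{P}_t\theta_*\|_2\le S\sqrt{\lambda}$. For the projection-bias term, $\|A_t^{\dagger}\hat{P}_t\hat{\Sigma}_{t-1}(I_d-\hat{P}_t)\theta_*\|_{A_t}=\|(A_t^{\dagger})^{1/2}\hat{P}_t\hat{\Sigma}_{t-1}(I_d-\hat{P}_t)\theta_*\|_2\le\|(A_t^{\dagger})^{1/2}\hat{P}_t\hat{\Sigma}_{t-1}\|_2\,\|(I_d-\hat{P}_t)\theta_*\|_2$; and since $\theta_*=P\theta_*$ together with $(I_d-\hat{P}_t)P=(I_d-\hat{P}_t)(P-\hat{P}_t)$ gives $\|(I_d-\hat{P}_t)\theta_*\|_2\le\|\hat{P}_t-P\|_2\,S\le\Phi_{t,\delta}S$ on the event of Lemma~\ref{errornormSupple}. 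This reproduces the second term of (\ref{ellips1Supple}).

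The crux is the noise term. Writing $A_t^{\dagger}=\hat{V}_tB_t^{-1}\hat{V}_t^T$ and $\hat{P}_t\mathbf{\hat{X}}_{t-1}\mathbf{\eta}_{t-1}=\hat{V}_t\sum_{s<t}z_s\eta_s$ with $z_s:=\hat{V}_t^T\hat{X}_s$, it collapses to $\|A_t^{\dagger}\hat{P}_t\mathbf{\hat{X}}_{t-1}\mathbf{\eta}_{t-1}\|_{A_t}=\|\sum_{s<t}z_s\eta_s\|_{B_t^{-1}}$ with $B_t=\lambda I_m+\sum_{s<t}z_sz_s^T$, on which I would invoke the self-normalized tail inequality of \citet{abbasi2011improved} \emph{in the $m$ projected coordinates}, producing the first term $R\sqrt{2\log\!\big(\det(B_t)^{1/2}\det(\lambda I_m)^{-1/2}/\delta\big)}$. \textbf{The main obstacle is exactly this step:} the projected features $z_s$ are not predictable, because $\hat{V}_t$ is the PCA of all decision sets up to round $t$ and hence depends on data past round $s-1$. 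I would resolve this by conditioning on the $\sigma$-algebra generated by the (noise-independent) decision sets, under which $\hat{V}_t$ is a fixed matrix and each $z_s$ becomes $F_{s-1}$-measurable, so the inequality applies with fixed regularizer $\lambda I_m$ while $R$-sub-Gaussianity of $\eta_s$ is preserved; uniform validity over $t$ is then obtained by the stopping-time construction already used for Lemma~\ref{errornormSupple}. Note that merely using $\hat{V}_tB_t^{-1}\hat{V}_t^T\preceq Z_t^{-1}$ to pass to the ambient self-normalized bound sidesteps predictability but reintroduces the $d$-dependence we are trying to remove, so retaining the projected normalizer $B_t$ is essential.

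Finally, to obtain (\ref{ellips2Supple}) I would bound the log-determinant via $\tr(\bar{B}_t)\le tL^2$ and AM--GM, giving $\det(B_t)^{1/2}\det(\lambda I_m)^{-1/2}\le(1+tL^2/(m\lambda))^{m/2}$ and hence the $m\log(1+tL^2/(m\lambda))$ inside the first term; bound $\|(A_t^{\dagger})^{1/2}\hat{P}_t\hat{\Sigma}_{t-1}\|_2\le L\sqrt{t}\,\sqrt{\gamma m\log(1+tL^2/(m\lambda))}$ by an elliptical-potential/log-determinant estimate together with $\gamma=L^2/(\lambda\log(1+L^2/\lambda))$; and substitute $\Phi_{t,\delta}\le 2\Gamma\sqrt{\tfrac{\alpha}{tK}\log\tfrac{2d}{\delta}}$ from Lemma~\ref{errornorm} (valid for $t\ge t_{w,\delta}$), whereupon the $\sqrt{t}$ factors cancel and the middle term assumes its stated form. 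A union bound over the failure events of Lemma~\ref{errornormSupple} ($3\delta$) and of the self-normalized inequality ($\delta$) yields probability $1-4\delta$, with the general form (\ref{ellips1Supple}) holding for all $t\ge 1$ (using the trivial $\|\hat{P}_t-P\|_2\le 1$ before warm-up) and the simplified form for $t\ge t_{w,\delta}$.
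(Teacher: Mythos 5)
Your proposal is correct, and structurally it is the paper's proof: the same three-term split of $\theta_t-\theta_*$ (the paper reaches it by pairing with $x=A_t(\theta_t-\theta_*)$ and Cauchy--Schwarz rather than your seminorm triangle inequality, but the resulting noise, projection-bias, and regularization terms are identical), the same bounds $S\sqrt{\lambda}$ and $S\,\Phi_{t,\delta}\,\|(A_t^{\dagger})^{1/2}\hat{P}_t\hat{\Sigma}_{t-1}\|_2$, and, for (\ref{ellips2Supple}), the same trace/AM--GM log-determinant bound and elliptical-potential estimate (the paper's Lemmas~\ref{detAt}, \ref{SelfNormProj}, \ref{SecTerm}) with the same $3\delta+\delta$ accounting. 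The one genuine divergence is the noise term: you invoke the self-normalized inequality of \citet{abbasi2011improved} as a black box in the $m$ projected coordinates, whereas the paper re-derives it from scratch as Theorem~\ref{martin} via the method of mixtures (a Gaussian mixture integral, the bound $\mathbb{E}[M_t]\leq 1$ of Lemma~\ref{expected}, Markov's inequality, and a stopping-time construction). Your route is shorter and, on one key point, more careful: you explicitly flag that the projected features $\hat{V}_t^T\hat{X}_s$ are not predictable because $\hat{V}_t$ depends on decision sets observed after round $s-1$, and you repair this by conditioning on the $\sigma$-algebra of the noise-independent decision sets. The paper's own peeling step (\ref{subgaussianity}) has exactly the same problem --- it conditions on $F_{i-1}$ while treating $\hat{P}_t\hat{X}_{i-1}$ as fixed, which is unjustified for $i<t$ --- and never addresses it, so your conditioning argument is precisely what is needed to make the paper's derivation rigorous as well; the price, which you should state explicitly, is the model assumption that the reward noise is independent of the decision-set process. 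One residual subtlety you share with the paper: since $\hat{V}_t$, and hence $B_t$, changes with $t$, the mixture objects indexed by $t$ do not form a single supermartingale, so the stopping-time step that upgrades the per-round bound to uniformity over $t$ is equally delicate in both arguments; this is a limitation inherited from the paper, not one you introduced.
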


\begin{proof}

Let $S_t \coloneqq \sum_{i=1}^{t} \hat{P}_t \hat{X}_{i-1} \eta_{i-1} = \hat{P}_t \mathbf{X}_{t-1} \pmb{\eta}_{t-1}$. From the definition of $\theta_t$ and $r_t$, we get the following:
\begin{align*}
 \theta_t &= A_t^{\dagger} S_t + A_t^{\dagger} \hat{P}_t \hat{\Sigma}_{t-1} P \theta_* \quad \text{ since } \theta_* \in \spn(V) \\
&= A_t^{\dagger} S_t +  A_t^{\dagger} \big ( \hat{P}_t \hat{\Sigma}_{t-1} (\hat{P}_t + P - \hat{P}_t) + \lambda \hat{P}_t - \lambda \hat{P}_t \big)\theta_* \\
 &= A_t^{\dagger} S_t + \hat{P}_t \theta_* +  A_t^{\dagger} (\hat{P}_t \hat{\Sigma}_{t-1} (P - \hat{P}_t)) \theta_* - \lambda A_t^{\dagger} \theta_*.
 \end{align*}
 Using this, we derive the following for $x = A_t(\theta_t - \theta_*)$:
 \begin{align*}
 x^T \theta_t - x^T \theta_* &= x^T A_t^{\dagger} S_t +  x^T A_t^{\dagger} (\hat{P}_t \hat{\Sigma}_{t-1} (P - \hat{P}_t)) \theta_* - \lambda x^T A_t^{\dagger} \theta_* \\
 &= \langle x, S_t \rangle_{A_t^{\dagger}} + \langle x, \hat{P}_t \hat{\Sigma}_{t-1} (P - \hat{P}_t) \theta_* \rangle_{A_t^{\dagger}} - \lambda \langle x,\theta_* \rangle_{A_t^{\dagger}}. 
 \end{align*}
 Using Cauchy-Schwarz inequality, we can upper bound the magnitude of the difference as follows: 
 \begin{align}
 | x^T \theta_t - x^T \theta_* | &\leq \| x \|_{A_t^{\dagger}} \big ( \| S_t \|_{A_t^{\dagger}} + \| \hat{P}_t \hat{\Sigma}_{t-1} (P - \hat{P}_t) \theta_* \|_{A_t^{\dagger}} + \lambda \| \theta_* \|_{A_t^{\dagger}}	\big ) \nonumber \\
 &\leq \| x \|_{A_t^{\dagger}} \big ( \| S_t \|_{A_t^{\dagger}} + \|  (A_t^{\dagger})^{1/2} \hat{P}_t \hat{\Sigma}_{t-1} (P - \hat{P}_t) \theta_* \|_2  + \sqrt{\lambda} \| \theta_* \|_2	\big ) \label{2normline} \\ 
 &\leq \| x \|_{A_t^{\dagger}} \big ( \| S_t \|_{A_t^{\dagger}} + \| (A_t^{\dagger})^{1/2} \hat{P}_t \hat{\Sigma}_{t-1} \|_2 \|P - \hat{P}_t \|_2 \| \theta_* \|_2 + \sqrt{\lambda} \| \theta_* \|_2	\big ) \nonumber \quad \text{Using C.S. again.} 
 \end{align}
Plugging in $x = A_t(\theta_t - \theta_*)$, we get
\begin{equation*}
\| \theta_t - \theta_* \|^2_{A_t} \leq \| A_t(\theta_t - \theta_*) \|_{A_t^{\dagger}} \bigg ( \| S_t \|_{A_t^{\dagger}}	+  \| (A_t^{\dagger})^{1/2} \hat{P}_t \hat{\Sigma}_{t-1} \|_2 \| (P - \hat{P}_t) \|_2 \| \theta_* \|_2 + \sqrt{\lambda} \| \theta_* \|_2	\bigg ).  
\end{equation*}
Since $\| A_t(\theta_t - \theta_*) \|_{A_t^{\dagger}} = \| \theta_t - \theta_* \|_{A_t}$, dividing both sides with $\| \theta_t - \theta_* \|_{A_t}$ gives and using the fact that $\| \theta_* \| \leq S$,
\begin{equation}
\| \theta_t - \theta_* \|_{A_t} \leq  \| S_t \|_{A_t^{\dagger}}	+  S \| (A_t^{\dagger})^{1/2} \hat{P}_t \hat{\Sigma}_{t-1} \|_2 \| (P - \hat{P}_t) \|_2  + S \sqrt{\lambda} \label{conf_terms}
\end{equation}
We will now bound each term in the (\ref{conf_terms}) separately. The first term is projected version of Theorem 1 in \citep{abbasi2011improved} and second term is the additional term appearing in the confidence interval construction due to non-zero projection error. As it can be seen with the knowledge of true projection matrix the confidence interval reduces to the one in \citep{abbasi2011improved} with replacement of $d$ with $m$. We will first provide the theorem that bounds $ \| S_t \|_{A_t^{\dagger}}$ followed by its proof.

\begin{theorem} \label{martin}
For any $\delta > 0$, with probability at least $1-\delta$, for all $t\geq 1$, 
\begin{equation*}
\| S_{t} \|^2_{A_{t}^{\dagger}} \leq 2 R^2 \log \bigg( \frac{\det(B_{t})^{1/2} \det(\lambda I_m)^{-1/2}}{\delta } \bigg ). 
\end{equation*}
\end{theorem}
\begin{proof}
Without loss of generality, assume that $R=1$ since by appropriately scaling $S_t$, this can be achieved. Let $\lambda \in \mathbb{R}^d$ be a Gaussian random vector which is independent of all the other random variables and has covariance matrix $C^{-1} = \frac{1}{\lambda}I_d$. Consider for any $t \geq 0$,
\begin{equation*}
M_t^{\lambda} = \exp \bigg(\lambda^T S_t - \frac{1}{2} \big ( \lambda^T \sum_{i=1}^t \hat{P}_t \hat{X}_{i-1} \big )^2 \bigg) 
\end{equation*}
Define 
\begin{equation*}
M_t = \mathbb{E}_{\lambda}[M_t^{\lambda} | F_{\infty}]
\end{equation*}
where $F_{\infty}$ is the tail $\sigma$-algebra of the filtration, \textit{i.e.} the $\sigma$-algebra generated by the union of the all the events in the filtration. Thus,
\begin{equation*}
M_t = \int_{\mathbb{R}^d}  \exp \bigg(\lambda^T S_t - \frac{1}{2} \lambda^T \hat{P}_t \hat{\Sigma}_{t-1}  \hat{P}_t \lambda \bigg) f(\lambda) d\lambda
\end{equation*}
where $f(\lambda)$ is the pdf of $\lambda$. The following lemma will be crucial in proving the theorem. 
\begin{lemma} \label{expected}
$\mathbb{E} [M_t] \leq 1$ for all $t \geq 1$.
\end{lemma}
\begin{proof}
\begin{align*}
\mathbb{E} [M_t] &= \mathbb{E} \bigg [ \int_{\mathbb{R}^d}  \exp \bigg(\lambda^T S_t - \frac{1}{2} \lambda^T \hat{P}_t \hat{\Sigma}_{t-1}  \hat{P}_t \lambda \bigg) f(\lambda) d\lambda \bigg ] \\
\mathbb{E} [M_t] &= \int_{\mathbb{R}^d} \mathbb{E} \bigg [ \exp \bigg(\lambda^T S_t - \frac{1}{2} \lambda^T \hat{P}_t \hat{\Sigma}_{t-1}  \hat{P}_t \lambda \bigg) \bigg ]  f(\lambda) d\lambda
\end{align*}
If one can show that $\mathbb{E} \bigg [ \exp \bigg(\lambda^T S_t - \frac{1}{2} \lambda^T \hat{P}_t \hat{\Sigma}_{t-1}  \hat{P}_t \lambda \bigg) \bigg ] \leq 1$, then the claim follows. In the following, we use the law of total expectation. 
\begin{align}
\mathbb{E} \bigg [ \exp \bigg(\lambda^T S_t - \frac{1}{2} \lambda^T \hat{P}_t \hat{\Sigma}_{t-1}  \hat{P}_t \lambda \bigg) \bigg ] &= \mathbb{E} \bigg [ \mathbb{E}_{\eta_{t-1}} \bigg [ \exp \bigg(\lambda^T \sum_{i=1}^t \hat{P}_t \hat{X}_{i-1} \eta_{i-1} - \frac{1}{2} \lambda^T \hat{P}_t \big( \sum_{i=1}^t \hat{X}_{i-1} \hat{X}_{i-1}^T \big) \hat{P}_t \lambda \bigg) \bigg | F_{t-1} \bigg ] \bigg] \nonumber \\
&\leq \mathbb{E} \bigg [ \exp \bigg(\lambda^T \sum_{i=1}^{t-1} \hat{P}_t \hat{X}_{i-1} \eta_{i-1} - \frac{1}{2} \lambda^T \hat{P}_t \big( \sum_{i=1}^{t-1} \hat{X}_{i-1} \hat{X}_{i-1}^T \big) \hat{P}_t \lambda \bigg) \bigg ] \label{subgaussianity} \\
&= \mathbb{E} \bigg [ \mathbb{E}_{\eta_{t-2}} \bigg [ \exp \bigg(\lambda^T \sum_{i=1}^{t-1} \hat{P}_t \hat{X}_{i-1} \eta_{i-1} - \frac{1}{2} \lambda^T \hat{P}_t \big( \sum_{i=1}^{t-1} \hat{X}_{i-1} \hat{X}_{i-1}^T \big) \hat{P}_t \lambda \bigg) \bigg | F_{t-2} \bigg ] \bigg] \nonumber \\
&\shortvdotswithin{=} \nonumber \\
&\leq 1. \nonumber
\end{align}
where \ref{subgaussianity} follows from the assumption that $\eta_t$ is conditionally $R$-sub-gaussian.
\end{proof}
We will use Lemma \ref{expected} shortly but we first calculate $M_t$. For a positive definite matrix $K$, define $g(K) \coloneqq \sqrt{(2\pi)^m/\det(K)} = \int_{\mathbb{R}^m} \exp(-\frac{1}{2}x^T K x)dx$. One can calculate $M_t$ as follows,

\small
\begin{align}
M_t &= \int_{\mathbb{R}^d}  \exp \bigg(\lambda^T S_t - \frac{1}{2} \lambda^T \bar{A_t} \lambda \bigg) f(\lambda) d\lambda \nonumber \\
&= \int_{\mathbb{R}^m}  \exp \bigg( \bar{\lambda}^T \hat{V}_t^T \mathbf{X}_t \pmb{\eta}_t - \frac{1}{2} \bar{\lambda}^T \hat{V}_t^T \mathbf{X}_t \mathbf{X}_t^T \hat{V}_t \bar{\lambda} \bigg) f(\bar{\lambda}) d\bar{\lambda} \qquad \text{change of integration with } \bar{\lambda} = \hat{V}_t^T \lambda \nonumber \\
&= \int_{\mathbb{R}^m} \exp \bigg( -\frac{1}{2} \| \bar{\lambda} - \bar{B}_t^{-1} \hat{V}_t^T \mathbf{X}_t \pmb{\eta}_t \|^2_{\bar{B}_t} + \frac{1}{2} \| \hat{V}_t^T \mathbf{X}_t \pmb{\eta}_t \|^2_{\bar{B}_t^{-1}} \bigg ) f(\bar{\lambda}) d\bar{\lambda} \nonumber  \\
&= \frac{\exp \big( \frac{1}{2} \| \hat{V}_t^T \mathbf{X}_t \pmb{\eta}_t \|^2_{\bar{B}_t^{-1}} \big)}{g(\bar{C})} \int_{\mathbb{R}^m} \exp \bigg( -\frac{1}{2} \big ( \| \bar{\lambda}  - \bar{B}_t^{-1} \hat{V}_t^T \mathbf{X}_t \pmb{\eta}_t \|^2_{\bar{B}_t} + \| \bar{\lambda} \|^2_{\bar{C}} \big) \bigg ) d\bar{\lambda} \qquad \text{where } \bar{C} = \hat{V}_t^T C \hat{V}_t \label{pdfremove}\\
&= \frac{\exp \big( \frac{1}{2} \| \hat{V}_t^T \mathbf{X}_t \pmb{\eta}_t \|^2_{\bar{B}_t^{-1}} \big)}{g(\bar{C})} \int_{\mathbb{R}^m} \exp \bigg( -\frac{1}{2} \big ( \| \bar{\lambda}  - (\bar{C} + \bar{B}_t)^{-1} \hat{V}_t^T \mathbf{X}_t \pmb{\eta}_t \|^2_{\bar{C} + \bar{B}_t} + \| \hat{V}_t^T \mathbf{X}_t \pmb{\eta}_t \|^2_{\bar{B}_t^{-1}} - \| \hat{V}_t^T \mathbf{X}_t \pmb{\eta}_t \|^2_{(\bar{C} + \bar{B}_t)^{-1}} \big) \bigg ) d\bar{\lambda}  \label{simpleeq} \\
&= \frac{\exp \big( \frac{1}{2} \| \hat{V}_t^T \mathbf{X}_t \pmb{\eta}_t \|^2_{(\bar{C} + \bar{B}_t)^{-1}} \big)}{g(\bar{C})} \int_{\mathbb{R}^m} \exp \bigg( -\frac{1}{2} \big ( \| \bar{\lambda}  - (\bar{C} + \bar{B}_t)^{-1} \hat{V}_t^T \mathbf{X}_t \pmb{\eta}_t \|^2_{\bar{C} + \bar{B}_t} \big) \bigg ) d\bar{\lambda} \nonumber \\
&= \frac{\exp \big( \frac{1}{2} \| \hat{V}_t^T \mathbf{X}_t \pmb{\eta}_t \|^2_{(\bar{C} + \bar{B}_t)^{-1}} \big)}{g(\bar{C})} g(\bar{C} + \bar{B}_t)  =  \bigg ( \frac{\det(\bar{C})}{\det(\bar{C} + \bar{B}_t)} \bigg ) ^{1/2} \exp \big( \frac{1}{2} \| S_t \|^2_{(C+\bar{A_t})^{\dagger}} \big),  \nonumber 
\end{align}
\normalsize
where (\ref{pdfremove}) follows from the fact that $f(\bar{\lambda}) = \frac{\exp(-\frac{1}{2} \bar{\lambda}^T  \bar{C} \bar{\lambda} )}{\sqrt{(2\pi)^m \det(\bar{C}^{-1})}}$  and (\ref{simpleeq}) follows since 
\begin{align*}
\| \bar{\lambda}  - \bar{B}_t^{-1} \hat{V}_t^T \mathbf{X}_t \pmb{\eta}_t \|^2_{\bar{B}_t} + \| \bar{\lambda} \|^2_{\bar{C}}  &=  \| \bar{\lambda}  - (\bar{C} + \bar{B}_t)^{-1} \hat{V}_t^T \mathbf{X}_t \pmb{\eta}_t \|^2_{\bar{C} + \bar{B}_t} + \| \bar{B}_t^{-1} \hat{V}_t^T \mathbf{X}_t \pmb{\eta}_t \|^2_{\bar{B}_t} - \| \hat{V}_t^T \mathbf{X}_t \pmb{\eta}_t \|^2_{(\bar{C} + \bar{B}_t)^{-1}} \\ &= \| \bar{\lambda}  - (\bar{C} + \bar{B}_t)^{-1} \hat{V}_t^T \mathbf{X}_t \pmb{\eta}_t \|^2_{\bar{C} + \bar{B}_t} + \| \hat{V}_t^T \mathbf{X}_t \pmb{\eta}_t \|^2_{\bar{B}_t^{-1}} - \| \hat{V}_t^T \mathbf{X}_t \pmb{\eta}_t \|^2_{(\bar{C} + \bar{B}_t)^{-1}}.
\end{align*}
Consider the following equivalence: 
\begin{align}
\Pr \bigg [\| S_t \|^2_{(C+\bar{A_t})^{\dagger}} > 2 \log \bigg( \frac{\det(\bar{C} + \bar{B}_t)^{1/2}}{\delta \det(\bar{C})^{1/2}} \bigg ) \bigg ] &= \Pr  \Bigg[ \frac{\exp \big( \frac{1}{2} \| S_t \|^2_{(C+\bar{A_t})^{\dagger}} \big) \delta }{\big( \frac{ \det(\bar{C} + \bar{B}_t)}{\det(\bar{C})} \big )^{1/2} } > 1 \Bigg] \nonumber \\
&\leq \mathbb{E}\Bigg[ \frac{\exp \big( \frac{1}{2} \| S_t \|^2_{(C+\bar{A_t})^{\dagger}} \big) \delta }{\big( \frac{ \det(\bar{C} + \bar{B}_t)}{\det(\bar{C})} \big )^{1/2} } \Bigg] \label{markov} \\
& = \mathbb{E}_{F_t}[M_t] \delta \leq \delta \label{prob_exp}
\end{align}
where \ref{markov} follows from Markov's inequality and \ref{prob_exp} is due to Lemma \ref{expected}. Notice that, $A_t = \bar{A}_t + C$ and $B_t = \bar{B}_t + \bar{C}$. We will once again use a stopping time construction. Define the bad event, 

\begin{equation*}
E_{t}(\delta) = \bigg \{ \| S_{t} \|^2_{A_{t}^{\dagger}} > 2 R^2 \log \bigg( \frac{\det(B_{t})^{1/2}}{\delta \det(\bar{C})^{1/2}} \bigg) \bigg \}.
\end{equation*}
We are interested in the probability of $\bigcup\limits_{t \geq 0} E_{t}(\delta)$. Define $\tau(\omega) = \min\{t \geq 0 : \omega \in E_{t}(\delta)\}$, with the convention that $\min \emptyset = \infty$. Then, $\tau$ is a stopping time. Thus, $\bigcup\limits_{t \geq 0} E_{t}(\delta) = \{\omega : \tau(\omega) < \infty \}$. The Theorem \ref{martin} can be obtained as follows:
\begin{align*}
  \Pr \bigg[ \bigcup\limits_{t \geq 0} E_{t}(\delta) \bigg] = \Pr[\tau < \infty] &=  \Pr \bigg [ \| S_{\tau} \|^2_{A_{\tau}^{\dagger}} > 2 R^2 \log \bigg(\frac{\det(B_{\tau})^{1/2} \det(\bar{C})^{-1/2}}{\delta }\bigg), \tau < \infty \bigg ] \\
&\leq  \Pr \bigg [ \| S_{\tau} \|^2_{A_{\tau}^{\dagger}} > 2 R^2 \log \bigg(\frac{\det(B_{\tau})^{1/2} \det(\bar{C})^{-1/2}}{\delta }\bigg), \bigg ] \leq \delta.
\end{align*}
Since $C = \lambda I_d$, inserting $\bar{C} = \lambda I_m$ proves the theorem.
\end{proof}
Combining Theorem \ref{martin} with (\ref{conf_terms}) and  Lemma \ref{error_general}, we obtain the first statement (\ref{ellips1Supple}) of Theorem \ref{mainSupple}:
\begin{equation}
\| \theta_t - \theta_* \|_{A_t} \leq R\sqrt{ 2 \log \frac{\det(B_{t})^{1/2} \det(\lambda I_m)^{-1/2}  }{\delta}} +  S \Phi_{t,\delta} \| (A_t^{\dagger})^{1/2} \hat{P}_t \hat{\Sigma}_{t-1} \|_2 + S\sqrt{\lambda} \label{first}
\end{equation}

To prove the second statement of the theorem, we need to bound $\| (A_t^{\dagger})^{1/2} \hat{P}_t \hat{\Sigma}_{t-1} \|_2$ with the help of Assumptions 1 and 2. Define $B_{t,s} = \hat{V}_t^T (\hat{\Sigma}_{s-1} + \lambda I_d) \hat{V}_t$. Note that $B_{t,t} = B_t$. Now consider the following lemmas which will be used to bound $\| (A_t^{\dagger})^{1/2} \hat{P}_t \hat{\Sigma}_{t-1} \|_2$. 

\begin{lemma} \label{detAt}
Suppose $\| \hat{x}_{t,i} \|_2 \leq L$ for all $t\geq 1$ and $i \in [K]$. Then, $\det(B_t) \leq \bigg ( \lambda + \frac{t L^2}{m} \bigg )^m$
\end{lemma}
\begin{proof}
$\det(B_t) = \det(\hat{V}_t^T \hat{\Sigma}_{t-1} \hat{V}_t  + \lambda I_m) = \alpha_1 \alpha_2 \cdots \alpha_m$ where $\alpha_i$s are the eigenvalues of $B_t$. Notice that 
\begin{equation*}
\sum_{i=1}^m \alpha_i = m \lambda + \tr\bigg( \hat{V}_t^T \big (\sum_{i=1}^{t} \hat{X}_{i-1} \hat{X}_{i-1}^T \big) \hat{V}_t  \bigg) = m \lambda + \sum_{i=1}^{t} \tr \bigg( \hat{V}_t^T \hat{X}_{i-1} \hat{X}_{i-1}^T \hat{V}_t  \bigg) \leq m\lambda + \sum_{i=1}^{t} \| \hat{X}_{i-1} \|^2_2 \leq m\lambda + tL^2.
\end{equation*}
Using AM-GM inequality, \textit{i.e}$, \sqrt[m]{\alpha_1 \alpha_2 \cdots \alpha_m} \leq \frac{1}{m} \sum_{i=1}^m \alpha_i$, we get
\begin{equation*}
\alpha_1 \alpha_2 \cdots \alpha_m \leq \bigg ( \lambda + \frac{t L^2}{m} \bigg )^m.
\end{equation*}
\end{proof}
\begin{lemma} \label{SelfNormProj} Suppose $\| \hat{x}_{t,i} \|_2 \leq L$ for all $t\geq 1$ and $i \in [K]$. Then 
\begin{equation*}
\sum_{i=1}^t \big \| \hat{V}_t^T \hat{X}_{i-1} \big \|^2_{B_{t,i-1}^{-1}} \leq \gamma m \log \bigg ( 1 + \frac{t L^2}{m \lambda} \bigg ) 
\end{equation*}
\end{lemma}
\begin{proof}
Analyzing $\det(B_t)$ at round t, we get the following:
\begin{align*}
\det\big(B_{t,t}\big) &= \det\big(B_{t,t-1} + \hat{V}_t^T \hat{X}_{t-1} \hat{X}_{t-1}^T \hat{V}_t \big) = \det\bigg(B_{t,t-1}^{1/2} \big(I_m + B_{t,t-1}^{-1/2} \hat{V}_t^T \hat{X}_{t-1} \hat{X}_{t-1}^T \hat{V}_t B_{t,t-1}^{-1/2} \big) B_{t,t-1}^{1/2} \bigg) \\
&= \det\big(B_{t,t-1}\big) \big(1 + \| \hat{V}_t^T \hat{X}_{t-1} \|^2_{B_{t,t-1}^{-1}}\big) = \lambda^m \prod_{i=1}^t \big(1 + \|\hat{V}_t^T \hat{X}_{i-1} \|^2_{B_{t,i-1}^{-1}}\big)
\end{align*}
Thus, $\sum_{i=1}^t \log (1+ \| \hat{V}_t^T \hat{X}_{i-1}  \|^2_{B_{t,i-1}^{-1}}) = \log \frac{\det(B_t)}{\lambda^m} \leq m \log \bigg ( 1 + \frac{t L^2}{m \lambda} \bigg ) $ where inequality follows from Lemma \ref{detAt}. Recall the definition of $\gamma = \frac{L^2}{\lambda \log \big(1+ \frac{L^2}{\lambda}\big ) }$. Since $\| \hat{x}_{t,i} \|_2 \leq L$ for all $t\geq 1$ and $i \in [K]$, $ \| \hat{V}_t^T \hat{X}_{i-1} \|^2_{B_{t,i-1}^{-1}} \leq \frac{L^2}{\lambda}$. Using $\| \hat{V}_t^T \hat{X}_{i-1} \|^2_{B_{t,i-1}^{-1}} \leq \gamma \log(1+\| \hat{V}_t^T \hat{X}_{i-1} \|^2_{B_{t,i-1}^{-1}})$, which is true for $\| \hat{V}_t^T \hat{X}_{i-1} \|^2_{B_{t,i-1}^{-1}}  \leq \frac{L^2}{\lambda} $,  we get
\begin{equation*}
\sum_{i=1}^t \big \| \hat{V}_t^T \hat{X}_{i-1} \big \|^2_{B_{t,i-1}^{-1}} \leq \gamma \sum_{i=1}^t \log (1+ \| \hat{V}_t^T \hat{X}_{i-1}  \|^2_{B_{t,i-1}^{-1}}) 
\end{equation*}
The lemma follows immediately.
\end{proof} 
Finally, we provide the bound on $\| (A_t^{\dagger})^{1/2} \hat{P}_t \hat{\Sigma}_{t-1} \|_2$ as follows,
\begin{lemma} \label{SecTerm} Suppose $\| \hat{x}_{t,i} \|_2 \leq L$ for all $t\geq 1$ and $i \in [K]$. Then, $ \| (A_t^{\dagger})^{1/2} \hat{P}_t \hat{\Sigma}_{t-1} \|_2 \leq L \sqrt{t} \sqrt{ \gamma m} \sqrt{ \log \bigg ( 1 + \frac{t L^2}{m \lambda} \bigg )}$. 
\end{lemma}
\begin{proof}
Recall the definition of $\hat{\Sigma}_{t-1} = \sum_{i=1}^{t-1} \hat{X}_{i} \hat{X}_{i}^T $. Using this, we get
\begin{align*}
\| (A_t^{\dagger})^{1/2} \hat{P}_t \hat{\Sigma}_{t-1} \|_2 &= \bigg \| \sum_{i=1}^{t} (A_t^{\dagger})^{1/2} \hat{P}_t \hat{X}_{i-1} \hat{X}_{i-1}^T \bigg \|_2 \\
&\leq \sum_{i=1}^t \big \| (A_t^{\dagger})^{1/2} \hat{P}_t \hat{X}_{i-1} \hat{X}_{i-1}^T \big \|_2 \quad \text{Using Weyl's inequality for singular values} \\
&\leq \sum_{i=1}^t \big \| (A_t^{\dagger})^{1/2} \hat{P}_t \hat{X}_{i-1} \big \|_2 \| \hat{X}_{i-1} \|_2 \quad \text{From Cauchy Schwarz} \\
&\leq L \sum_{i=1}^t \big \| (A_t^{\dagger})^{1/2} \hat{P}_t \hat{X}_{i-1} \big \|_2 \quad \text{From $\| \hat{x}_{t,i} \|_2 \leq L$} \\
&= L \sum_{i=1}^t \big \| \hat{P}_t \hat{X}_{i-1} \big \|_{A_t^{\dagger}} \\
&=  L \sum_{i=1}^t \big \| \hat{V}_t^T \hat{X}_{i-1} \big \|_{B_t^{-1}} \quad \text{From the equality that } \hat{X}_{i-1}^T \hat{V}_t B_t^{-1} \hat{V}_t^T \hat{X}_{i-1} = \hat{X}_{i-1}^T \hat{P}_t A_t^{\dagger} \hat{P}_t \hat{X}_{i-1} \\
&\leq L \sum_{i=1}^t \big \| \hat{V}_t^T \hat{X}_{i-1} \big \|_{B_{t,i-1}^{-1}} \quad \text{Since at round t, $B_{t,i} = B_{t,i-1} + \hat{V}_t^T \hat{X}_{i} \hat{X}_{i}^T \hat{V}_t $} \\
&\leq L \sqrt{t} \sqrt{ \sum_{i=1}^t \big \| \hat{V}_t^T \hat{X}_{i-1} \big \|^2_{B_{t,i-1}^{-1}} } \leq L \sqrt{ \gamma m t} \sqrt{ \log \bigg ( 1 + \frac{t L^2}{m \lambda} \bigg )} \quad \text{ From Lemma \ref{SelfNormProj}}
\end{align*}
\end{proof}
Now that we obtain bounds on every term at (\ref{first}), we can obtain the second statement of Theorem \ref{mainSupple} directly. For the described setting in the theorem, using Lemma \ref{detAt} and Lemma \ref{SecTerm}, we get the following
\begin{align*}
\| \theta_t - \theta_* \|_{A_t} &\leq R\sqrt{ 2 \log \bigg( \frac{1}{\delta} \bigg ) + m \log \bigg ( 1 + \frac{t L^2}{m \lambda} \bigg )  } + SL \sqrt{ \gamma m t} \sqrt{ \log \bigg ( 1 + \frac{t L^2}{m \lambda} \bigg )}  \| (P - \hat{P}_t) \|_2 + S \sqrt{\lambda} \\
&\leq R\sqrt{ 2 \log \bigg( \frac{1}{\delta} \bigg ) + m \log \bigg ( 1 + \frac{t L^2}{m \lambda} \bigg )  } + 2\Gamma S L \sqrt{\frac{\alpha}{K} \log \frac{2d}{\delta} } \sqrt{ \gamma m \log \bigg ( 1 + \frac{t L^2}{m \lambda} \bigg )}+ S \sqrt{\lambda}
\end{align*}
where the last inequality gives (\ref{ellips2}) due to Lemma \ref{errornorm}. 
\end{proof}
\section{Regret Analysis, Proof of Theorem \ref{RegretAnalysis}} \label{RegretSupple}
First consider the following lemma. 
\begin{lemma} \label{rewarddif}
At round $k$, let $\hat{x} \in D_k$. If $\nu \in C_k$, then
\begin{equation*}
|(\hat{P}_k \hat{x})^T (\nu - \theta_k) | \leq \beta_{k,\delta} \| \hat{x} \|_{A^{\dagger}_{k}}. 
\end{equation*}
\end{lemma}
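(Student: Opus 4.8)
The plan is to obtain the bound by a single application of a generalized Cauchy--Schwarz inequality in the (degenerate) inner product induced by $A_k^{\dagger}$, after recording the precise structure of $A_k$ and its pseudoinverse. First I would use the factorization already noted in the confidence-set analysis, namely $A_k = \hat{P}_k(\hat{\Sigma}_{k-1}+\lambda I_d)\hat{P}_k = \hat{V}_k B_k \hat{V}_k^T$ with $B_k$ a full-rank $m\times m$ matrix and $\hat{V}_k^T\hat{V}_k = I_m$. From this one reads off the Moore--Penrose inverse $A_k^{\dagger} = \hat{V}_k B_k^{-1}\hat{V}_k^T$, which is verified by checking the four defining identities directly using $\hat{V}_k^T\hat{V}_k = I_m$. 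In particular $A_k A_k^{\dagger} = A_k^{\dagger}A_k = \hat{P}_k$, so both the range and corange of $A_k$ equal $\spn(\hat{V}_k)$, and hence $\hat{P}_k A_k^{\dagger}\hat{P}_k = A_k^{\dagger}$ and $(A_k^{\dagger})^{1/2}A_k^{1/2} = \hat{P}_k$.

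Next, since $\nu \in C_k = \mathcal{C}_{m,k}\cap\mathcal{C}_{d,k} \subseteq \mathcal{C}_{m,k}$, the definition of the projected confidence set gives $\|\theta_k - \nu\|_{A_k}\leq \beta_{k,\delta}$. Writing $u \coloneqq \hat{P}_k\hat{x}$ and $y \coloneqq \nu-\theta_k$, I would observe that $u \in \spn(\hat{V}_k) = \operatorname{range}(A_k)$, so $\hat{P}_k u = u$, and therefore
\begin{equation*}
u^T y = u^T \hat{P}_k y = u^T (A_k^{\dagger})^{1/2} A_k^{1/2} y = \big((A_k^{\dagger})^{1/2} u\big)^T \big(A_k^{1/2} y\big) \leq \|u\|_{A_k^{\dagger}}\,\|y\|_{A_k} \leq \|u\|_{A_k^{\dagger}}\,\beta_{k,\delta}.
\end{equation*}
Finally, the identity $\hat{P}_k A_k^{\dagger}\hat{P}_k = A_k^{\dagger}$ collapses $\|u\|_{A_k^{\dagger}}^2 = \hat{x}^T \hat{P}_k A_k^{\dagger}\hat{P}_k\hat{x} = \hat{x}^T A_k^{\dagger}\hat{x} = \|\hat{x}\|_{A_k^{\dagger}}^2$, which delivers the claimed inequality $|(\hat{P}_k\hat{x})^T(\nu-\theta_k)| \leq \beta_{k,\delta}\|\hat{x}\|_{A_k^{\dagger}}$.

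The only genuinely delicate point is the careful handling of the pseudoinverse: because $A_k$ has rank $m < d$, the bilinear form associated with $A_k^{\dagger}$ is degenerate, and the factorization $u^T y = ((A_k^{\dagger})^{1/2}u)^T(A_k^{1/2}y)$ is legitimate only because $u = \hat{P}_k\hat{x}$ lies in $\operatorname{range}(A_k) = \spn(\hat{V}_k)$, i.e. $\hat{P}_k u = u$. Establishing this range condition together with the explicit form $A_k^{\dagger} = \hat{V}_k B_k^{-1}\hat{V}_k^T$ is the crux; once it is in place, both the Cauchy--Schwarz estimate and the simplification $\|\hat{P}_k\hat{x}\|_{A_k^{\dagger}} = \|\hat{x}\|_{A_k^{\dagger}}$ are immediate.
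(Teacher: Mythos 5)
Your proof is correct and follows essentially the same route as the paper's: both hinge on the identity $(A_k^{\dagger})^{1/2}A_k^{1/2}=\hat{P}_k$, a single Cauchy--Schwarz step separating $\|\hat{P}_k\hat{x}\|_{A_k^{\dagger}}$ from $\|\nu-\theta_k\|_{A_k}\leq\beta_{k,\delta}$, and the collapse $\|\hat{P}_k\hat{x}\|_{A_k^{\dagger}}=\|\hat{x}\|_{A_k^{\dagger}}$ via $\hat{P}_kA_k^{\dagger}\hat{P}_k=A_k^{\dagger}$. The only difference is that you explicitly verify the Moore--Penrose form $A_k^{\dagger}=\hat{V}_kB_k^{-1}\hat{V}_k^T$ and the associated range conditions, which the paper asserts without proof.
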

\begin{proof}
\begin{align*}
|(\hat{P}_k \hat{x})^T (\nu - \theta_k) | &= | (\hat{P}_k \hat{x})^T (A_k^{\dagger})^{1/2} A^{1/2}_{k} (\nu - \theta_k)| \qquad \text{since } (A_k^{\dagger})^{1/2} A^{1/2}_{k} = \hat{P}_k \\
&= |(A_k^{\dagger})^{1/2} \hat{P}_k \hat{x})^T A^{1/2}_{k} (\nu - \theta_k)| \\
&\leq \|(A_k^{\dagger})^{1/2} \hat{P}_k \hat{x} \|_2 \| A^{1/2}_{k} (\nu - \theta_k) \|_2 \quad \text{by C.S.} \\
&\leq \beta_{k,\delta} \| \hat{P}_k \hat{x} \|_{A^{\dagger}_{k}} \qquad \text{since } \nu \in C_k. \\
&= \beta_{k,\delta} \| \hat{V}_k^T \hat{x} \|_{B^{-1}_{k}} = \beta_{k,\delta} \| \hat{x} \|_{A^{\dagger}_{k}}
\end{align*}

\end{proof}
Before providing the proof of Theorem \ref{RegretAnalysis}, consider the following lemmas:

\begin{lemma} \label{proj_mth}
For all $t\geq t_{w,\delta} $, with probability at least $1-\delta$
\begin{equation}
\lambda_m(\hat{P}_t \hat{\Sigma}_{t-1} \hat{P}_t) \geq (t-1)(\lambda_- + \sigma^2) - \sqrt{t-1} \bigg ( 4L^2 \Gamma \sqrt{\frac{\alpha}{K} \log \frac{2d}{\delta} } + \sqrt{2L(\lambda_- + \sigma^2) \log \frac{m}{\delta}} \bigg)
\end{equation}
Define $t_{r,\delta} = 1 + \bigg( \frac{ 8L^2 \Gamma \sqrt{\frac{\alpha}{K} \log \frac{2d}{\delta} } + \sqrt{8L(\lambda_- + \sigma^2) \log \frac{m}{\delta}} }{\lambda_- + \sigma^2}\bigg)^2$. Then for all $t \geq t_{r,\delta} $, with probability at least $1-\delta$,
\begin{equation}
\lambda_m(\hat{P}_t \hat{\Sigma}_{t-1} \hat{P}_t) \geq \frac{(\lambda_- + \sigma^2)}{2}(t-1).
\end{equation}
\end{lemma}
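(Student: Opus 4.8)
The plan is to relate the projected design matrix to its counterpart on the \emph{true} subspace and then invoke matrix concentration. Since $\hat{P}_t = \hat{V}_t\hat{V}_t^T$ has rank $m$, the eigenvalue in question equals $\lambda_{min}(\hat{V}_t^T\hat{\Sigma}_{t-1}\hat{V}_t)$, the smallest eigenvalue of the sample second--moment matrix compressed onto the estimated subspace. The first move is to replace $\hat{P}_t$ by the true projection $P$: by Weyl's inequality,
\[
\lambda_m(\hat{P}_t\hat{\Sigma}_{t-1}\hat{P}_t) \geq \lambda_m(P\hat{\Sigma}_{t-1}P) - \big\|\hat{P}_t\hat{\Sigma}_{t-1}\hat{P}_t - P\hat{\Sigma}_{t-1}P\big\|_2 ,
\]
and the perturbation term is controlled via the telescoping identity $\hat{P}_t\hat{\Sigma}_{t-1}\hat{P}_t - P\hat{\Sigma}_{t-1}P = (\hat{P}_t-P)\hat{\Sigma}_{t-1}\hat{P}_t + P\hat{\Sigma}_{t-1}(\hat{P}_t-P)$, giving $\|\hat{P}_t\hat{\Sigma}_{t-1}\hat{P}_t - P\hat{\Sigma}_{t-1}P\|_2 \leq 2\|\hat{P}_t - P\|_2\|\hat{\Sigma}_{t-1}\|_2 \leq 2\cdot\tfrac{\phi_{\delta}}{\sqrt{t}}\cdot(t-1)L^2$, using Lemma~\ref{errornorm} for the projection error (valid for $t\geq t_{w,\delta}$) together with $\|\hat{\Sigma}_{t-1}\|_2 \leq (t-1)L^2$. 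Since $\tfrac{t-1}{\sqrt{t}}\leq\sqrt{t-1}$ and $\phi_{\delta}=2\Gamma\sqrt{\tfrac{\alpha}{K}\log\tfrac{2d}{\delta}}$, this reproduces exactly the first fluctuation term $4L^2\Gamma\sqrt{\tfrac{\alpha}{K}\log\tfrac{2d}{\delta}}\,\sqrt{t-1}$.

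The remaining task is to lower bound $\lambda_m(P\hat{\Sigma}_{t-1}P) = \lambda_{min}\big(\sum_{i=1}^{t-1}(V^T\hat{X}_i)(V^T\hat{X}_i)^T\big)$, a sum of $m\times m$ rank-one PSD matrices. Here I would apply the Matrix Chernoff bound (Theorem~\ref{chernoff}) with ambient dimension $m$: each summand satisfies $\lambda_{max}\leq\|V^T\hat{X}_i\|_2^2\leq L^2$, and the conditional projected second moment obeys $V^T\mathbb{E}[\hat{X}_i\hat{X}_i^T\mid F_{i-1}]V \succeq V^T(\Sigma_x + \sigma^2 I_d)V = V^T\Sigma_x V + \sigma^2 I_m$, whose smallest eigenvalue is at least $\lambda_m(\Sigma_x)+\sigma^2 = \lambda_- + \sigma^2$. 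Thus $\mu_{min}\geq (t-1)(\lambda_-+\sigma^2)$, and inverting the tail $m\exp(-(1-\epsilon)^2\mu_{min}/2L')$ at level $\delta$ yields, with probability $1-\delta$, $\lambda_{min}(V^T\hat{\Sigma}_{t-1}V)\geq \mu_{min} - \sqrt{2L'\,\mu_{min}\log\tfrac{m}{\delta}}$, which after substituting $\mu_{min}=(t-1)(\lambda_-+\sigma^2)$ matches the second fluctuation term $\sqrt{2L(\lambda_-+\sigma^2)\log\tfrac{m}{\delta}}\,\sqrt{t-1}$. Combining this with the first display, union bounding over the projection-error event and the concentration event, and lifting the bound to hold uniformly in $t$ through the same stopping-time device used in the proof of Lemma~\ref{errornorm}, establishes the first inequality.

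The main obstacle is that the chosen actions $\hat{X}_i$ are selected \emph{adaptively}, so $\{(V^T\hat{X}_i)(V^T\hat{X}_i)^T\}$ is not an independent sequence and the conditional second moment need not equal $\Sigma_x+\sigma^2 I_d$ verbatim; the point that rescues the argument is that the isotropic perturbation $\sigma^2 I_d$ forces a floor of $(\lambda_-+\sigma^2)I_m$ on the \emph{conditional} projected covariance no matter which arm is played, so I would route the concentration step through a predictable-quadratic-variation (matrix martingale) version of the Chernoff inequality rather than the i.i.d.\ form, while preserving the $\log\tfrac{m}{\delta}$ dimension factor. Finally, the second statement is pure algebra: requiring the total fluctuation to be at most half the leading term, i.e.\ solving $(t-1)(\lambda_-+\sigma^2) \geq 2\sqrt{t-1}\big(4L^2\Gamma\sqrt{\tfrac{\alpha}{K}\log\tfrac{2d}{\delta}}+\sqrt{2L(\lambda_-+\sigma^2)\log\tfrac{m}{\delta}}\big)$ for $t$, gives precisely the stated $t_{r,\delta}$ upon noting $2\sqrt{2L(\lambda_-+\sigma^2)\log\tfrac{m}{\delta}}=\sqrt{8L(\lambda_-+\sigma^2)\log\tfrac{m}{\delta}}$.
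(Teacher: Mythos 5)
Your proposal retraces the paper's own proof almost step for step: the same decomposition $\hat{P}_t\hat{\Sigma}_{t-1}\hat{P}_t=(\hat{P}_t-P)\hat{\Sigma}_{t-1}\hat{P}_t+P\hat{\Sigma}_{t-1}(\hat{P}_t-P)+P\hat{\Sigma}_{t-1}P$ combined with Weyl's inequality, the same use of Lemma~\ref{errornorm} together with $\|\hat{\Sigma}_{t-1}\|_2\leq (t-1)L^2$ and $(t-1)/\sqrt{t}\leq\sqrt{t-1}$ to produce the fluctuation term $4L^2\Gamma\sqrt{(\alpha/K)\log(2d/\delta)}\,\sqrt{t-1}$, the same application of the Matrix Chernoff bound (Theorem~\ref{chernoff}) in dimension $m$ to $V^T\hat{\Sigma}_{t-1}V$ with the same tail inversion, the same stopping-time device for uniformity in $t$, and the same algebra with $C=(\lambda_-+\sigma^2)/2$ for the second claim.

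The one place you deviate is also the most interesting one. You correctly observe that the summands $V^T\hat{X}_i\hat{X}_i^TV$ are \emph{not} independent, since $\hat{X}_i$ is chosen adaptively, so Theorem~\ref{chernoff} as stated does not literally apply; the paper's proof silently ignores this and moreover asserts the equality $\lambda_{\min}\big(\mathbb{E}[\sum_{j}V^T\hat{X}_j\hat{X}_j^TV]\big)=(t-1)(\lambda_-+\sigma^2)$, which treats each selected action as if it had the unconditional law of a generic arm. Your instinct to route the concentration through a matrix-martingale (Freedman-type) inequality is the right repair for the independence issue. However, the claim your repair rests on --- that the conditional projected second moment satisfies $\mathbb{E}[V^T\hat{X}_i\hat{X}_i^TV\mid F_{i-1}]\succeq(\lambda_-+\sigma^2)I_m$ ``no matter which arm is played'' --- is asserted rather than proved, and it is false in general: the optimistic selection is a function of the realized action vectors, and conditioning an argmax on its own realization can deflate second moments. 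A one-dimensional, zero-mean instance of the model ($d=m=1$, $\sigma^2\to 0$, $K=2$): if each arm takes value $1/3$ with probability $0.9$ and $-3$ with probability $0.1$, every arm has second moment $\lambda_-=1$, yet the maximizing arm has conditional second moment $0.01\cdot 9+0.99\cdot\tfrac{1}{9}=0.2<1$. So your patch relocates the gap rather than closing it; to be fair, the paper's own proof has exactly the same unpatched hole (its expectation equality plus the i.i.d.\ Chernoff bound), and any complete argument would need to control the selection bias induced by the algorithm's arm choice. A smaller shared wrinkle: you bound $\lambda_{\max}(V^T\hat{X}_i\hat{X}_i^TV)\leq L^2$ (the natural bound given $\|\hat{x}_{t,i}\|_2\leq L$) but then match the lemma's tail, which carries $L$; the paper writes $\lambda_{\max}\leq L$ outright, so the stated constant is only consistent if one takes $L\leq 1$ or rescales.
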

\small
\begin{proof}
\begin{align*}
\lambda_m(\hat{P}_t \hat{\Sigma}_{t-1} \hat{P}_t ) &= \lambda_m \big((\hat{P}_t - P)\hat{\Sigma}_{t-1} \hat{P}_t + P\hat{\Sigma}_{t-1}(\hat{P}_t - P) + P\hat{\Sigma}_{t-1} P\big) \\
&\geq \lambda_m (P\hat{\Sigma}_{t-1} P ) - 2(t-1) L^2 \|\hat{P}_t - P\|_2 \\
&\geq \lambda_{\min}(V^T \hat{\Sigma}_{t-1} V ) - 4L^2\Gamma \sqrt{\frac{\alpha(t-1)}{K} \log \frac{2d}{\delta} } \quad \text{from Lemma \ref{errornorm}}
\end{align*}
We also have that:
\begin{align*}
\lambda_{\max} (V^T \hat{X}_j\hat{X}_j^T V) &\leq L \quad \forall j \in \{1,\ldots, i-1 \} \\
\lambda_{\min} \bigg(\mathbb{E}\big[ \sum_{j=1}^{t-1} V^T \hat{X}_j\hat{X}_j^T V \big]  \bigg) &= (t-1)(\lambda_- + \sigma^2).
\end{align*}
Applying Theorem \ref{chernoff},
\begin{equation*}
\Pr \bigg[ \lambda_{\min}(V^T \hat{\Sigma}_t V )  \leq (t-1)(\lambda_- + \sigma^2) - \sqrt{2L (t-1)(\lambda_- + \sigma^2) \log \frac{m}{\delta}} \bigg ] \leq \delta.
\end{equation*}
Combining these with similar stopping time construction as described in previous sections we derive the first statement of lemma. Now for second statement with a constant $C$, observe that, $(t-1)(\lambda_- + \sigma^2) - \sqrt{t-1} \bigg ( 4L^2 \Gamma \sqrt{\frac{\alpha}{K} \log \frac{2d}{\delta} } + \sqrt{2L(\lambda_- + \sigma^2) \log \frac{m}{\delta}} \bigg) \geq C(t-1) $ holds if and only if $t \geq 1 + \bigg( \frac{ 4L^2 \Gamma \sqrt{\frac{\alpha}{K} \log \frac{2d}{\delta} } + \sqrt{2L(\lambda_- + \sigma^2) \log \frac{m}{\delta}} }{\lambda_- + \sigma^2 - C}\bigg)^2$. Choosing $C = \frac{\lambda_- + \sigma^2}{2}$ proves the bound.
\end{proof}
Finally, we state one more lemma which will help us derive the regret bound.
\begin{lemma} \label{basiclemma}
\begin{equation*}
2\sqrt{t+1} - 2 \leq \sum_{i=1}^t \frac{1}{\sqrt{i}} \leq 2\sqrt{t} - 1  \qquad  \qquad \log(t + 1) \leq \sum_{i=1}^t \frac{1}{i} \leq 1 + \log(t)
\end{equation*}
\end{lemma}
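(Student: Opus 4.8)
The plan is to prove all four inequalities by the standard integral-comparison technique, exploiting that both $f(x) = 1/\sqrt{x}$ and $g(x) = 1/x$ are positive and strictly decreasing on $[1,\infty)$. For any positive decreasing function $h$, monotonicity gives $\int_i^{i+1} h(x)\,dx \leq h(i)$, since $h(x) \leq h(i)$ for $x \in [i,i+1]$, and $h(i) \leq \int_{i-1}^i h(x)\,dx$ for $i \geq 2$, since $h(x) \geq h(i)$ for $x \in [i-1,i]$. Summing the first inequality over $i = 1,\dots,t$ telescopes the intervals $[1,2],\dots,[t,t+1]$ and yields the lower bound $\sum_{i=1}^t h(i) \geq \int_1^{t+1} h(x)\,dx$. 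Summing the second over $i = 2,\dots,t$ and pulling the $i=1$ term out separately yields the upper bound $\sum_{i=1}^t h(i) = h(1) + \sum_{i=2}^t h(i) \leq h(1) + \int_1^t h(x)\,dx$.

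First I would instantiate $h(x) = 1/\sqrt{x}$, whose antiderivative is $2\sqrt{x}$. The lower bound becomes $\int_1^{t+1} x^{-1/2}\,dx = 2\sqrt{t+1} - 2$, giving $\sum_{i=1}^t i^{-1/2} \geq 2\sqrt{t+1} - 2$, while the upper bound becomes $h(1) + \int_1^t x^{-1/2}\,dx = 1 + (2\sqrt{t} - 2) = 2\sqrt{t} - 1$, giving $\sum_{i=1}^t i^{-1/2} \leq 2\sqrt{t} - 1$. This settles the first chain of inequalities in the lemma.

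Next I would instantiate $h(x) = 1/x$, whose antiderivative is $\log x$. The lower bound becomes $\int_1^{t+1} x^{-1}\,dx = \log(t+1)$, giving $\sum_{i=1}^t i^{-1} \geq \log(t+1)$, and the upper bound becomes $h(1) + \int_1^t x^{-1}\,dx = 1 + \log t$, giving $\sum_{i=1}^t i^{-1} \leq 1 + \log t$. This settles the second chain, completing the proof.

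There is no substantive obstacle here; the only point requiring mild care is the boundary term in the upper bound, where the comparison interval $[i-1,i]$ must remain inside the region of positivity, which is precisely why the $i=1$ term $h(1)$ is extracted from the sum and bounded directly as $h(1)=1$ rather than by an integral over $[0,1]$. As a sanity check on the constants, both upper bounds already hold with equality at $t=1$, where $\sum = 1 = 2\sqrt{1}-1 = 1 + \log 1$, confirming that the additive constants are sharp and correctly placed.
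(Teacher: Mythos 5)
Your proof is correct and takes essentially the same route as the paper, which simply invokes ``integral estimates'' for the first pair of inequalities and the standard harmonic-sum bounds (themselves integral comparisons) for the second; you have merely written out in full the comparison $\int_i^{i+1} h(x)\,dx \leq h(i) \leq \int_{i-1}^{i} h(x)\,dx$ for decreasing $h$ that the paper leaves implicit. The handling of the $i=1$ boundary term and the sanity check at $t=1$ are both sound, so nothing further is needed.
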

\begin{proof}
First one can be obtained using integral estimates and the second one is due harmonic sums.
\end{proof}
\begin{proof}[Proof of Theorem \ref{RegretAnalysis}]
The instantaneous regret, $l_i$ of the algorithm at $i$th round can be decomposed as follows:
\begin{align*}
l_i &= \hat{X}^{*T}_i \theta_* - \hat{X}^T_i \theta_* \\
&\leq (\tilde{P}_i \hat{X}_i )^T \tilde{\theta}_i - ( P \hat{X}_i)^T \theta_*  \qquad \text{ since } (\tilde{P}_i , \hat{X}_i, \tilde{\theta}_i) \text{ is optimistic}\\
&= \hat{X}_i^T (\tilde{P}_i - \hat{P}_i + \hat{P}_i) \tilde{\theta}_i - \hat{X}_i^T (\hat{P}_i + P - \hat{P}_i) \theta_*   \\
&= (\hat{P}_i \hat{X}_i )^T (\tilde{\theta}_i - \theta_i) + (\hat{P}_i \hat{X}_i )^T (\theta_i - \theta_*) + ((\hat{P}_i - P)\hat{X}_i)^T \theta_* + ((\tilde{P}_i - \hat{P}_i)\hat{X}_i)^T \tilde{\theta}_i \\
&\leq 2 \beta_{i,\delta} \| \hat{X}_i \|_{A^{\dagger}_{i}} + 2LS \|\hat{P}_i - P \|_2  \quad \text{holds $\forall i$ w.p. $1-4\delta$ due to Lemma \ref{rewarddif} and Theorem \ref{main}.} 
\end{align*}
Combining this decomposition with the fact that $l_i \leq 2$, we get 
\begin{align}
l_i &\leq 2 \min \Bigg( \beta_{i,\delta} \| \hat{X}_i \|_{A^{\dagger}_{i}} + LS  \| \hat{P}_i - P \|_2 , \quad 1 \Bigg) \nonumber \\
&\leq  2 \min (\beta_{i,\delta} \| \hat{X}_i \|_{A^{\dagger}_{i}}, 1) + 2LS \min(\| \hat{P}_i - P \|_2, 1) \nonumber \\
&\leq 2 \beta_{i,\delta} \min (\| \hat{X}_i \|_{A^{\dagger}_{i}} , 1) + 2LS \| \hat{P}_i - P \|_2 \label{inst_reg}
\end{align}
where the last inequality is due to considering the regret of the algorithm after warm-up period which provides that $\| \hat{P}_i - P \|_2 < 1$. 
Now we can provide an upper bound on the regret. For all $t \geq 1$, with probability at least $1-5\delta$, 
\begin{align}
R_t &\leq \sum_{i=1}^t  2 \beta_{i,\delta} \min (\| \hat{X}_i \|_{A^{\dagger}_{i}} , 1) + 2LS \| \hat{P}_i - P \|_2 \nonumber \\
&= 2LS \sum_{i=1}^t \| \hat{P}_i - P \|_2 + \sum_{i=1}^t 2\beta_{i,\delta} \min (\| \hat{X}_i \|_{A^{\dagger}_{i}},1) \nonumber \\
&\leq 2LS \sum_{i=1}^t  \| \hat{P}_i - P \|_2 +  2\beta_{t,\delta} \sum_{i=1}^{t} \min (\| \hat{X}_i \|_{A^{\dagger}_{i}},1) \label{betaorder} \\
&\leq 2LS \sum_{i=1}^t  \| \hat{P}_i - P \|_2 + 2\beta_{t,\delta} \sqrt{ t \sum_{i=1}^{t} \min (\| \hat{X}_i \|_{A^{\dagger}_{i}}^2 , 1) } \nonumber \\
&\leq 2LS \sum_{i=1}^t  \| \hat{P}_i - P \|_2 +2\sqrt{t}\beta_{t,\delta}  \sqrt{\sum_{i=1}^{t} \min \big(\lambda_{\max}(A^{\dagger}_{i}) L^2 , 1\big) } \label{selfnormtwonorm}\\
&\leq 2LS \sum_{i=1}^t  \| \hat{P}_i - P \|_2 +2\sqrt{t}\beta_{t,\delta}  \sqrt{\sum_{i=1}^{t} \min{ \bigg(\frac{L^2}{\lambda + \lambda_m(\hat{P}_i \hat{\Sigma}_{i-1} \hat{P}_i)} , 1\bigg) } } \label{dagger_eigs} \\
&\leq 2LS \bigg(t_{w,\delta} + 2\Gamma \sqrt{\frac{\alpha}{K} \log \frac{2d}{\delta} } \sum_{i=t_{w,\delta}}^t \frac{1}{\sqrt{i}} \bigg) \label{eiglemmainsert}  \\
&\phantom{{}=1}+2\sqrt{t}\beta_{t,\delta}  \sqrt{\sum_{i=1}^{t} \min{ \Bigg(\frac{L^2}{ \lambda + \max{ \bigg( (i-1)(\lambda_- + \sigma^2) - \sqrt{i-1} \big ( 4L^2 \Gamma \sqrt{\frac{\alpha}{K} \log \frac{2d}{\delta} } + \sqrt{2L(\lambda_- + \sigma^2) \log \frac{m}{\delta}} \big), 0\bigg)}} , 1\Bigg) }}  \notag \\
&\leq 2LS \bigg(t_{w,\delta} + 2\Gamma \sqrt{\frac{\alpha}{K} \log \frac{2d}{\delta} } \sum_{i=t_{w,\delta}}^t \frac{1}{\sqrt{i}} \bigg) + 2L\sqrt{t}\beta_{t,\delta} \sqrt{\frac{t_{r,\delta}}{\lambda} + \frac{2}{\lambda_- + \sigma^2} \sum_{i=t_{r,\delta}}^t \frac{1}{i} } \label{secondeiglemmainsert} 
\\
&\leq 2LS t_{w,\delta} + 4LS\Gamma \sqrt{\frac{\alpha}{K} \log \frac{2d}{\delta} }(2\sqrt{t} - 2\sqrt{t_{w,\delta} + 1} + 1) + 2L\sqrt{t}\beta_{t,\delta} \sqrt{\frac{t_{r,\delta}}{\lambda} + \frac{2 + 2\log t - 2\log(t_{r,\delta}+1)}{\lambda_- + \sigma^2}} \label{basiclemmainsert}
\end{align}

where (\ref{betaorder}) follows from the fact that $\beta_{1,\delta} \leq \cdots \leq \beta_{t,\delta}$, (\ref{selfnormtwonorm}) follows since $\|x\|^2_M \leq \lambda_{\max}(M) \|x\|^2_2$. Maximum eigenvalue of $A_t^{\dagger}$ is equivalent to $m$th eigenvalue of $A_t$, thus (\ref{dagger_eigs}) is obtained. Using Lemma \ref{errornorm} with Lemma \ref{proj_mth} we get (\ref{eiglemmainsert}). Using the second statement of Lemma \ref{proj_mth} gives (\ref{secondeiglemmainsert}). Finally, Lemma \ref{basiclemma} provides the bound on regret shown in (\ref{basiclemmainsert}).

Recall that $\beta_{t,\delta} = \OO\left( \Gamma \sqrt{ \frac{  \alpha m }{K} \log t } \right)$. 
Therefore, last term dominates the asymptotic upper bound on regret. Using the definition of $t_{r,\delta}$ we get that the regret of the algorithm is 
\begin{equation}
    R_t \leq \OO\left(  \frac{ \alpha \Gamma^2 \sqrt{m} }{K(\lambda_- + \sigma^2 )} \sqrt{t} \log t \right)
\end{equation}

From the definition of $\Upsilon$ and the fact that the \alg uses the confidence set of $\mathcal{C}_{t} = \mathcal{C}_{m,t} \cap \mathcal{C}_{d,t}$, the theorem follows. 
\end{proof}

\newpage

\section{Additional Experiment Results} \label{experiment_more}
\vspace{-1.8\baselineskip}
\begin{figure}[!htb]
\captionsetup{justification=centering}
\centering
\subfloat[][MNIST Regret \\ Comparison for $m\medop{=}1$]{\label{fig:regretmnistd100m1}\includegraphics[width=.3333\textwidth]{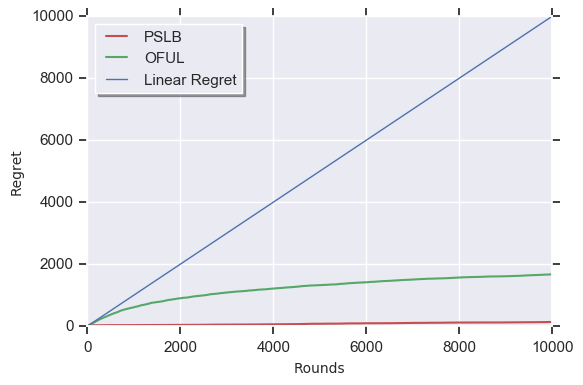}}
\centering
\subfloat[][MNIST Regret \\ Comparison for $m\medop{=}2$]{\label{fig:regretmnistd100m2}\includegraphics[width=.3333\textwidth]{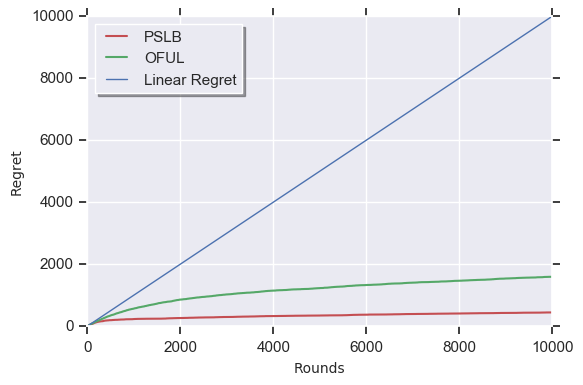}}
\centering
\subfloat[][MNIST Regret \\ Comparison for $m\medop{=}4$]{\label{fig:regretmnistd100m4}\includegraphics[width=.3333\textwidth]{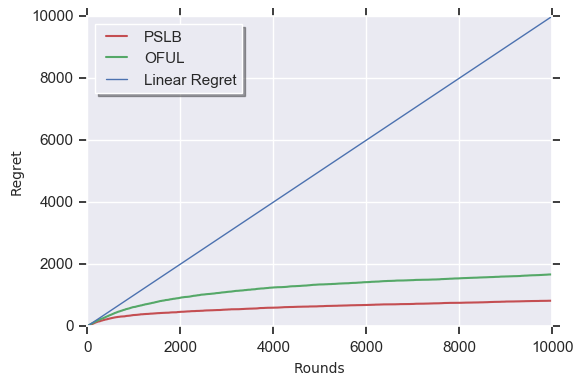}}
\vspace{-.7\baselineskip}
\centering
\subfloat[][MNIST Regret \\ Comparison for $m\medop{=}8$]{\label{fig:regretmnistd100m8}\includegraphics[width=0.3333\textwidth]{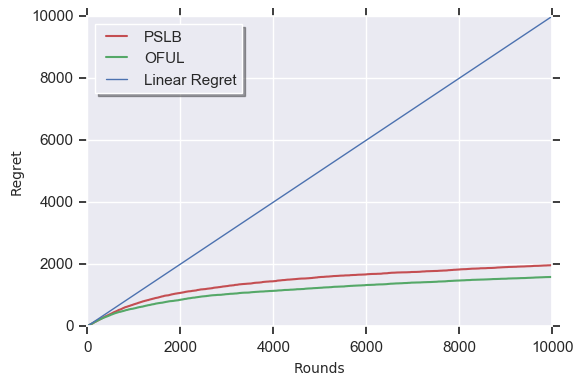}}
\subfloat[][MNIST Regret \\ Comparison for $m\medop{=}16$]{\label{fig:regretmnistd100m16}\includegraphics[width=0.3333\textwidth]{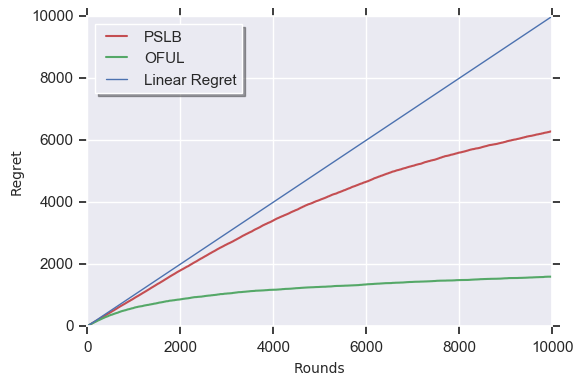}}
\vspace{-.7\baselineskip}
\centering
\subfloat[][MNIST Model Accuracy \\ Comparison for $m\medop{=}1$]{\label{fig:classmnistd100m1}\includegraphics[width=0.3333\textwidth]{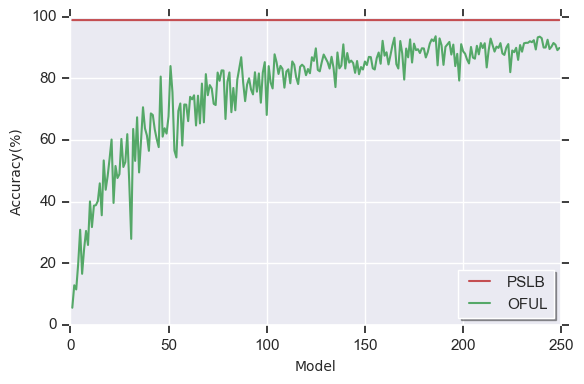}}
\centering
\subfloat[][MNIST Model Accuracy \\ Comparison for $m\medop{=}2$]{\label{fig:classmnistd100m2}\includegraphics[width=0.3333\textwidth]{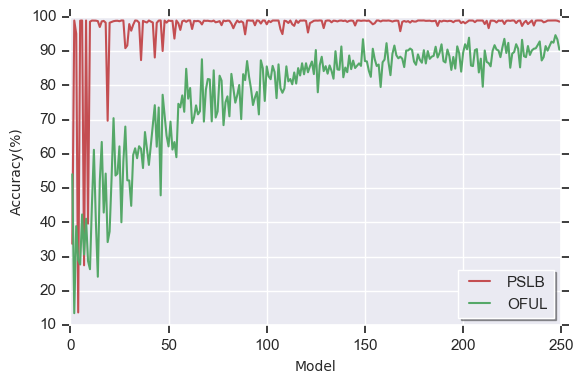}}
\centering
\subfloat[][MNIST Model Accuracy \\ Comparison for $m\medop{=}4$]{\label{fig:classmnistd100m4}\includegraphics[width=0.3333\textwidth]{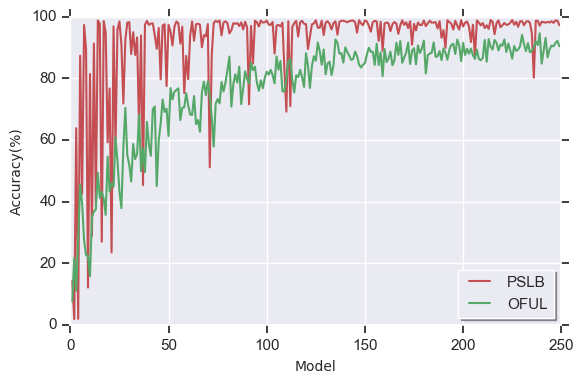}}
\vspace{-.5\baselineskip}
\centering
\subfloat[][MNIST Model Accuracy \\ Comparison for $m\medop{=}8$]{\label{fig:classmnistd100m8}\includegraphics[width=0.3333\textwidth]{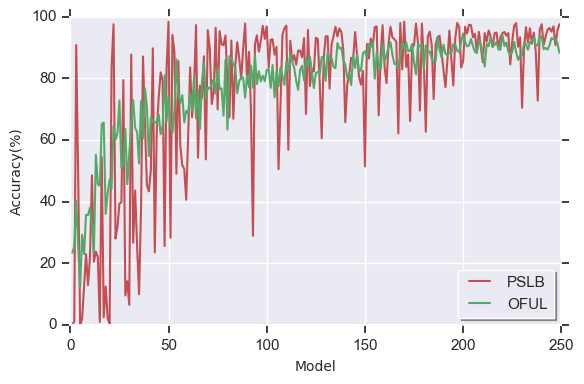}}
\subfloat[][MNIST Model Accuracy \\ Comparison for $m\medop{=}16$]{\label{fig:classmnistd100m16}\includegraphics[width=0.3333\textwidth]{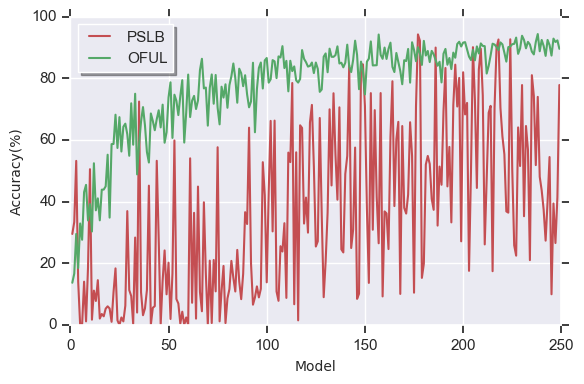}}
\vspace{-0.5\baselineskip}
\captionsetup{justification=justified }
\caption{Regret and Optimistic Model Accuracy Comparisons of \alg and \OFUL on MNIST with $d\medop{=}100$}
\label{fig:MNIST100Class}
\end{figure}
\vspace{-0.9\baselineskip}
Throughout Section \ref{experiment_more}, while running \alg, only projected confidence sets are used in choosing optimistic actions. This way, we show the effect of subspace recovery problem on the regret of \alg explicitly. Figure~\ref{fig:MNIST100Class} provides the regret and the accuracy of optimistically chosen parameters of \alg and \OFUL in \SLB constructed from MNIST with $d\medop{=}100$. Figures~\ref{fig:regretmnistd100m1}, \ref{fig:regretmnistd100m2}, \ref{fig:regretmnistd100m4}, \ref{fig:regretmnistd100m8}, \ref{fig:regretmnistd100m16} show the regrets obtained while \alg tries to recover $m=1,2,4,8,16$ dimensional subspaces respectively. Since the feature space is only $100$-dimensional \alg is not as superior over \OFUL as in high dimensional cases like $d=500,1000$. Note that as we search for a higher dimensional subspace, the subspace becomes less identifiable and finite sample projection error starts to dominate the regret. For 100-dimensional MNIST \SLB setting, when \alg tries to recover a 8-dimensional or bigger subspace, \OFUL starts to dominate \alg. Fortunately, by using the intersection of confidence sets approach, \alg tolerates this and performs at least as good as \OFUL. 

\newpage
\begin{figure}[!htb]
\captionsetup{justification=centering}
\centering
\subfloat[][MNIST Regret \\ Comparison for $m\medop{=}1$]{\label{fig:regretmnistd500m1}\includegraphics[width=.3333\textwidth]{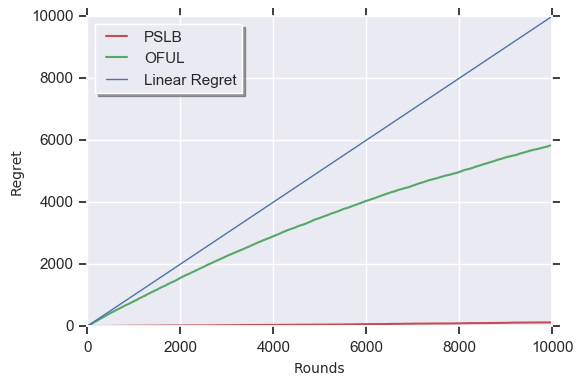}}
\centering
\subfloat[][MNIST Regret \\ Comparison for $m\medop{=}2$]{\label{fig:regretmnistd500m2}\includegraphics[width=.3333\textwidth]{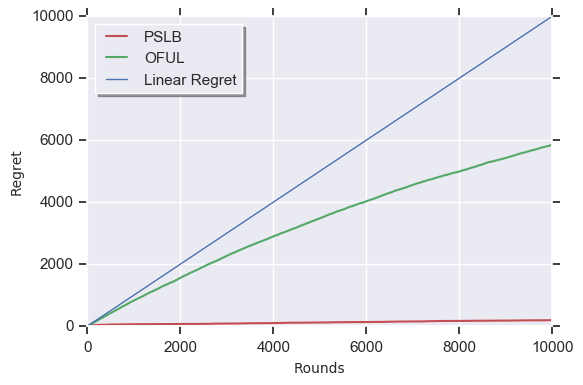}}
\centering
\subfloat[][MNIST Regret \\ Comparison for $m\medop{=}4$]{\label{fig:regretmnistd500m4}\includegraphics[width=.3333\textwidth]{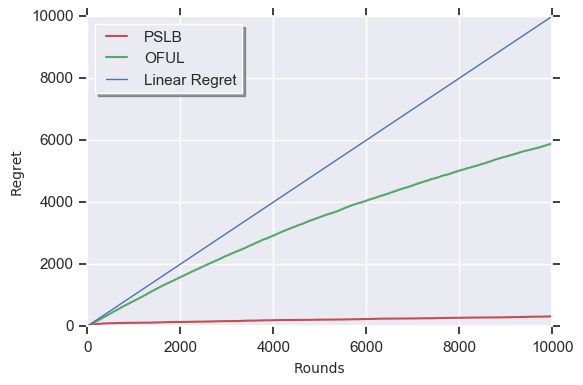}}
\vspace{-.5\baselineskip}
\centering
\subfloat[][MNIST Regret \\ Comparison for $m\medop{=}8$]{\label{fig:regretmnistd500m8}\includegraphics[width=0.3333\textwidth]{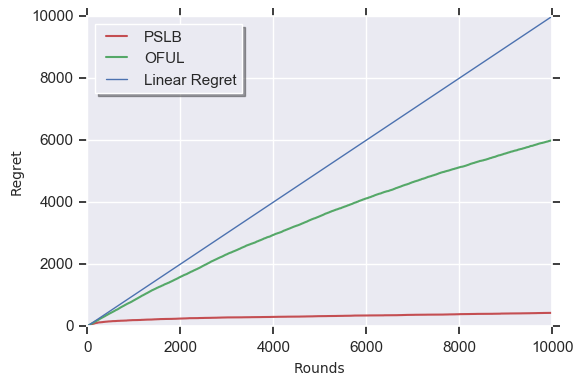}}
\subfloat[][MNIST Regret \\ Comparison for $m\medop{=}16$]{\label{fig:regretmnistd500m16}\includegraphics[width=0.3333\textwidth]{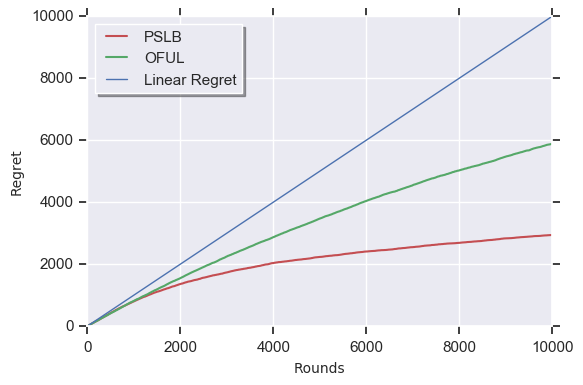}}
\vspace{-.5\baselineskip}
\centering
\subfloat[][MNIST Model Accuracy \\ Comparison for $m\medop{=}1$]{\label{fig:classmnistd500m1}\includegraphics[width=0.3333\textwidth]{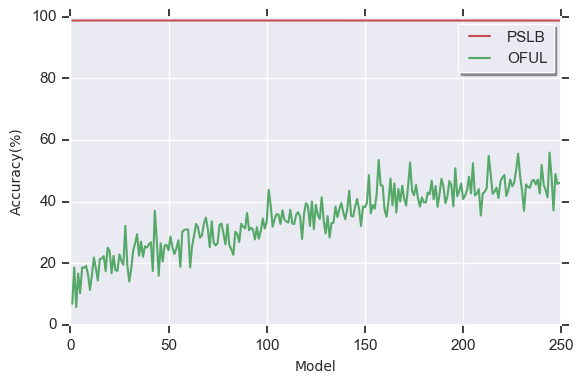}}
\centering
\subfloat[][MNIST Model Accuracy \\ Comparison for $m\medop{=}2$]{\label{fig:classmnistd500m2}\includegraphics[width=0.3333\textwidth]{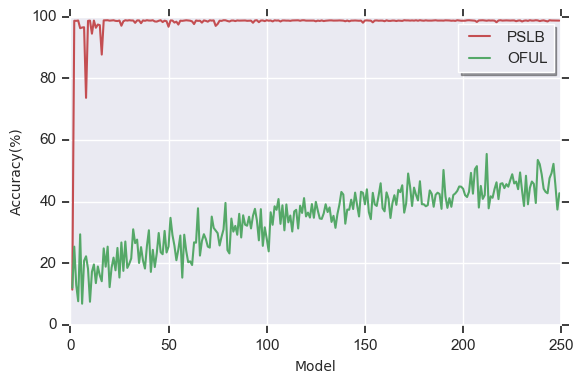}}
\centering
\subfloat[][MNIST Model Accuracy \\ Comparison for $m\medop{=}4$]{\label{fig:classmnistd500m4}\includegraphics[width=0.3333\textwidth]{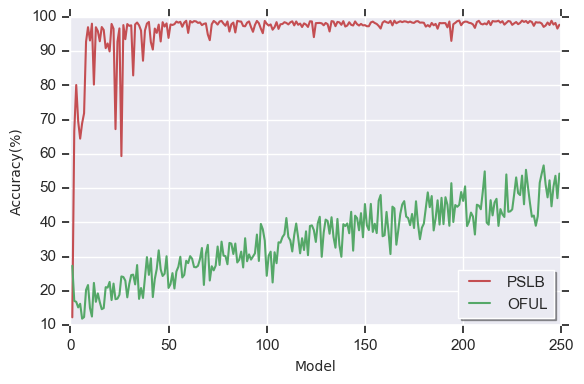}}
\vspace{-.5\baselineskip}
\centering
\subfloat[][MNIST Model Accuracy \\ Comparison for $m\medop{=}8$]{\label{fig:classmnistd500m8}\includegraphics[width=0.3333\textwidth]{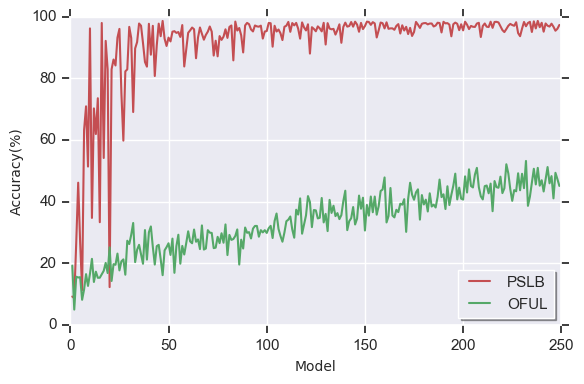}}
\subfloat[][MNIST Model Accuracy \\ Comparison for $m\medop{=}16$]{\label{fig:classmnistd500m16}\includegraphics[width=0.3333\textwidth]{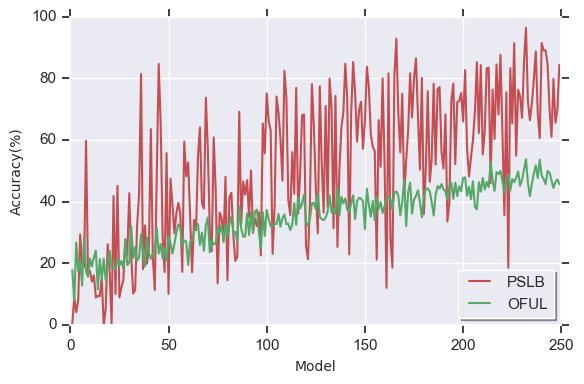}}
\vspace{-0.5\baselineskip}
\captionsetup{justification=justified }
\caption{Regret and Optimistic Model Accuracy Comparisons of \alg and \OFUL on MNIST with $d\medop{=}500$}
\label{fig:MNIST500Class}
\end{figure}
\vspace{-0.6\baselineskip}
Figure~\ref{fig:MNIST100Class} provides the regret and the accuracy of optimistically chosen parameters of \alg and \OFUL in \SLB constructed from MNIST with $d\medop{=}500$. As we go in higher dimensional representations, the benefit of subspace recovery on regret becomes more apparent. In Figure~\ref{fig:MNIST500Class}, it can be seen that \alg has smaller regret for each choice of $m$. With the PCA based subspace recovery, even for recovering higher dimensional subspaces like 8 dimensions, \alg performs well. It explores some in the beginning and as the subspace estimation gets more accurate it converges to accurate model. This behavior can be seen in Figure~\ref{fig:classmnistd500m8}.
\newpage
\begin{figure}[htb!]
\captionsetup{justification=centering}
\centering
\subfloat[][MNIST Regret \\ Comparison for $m\medop{=}1$]{\label{fig:regretmnistd1000m1}\includegraphics[width=.3333\textwidth]{experiments/experiment_new/mnist_regret_d_1000_m_1.png}}
\centering
\subfloat[][MNIST Regret \\ Comparison for $m\medop{=}2$]{\label{fig:regretmnistd1000m2}\includegraphics[width=.3333\textwidth]{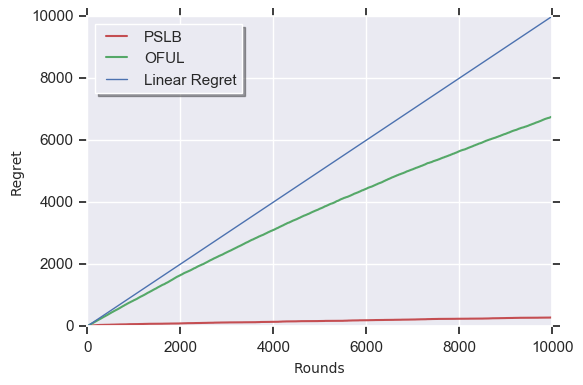}}
\centering
\subfloat[][MNIST Regret \\ Comparison for $m\medop{=}4$]{\label{fig:regretmnistd1000m4}\includegraphics[width=.3333\textwidth]{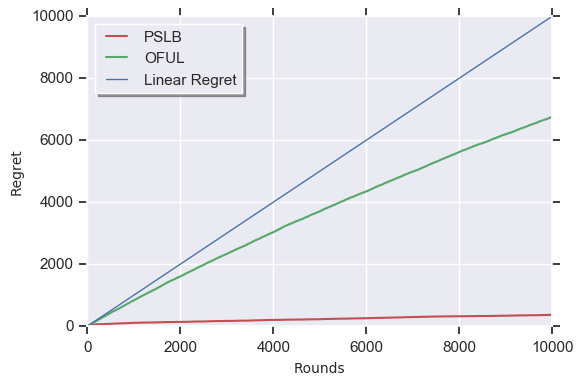}}
\vspace{-.5\baselineskip}
\centering
\subfloat[][MNIST Regret \\ Comparison for $m\medop{=}8$]{\label{fig:regretmnistd1000m8}\includegraphics[width=0.3333\textwidth]{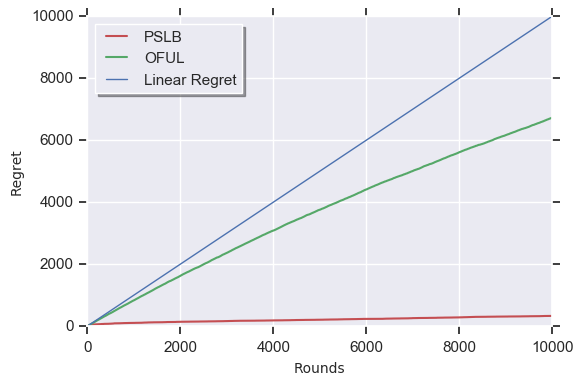}}
\subfloat[][MNIST Regret \\ Comparison for $m\medop{=}16$]{\label{fig:regretmnistd1000m16}\includegraphics[width=0.3333\textwidth]{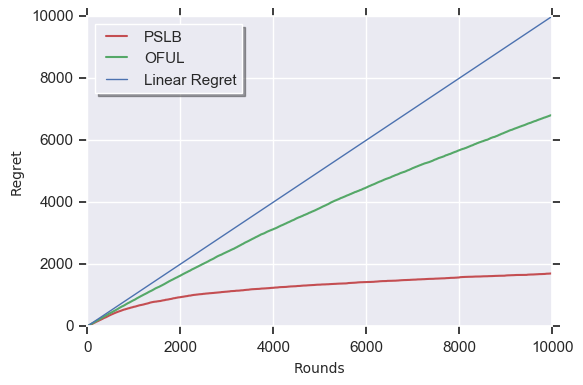}}
\vspace{-.5\baselineskip}
\centering
\subfloat[][MNIST Model Accuracy \\ Comparison for $m\medop{=}1$]{\label{fig:classmnistd1000m1}\includegraphics[width=0.3333\textwidth]{experiments/experiment_new/mnist_classification_d_1000_m_1.png}}
\centering
\subfloat[][MNIST Model Accuracy \\ Comparison for $m\medop{=}2$]{\label{fig:classmnistd1000m2}\includegraphics[width=0.3333\textwidth]{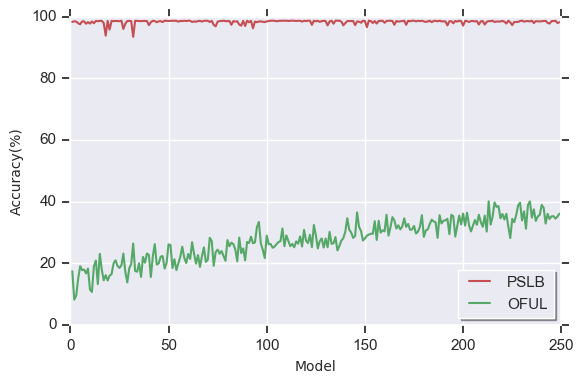}}
\centering
\subfloat[][MNIST Model Accuracy \\ Comparison for $m\medop{=}4$]{\label{fig:classmnistd1000m4}\includegraphics[width=0.3333\textwidth]{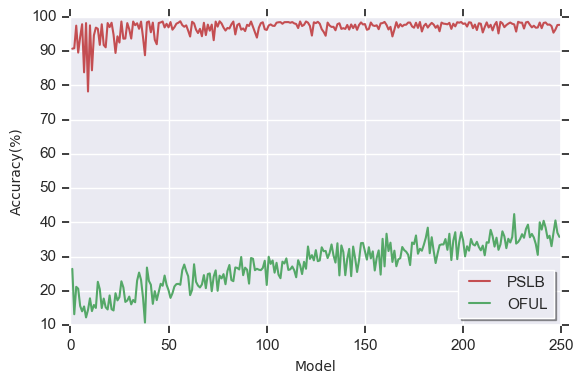}}
\vspace{-.5\baselineskip}
\centering
\subfloat[][MNIST Model Accuracy \\ Comparison for $m\medop{=}8$]{\label{fig:classmnistd1000m8}\includegraphics[width=0.3333\textwidth]{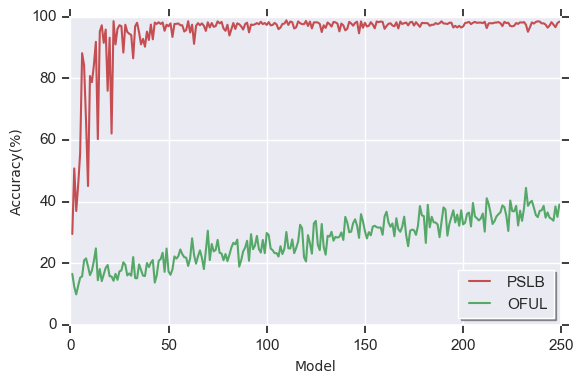}}
\subfloat[][MNIST Model Accuracy \\ Comparison for $m\medop{=}16$]{\label{fig:classmnistd1000m16}\includegraphics[width=0.3333\textwidth]{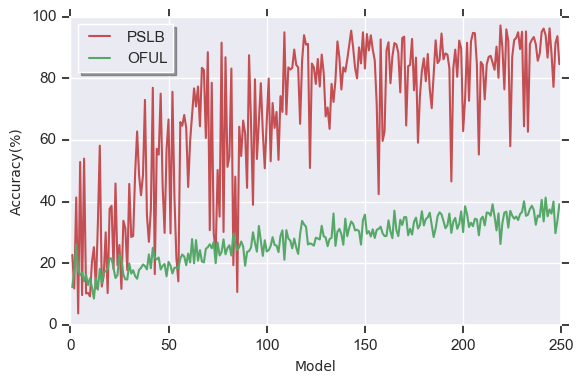}}
\vspace{-0.5\baselineskip}
\captionsetup{justification=justified }
\caption{Regret and Optimistic Model Accuracy Comparisons of \alg and \OFUL on MNIST with $d\medop{=}1000$}
\label{fig:MNIST1000Class}
\end{figure}
Figure~\ref{fig:MNIST1000Class} provides the regret and the accuracy of optimistically chosen parameters of \alg and \OFUL in \SLB constructed from MNIST with $d\medop{=}1000$. This is the setting where \alg becomes significantly superior to \OFUL. In all choices of $m$, \alg learns the underlying model accurately and starts exploiting this information. However, \OFUL still continues to explore in each dimension to figure out the underlying model. Therefore, it needs significantly more samples to achieve the classification performance of \alg and during that time it continues to make mistakes and accumulate regret. 

\newpage
\begin{figure}[!htb]
\captionsetup{justification=centering}
\centering
\subfloat[][CIFAR-10 Regret \\ Comparison for $m\medop{=}1$]{\label{fig:regretcifard100m1}\includegraphics[width=.3333\textwidth]{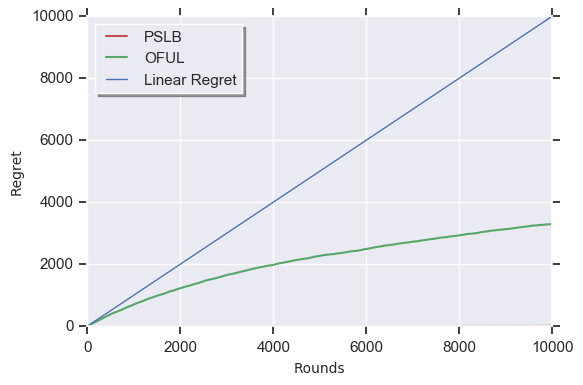}}
\centering
\subfloat[][CIFAR-10 Regret \\ Comparison for $m\medop{=}2$]{\label{fig:regretcifard100m2}\includegraphics[width=.3333\textwidth]{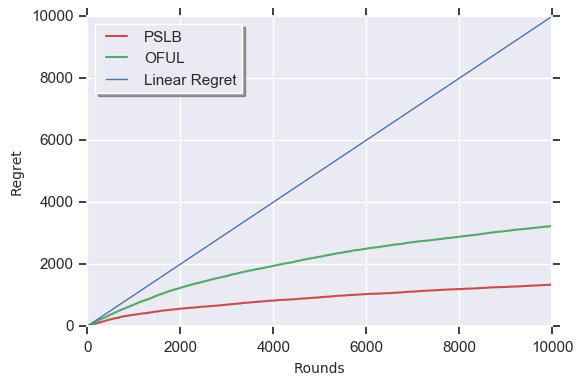}}
\centering
\subfloat[][CIFAR-10 Regret \\ Comparison for $m\medop{=}4$]{\label{fig:regretcifard100m4}\includegraphics[width=.3333\textwidth]{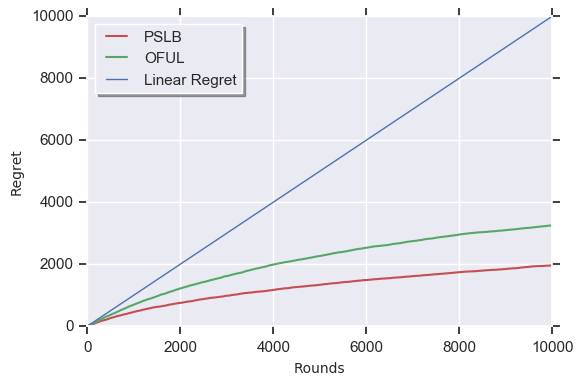}}
\vspace{-.5\baselineskip}
\centering
\subfloat[][CIFAR-10 Regret \\ Comparison for $m\medop{=}8$]{\label{fig:regretcifard100m8}\includegraphics[width=0.3333\textwidth]{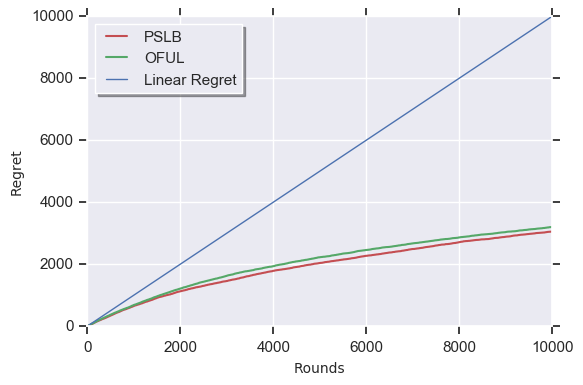}}
\subfloat[][CIFAR-10 Regret \\ Comparison for $m\medop{=}16$]{\label{fig:regretcifard100m16}\includegraphics[width=0.3333\textwidth]{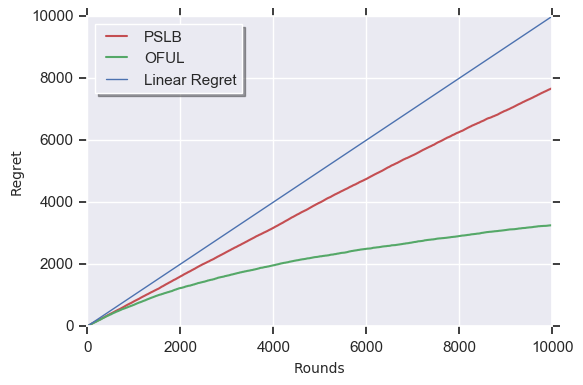}}
\vspace{-.5\baselineskip}
\centering
\subfloat[][CIFAR-10 Model Accuracy \\ Comparison for $m\medop{=}1$]{\label{fig:classcifard100m1}\includegraphics[width=0.3333\textwidth]{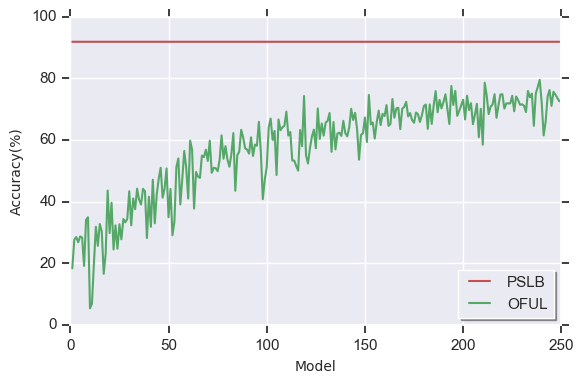}}
\centering
\subfloat[][CIFAR-10 Model Accuracy \\ Comparison for $m\medop{=}2$]{\label{fig:classcifard100m2}\includegraphics[width=0.3333\textwidth]{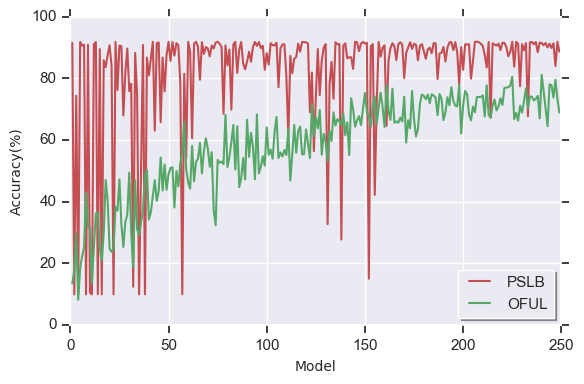}}
\centering
\subfloat[][CIFAR-10 Model Accuracy \\ Comparison for $m\medop{=}4$]{\label{fig:classcifard100m4}\includegraphics[width=0.3333\textwidth]{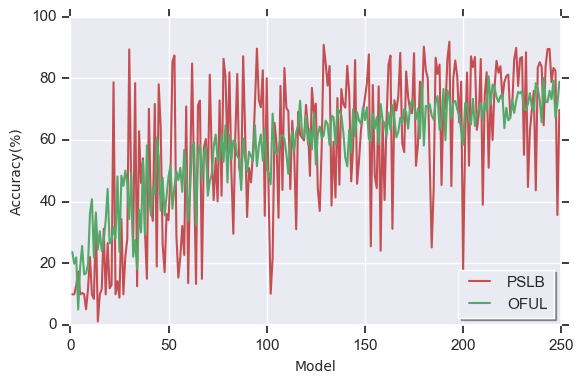}}
\vspace{-.5\baselineskip}
\centering
\subfloat[][CIFAR-10 Model Accuracy \\ Comparison for $m\medop{=}8$]{\label{fig:classcifard100m8}\includegraphics[width=0.3333\textwidth]{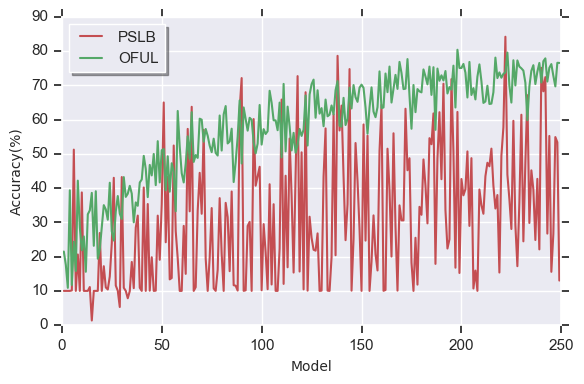}}
\subfloat[][CIFAR-10 Model Accuracy \\ Comparison for $m\medop{=}16$]{\label{fig:classcifard100m16}\includegraphics[width=0.3333\textwidth]{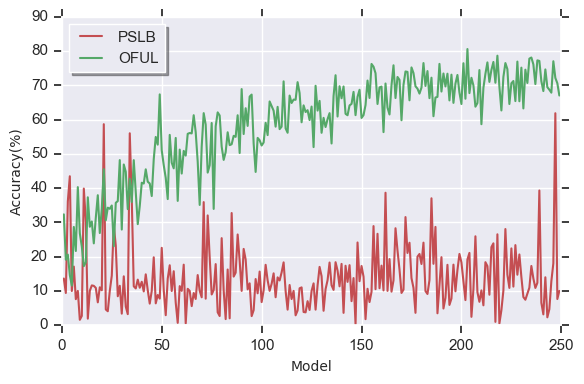}}
\vspace{-0.5\baselineskip}
\captionsetup{justification=justified }
\caption{Regret and Optimistic Model Accuracy Comparisons of \alg and \OFUL on CIFAR-10 with $d\medop{=}100$}
\label{fig:CIFAR100Class}
\end{figure}
Figure~\ref{fig:CIFAR100Class} provides the regret and the accuracy of optimistically chosen parameters of \alg and \OFUL in \SLB constructed from CIFAR-10 with $d\medop{=}100$. Similar to MNIST, due to difficulty of subspace recovery for high-dimensional subspaces, using projected confidence sets doesn't provide substantial benefit compared to \OFUL except $m=1,2$ and 4. However, the best of both algorithms approach of \alg bounds our regret with the regret of \OFUL which performs well under low dimensional ambient spaces. Moreover, we should note that by projecting the decision set onto 1-dimensional subspace, \alg makes almost no mistakes during the course of interaction, Fig~\ref{fig:regretcifard100m1}.  
\newpage
\begin{figure}[!htb]
\captionsetup{justification=centering}
\centering
\subfloat[][CIFAR-10 Regret \\ Comparison for $m\medop{=}1$]{\label{fig:regretcifard500m1}\includegraphics[width=.3333\textwidth]{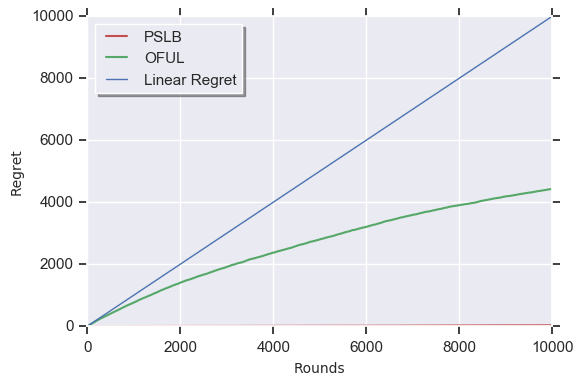}}
\centering
\subfloat[][CIFAR-10 Regret \\ Comparison for $m\medop{=}2$]{\label{fig:regretcifard500m2}\includegraphics[width=.3333\textwidth]{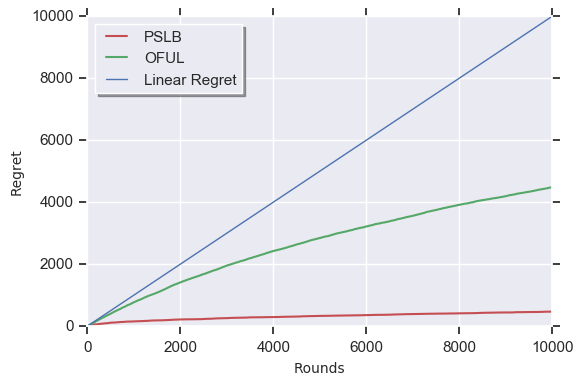}}
\centering
\subfloat[][CIFAR-10 Regret \\ Comparison for $m\medop{=}4$]{\label{fig:regretcifard500m4}\includegraphics[width=.3333\textwidth]{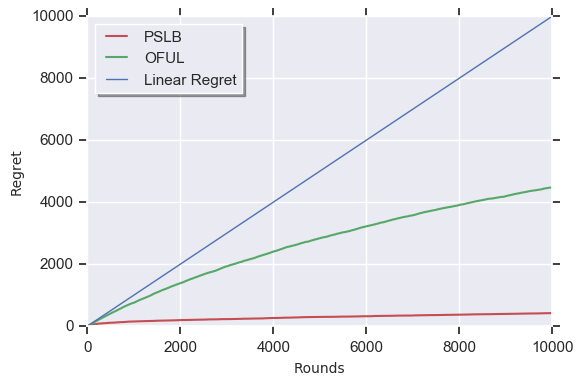}}
\vspace{-.5\baselineskip}
\centering
\subfloat[][CIFAR-10 Regret \\ Comparison for $m\medop{=}8$]{\label{fig:regretcifard500m8}\includegraphics[width=0.3333\textwidth]{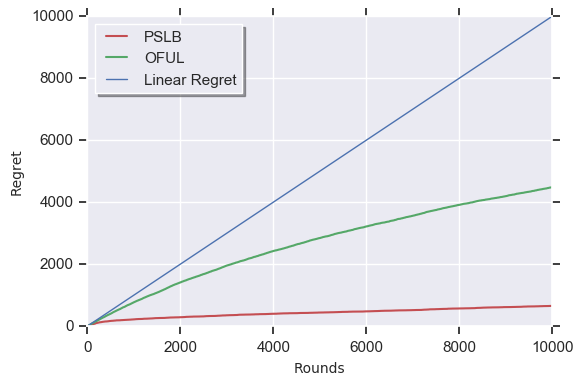}}
\subfloat[][CIFAR-10 Regret \\ Comparison for $m\medop{=}16$]{\label{fig:regretcifard500m16}\includegraphics[width=0.3333\textwidth]{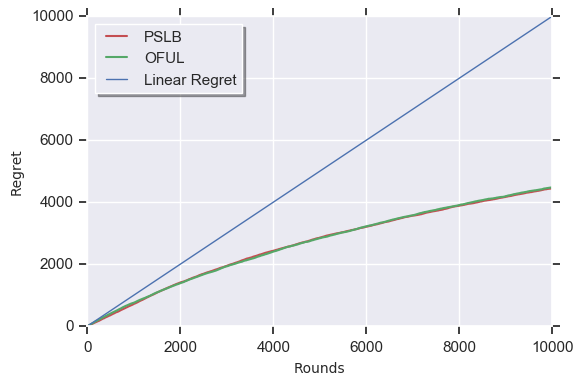}}
\vspace{-.5\baselineskip}
\centering
\subfloat[][CIFAR-10 Model Accuracy \\ Comparison for $m\medop{=}1$]{\label{fig:classcifard500m1}\includegraphics[width=0.3333\textwidth]{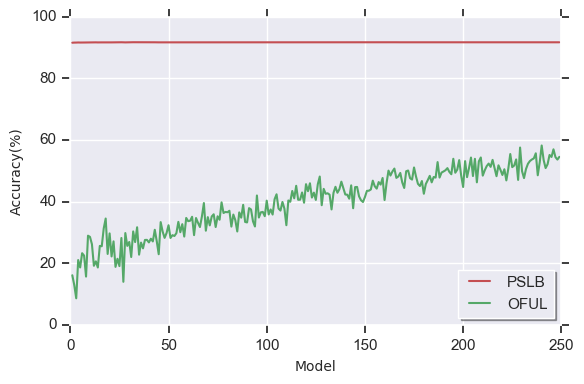}}
\centering
\subfloat[][CIFAR-10 Model Accuracy \\ Comparison for $m\medop{=}2$]{\label{fig:classcifard500m2}\includegraphics[width=0.3333\textwidth]{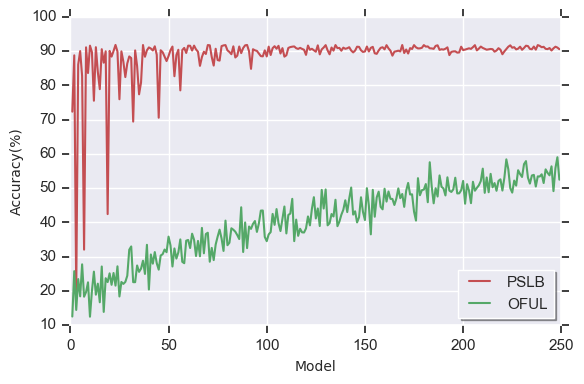}}
\centering
\subfloat[][CIFAR-10 Model Accuracy \\ Comparison for $m\medop{=}4$]{\label{fig:classcifard500m4}\includegraphics[width=0.3333\textwidth]{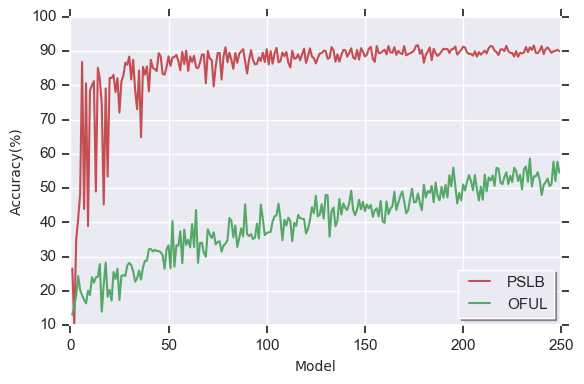}}
\vspace{-.5\baselineskip}
\centering
\subfloat[][CIFAR-10 Model Accuracy \\ Comparison for $m\medop{=}8$]{\label{fig:classcifard500m8}\includegraphics[width=0.3333\textwidth]{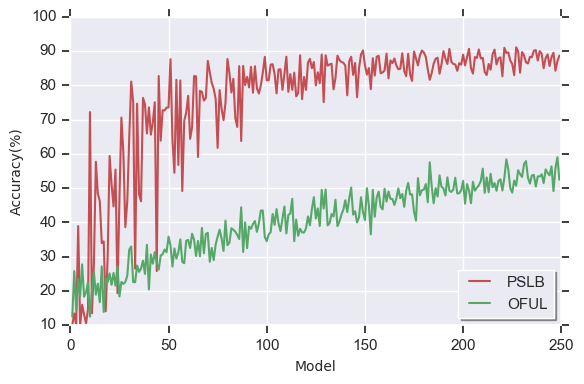}}
\subfloat[][CIFAR-10 Model Accuracy \\ Comparison for $m\medop{=}16$]{\label{fig:classcifard500m16}\includegraphics[width=0.3333\textwidth]{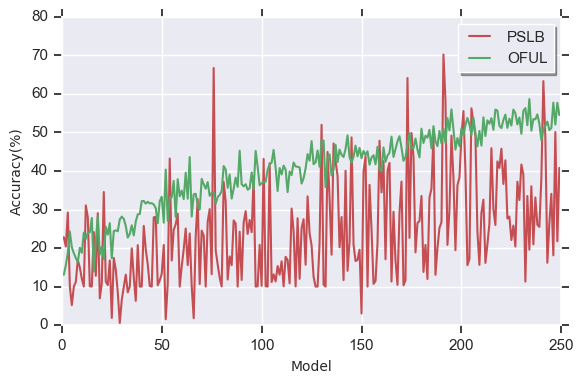}}
\vspace{-0.5\baselineskip}
\captionsetup{justification=justified }
\caption{Regret and Optimistic Model Accuracy Comparisons of \alg and \OFUL on CIFAR-10 with $d\medop{=}500$}
\label{fig:cifar500Class}
\end{figure}
Figure~\ref{fig:cifar500Class} provides the regret and the accuracy of optimistically chosen parameters of \alg and \OFUL in \SLB constructed from CIFAR-10 with $d\medop{=}500$. Similar to $d\medop{=}100$ setting, \alg makes very few mistakes when it tries to recover and project action vectors onto a 1-dimensional subspace. 
\newpage
\begin{figure}[!htb]
\captionsetup{justification=centering}
\centering
\subfloat[][CIFAR-10 Regret \\ Comparison for $m\medop{=}1$]{\label{fig:regretcifard1000m1}\includegraphics[width=.3333\textwidth]{experiments/experiment_new/cifar_regret_d_1000_m_1.png}}
\centering
\subfloat[][CIFAR-10 Regret \\ Comparison for $m\medop{=}2$]{\label{fig:regretcifard1000m2}\includegraphics[width=.3333\textwidth]{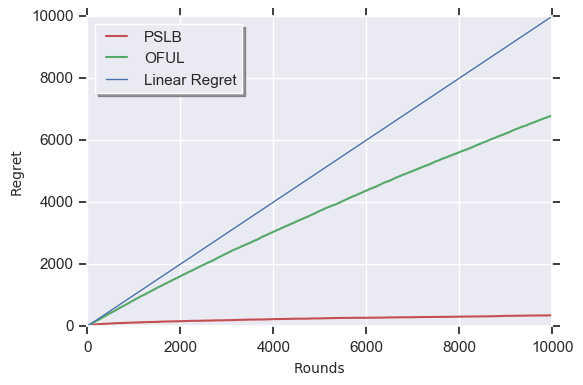}}
\centering
\subfloat[][CIFAR-10 Regret \\ Comparison for $m\medop{=}4$]{\label{fig:regretcifard1000m4}\includegraphics[width=.3333\textwidth]{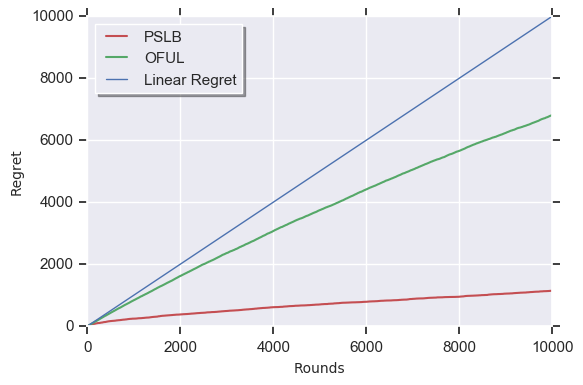}}
\vspace{-.5\baselineskip}
\centering
\subfloat[][CIFAR-10 Regret \\ Comparison for $m\medop{=}8$]{\label{fig:regretcifard1000m8}\includegraphics[width=0.3333\textwidth]{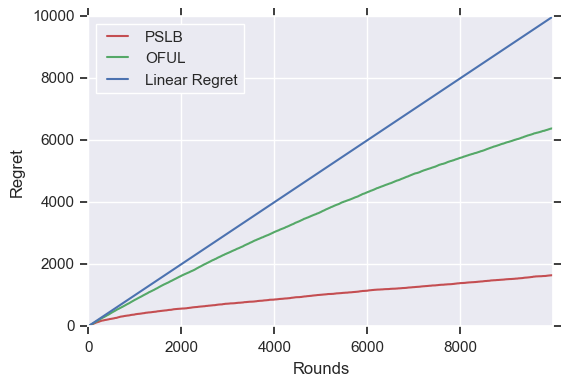}}
\subfloat[][CIFAR-10 Regret \\ Comparison for $m\medop{=}16$]{\label{fig:regretcifard1000m16}\includegraphics[width=0.3333\textwidth]{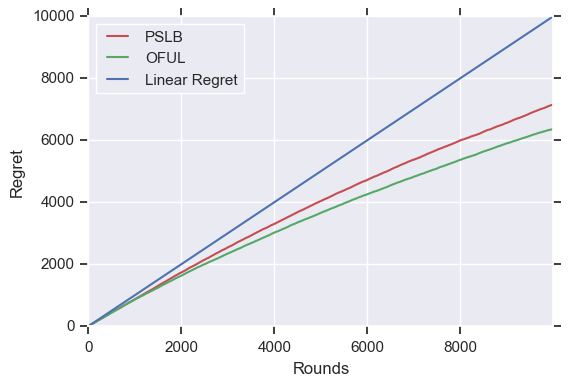}}
\vspace{-.5\baselineskip}
\centering
\subfloat[][CIFAR-10 Model Accuracy \\ Comparison for $m\medop{=}1$]{\label{fig:classcifard1000m1}\includegraphics[width=0.3333\textwidth]{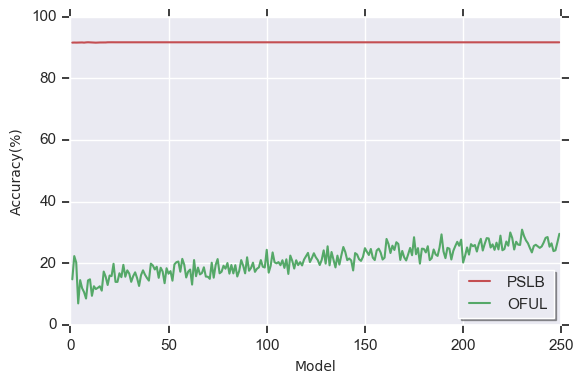}}
\centering
\subfloat[][CIFAR-10 Model Accuracy \\ Comparison for $m\medop{=}2$]{\label{fig:classcifard1000m2}\includegraphics[width=0.3333\textwidth]{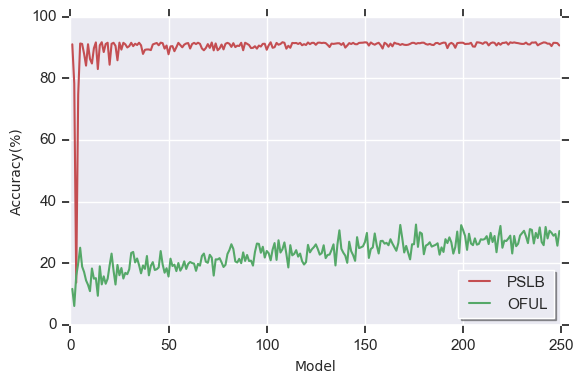}}
\centering
\subfloat[][CIFAR-10 Model Accuracy \\ Comparison for $m\medop{=}4$]{\label{fig:classcifard1000m4}\includegraphics[width=0.3333\textwidth]{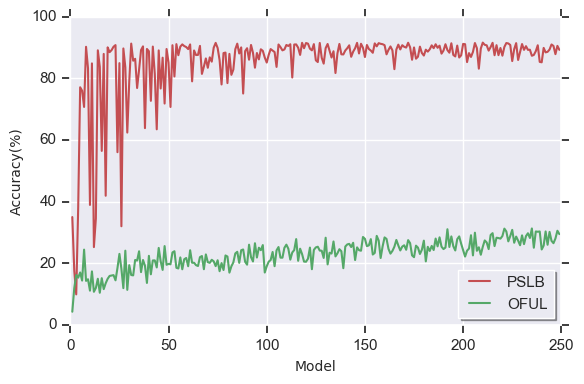}}
\vspace{-.5\baselineskip}
\centering
\subfloat[][CIFAR-10 Model Accuracy \\ Comparison for $m\medop{=}8$]{\label{fig:classcifard1000m8}\includegraphics[width=0.3333\textwidth]{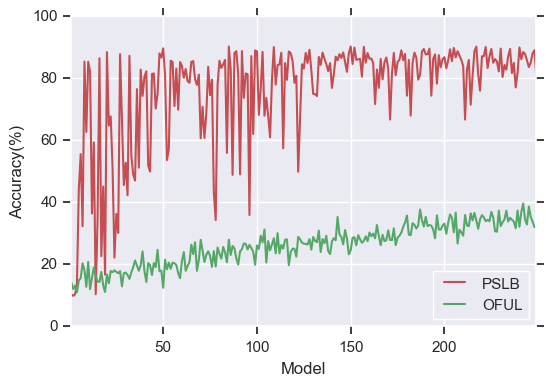}}
\subfloat[][CIFAR-10 Model Accuracy \\ Comparison for $m\medop{=}16$]{\label{fig:classcifard1000m16}\includegraphics[width=0.3333\textwidth]{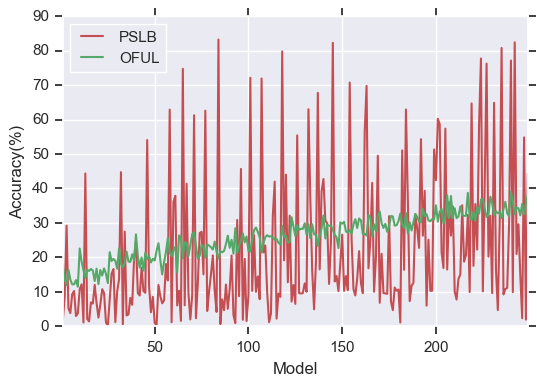}}
\vspace{-0.5\baselineskip}
\captionsetup{justification=justified }
\caption{Regret and Optimistic Model Accuracy Comparisons of \alg and \OFUL on CIFAR-10 with $d\medop{=}1000$}
\label{fig:cifar1000Class}
\end{figure}
Figure~\ref{fig:cifar1000Class} provides the regret and the accuracy of optimistically chosen parameters of \alg and \OFUL in \SLB constructed from CIFAR-10 with $d\medop{=}1000$. 
\newpage
\begin{figure}[!htb]
\captionsetup{justification=centering}\label{fig:imagenet100Regret}
\centering
\subfloat[][ImageNet Regret \\ Comparison for $m\medop{=}1$]{\label{fig:regretimagenetd100m1}\includegraphics[width=.3333\textwidth]{experiments/imagenet_regret_d_100_m_1.png}}
\centering
\subfloat[][ImageNet Regret \\ Comparison for $m\medop{=}2$]{\label{fig:regretimagenetd100m2}\includegraphics[width=.3333\textwidth]{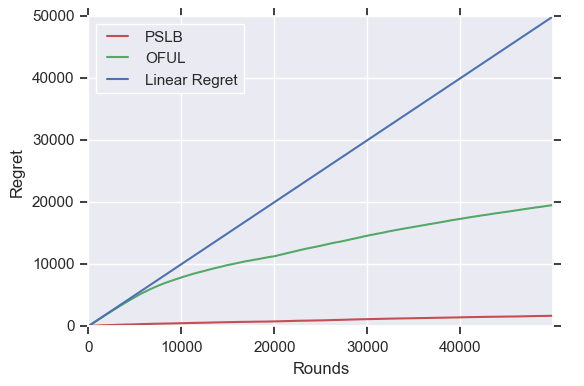}}
\centering
\subfloat[][ImageNet Regret \\ Comparison for $m\medop{=}4$]{\label{fig:regretimagenetd100m4}\includegraphics[width=.3333\textwidth]{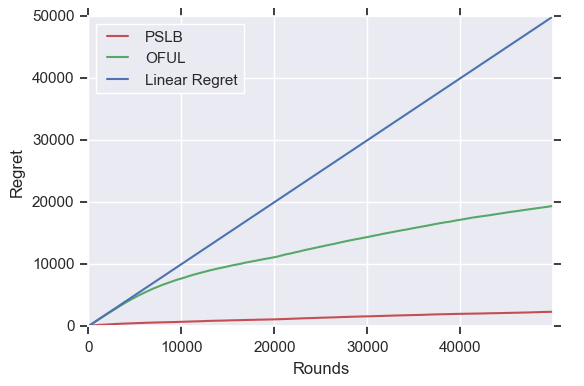}}
\vspace{-.5\baselineskip}
\centering
\subfloat[][ImageNet Regret \\ Comparison for $m\medop{=}8$]{\label{fig:regretimagenetd100m8}\includegraphics[width=0.3333\textwidth]{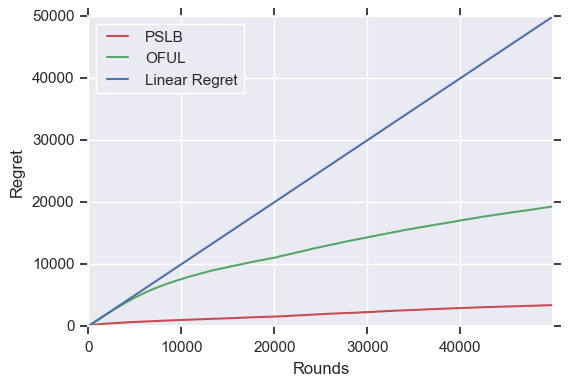}}
\subfloat[][ImageNet Regret \\ Comparison for $m\medop{=}16$]{\label{fig:regretimagenetd100m16}\includegraphics[width=0.3333\textwidth]{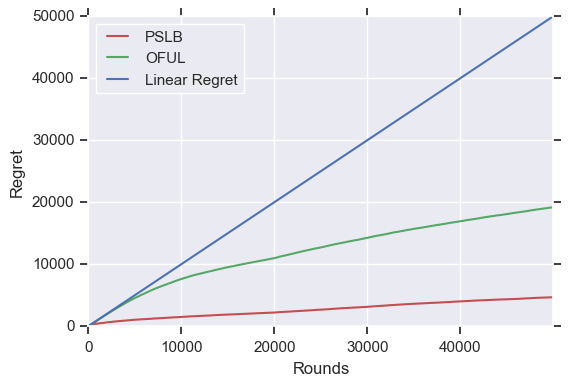}}
\vspace{-.5\baselineskip}
\centering
\subfloat[][ImageNet Model Accuracy \\ Comparison for $m\medop{=}1$]{\label{fig:classimagenetd100m1}\includegraphics[width=0.3333\textwidth]{experiments/imagenet_classification_d_100_m_1.png}}
\centering
\subfloat[][ImageNet Model Accuracy \\ Comparison for $m\medop{=}2$]{\label{fig:classimagenetd100m2}\includegraphics[width=0.3333\textwidth]{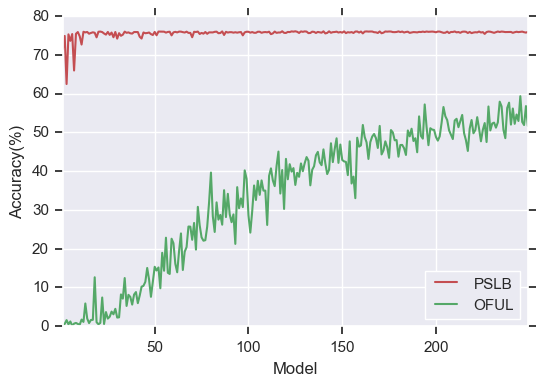}}
\centering
\subfloat[][ImageNet Model Accuracy \\ Comparison for $m\medop{=}4$]{\label{fig:classimagenetd100m4}\includegraphics[width=0.3333\textwidth]{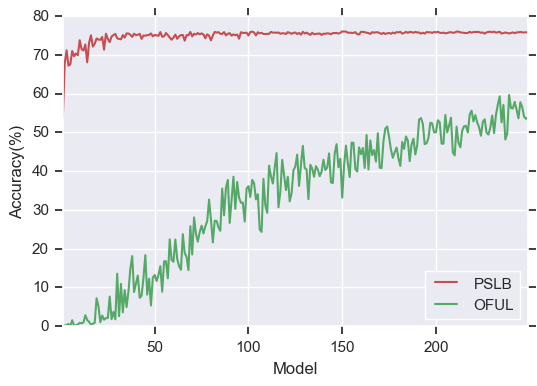}}
\vspace{-.5\baselineskip}
\centering
\subfloat[][ImageNet Model Accuracy \\ Comparison for $m\medop{=}8$]{\label{fig:classimagenetd100m8}\includegraphics[width=0.3333\textwidth]{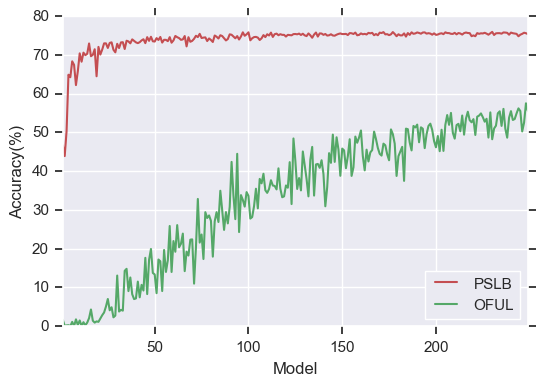}}
\subfloat[][ImageNet Model Accuracy \\ Comparison for $m\medop{=}16$]{\label{fig:classimagenetd100m16}\includegraphics[width=0.3333\textwidth]{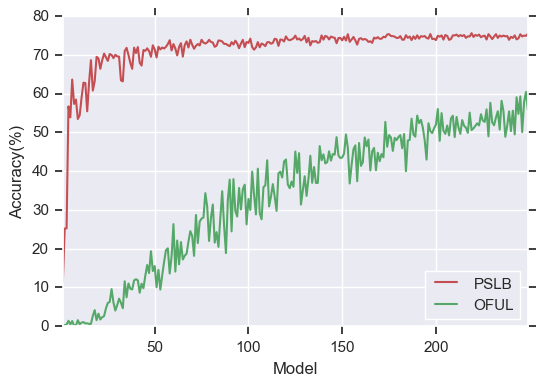}}
\vspace{-0.5\baselineskip}
\captionsetup{justification=justified }
\caption{Regret and Optimistic Model Accuracy Comparisons of \alg and \OFUL On ImageNet with $d\medop{=}100$}
\label{fig:imagenet100Class}
\end{figure}
\vspace{-1.3\baselineskip}
Figure~\ref{fig:imagenet100Class} provides the regret and the accuracy of optimistically chosen parameters of \alg and \OFUL in \SLB constructed from ImageNet with $d\medop{=}100$. Since there are 1000 different classes in the dataset, \SLB framework synthesized from ImageNet dataset has 1000 actions in each decision set. Therefore, even if $d{=}100$ is not a fairly high dimensional feature space, having 1000 actions makes the learning task harder. Thus, \SLB algorithms are expected to have higher regrets and slower convergence to underlying model. However, large number of actions is key to having lower regret for \alg. Instead of ignoring actions that are not chosen at the current round, \alg uses them to get an idea about the structure of the action vectors. This setting clearly points out the advantage of \alg over \OFUL. While \OFUL obtains linear regret in the beginning and struggles to construct a meaningful confidence set, \alg uses hidden information in the massive number of action vectors and reduce the dimensionality of the \SLB framework. Then it exploits this information and converges to the accurate model without committing many mistakes.

\end{document}